\documentclass{article}
\let\articleaddcontentsline\addcontentsline
\usepackage{iclr2027_conference,times}
\let\addcontentsline\articleaddcontentsline
\usepackage{amsmath,amssymb,amsthm}
\usepackage{graphicx,booktabs,tabularx,array}
\usepackage{longtable}
\usepackage{enumitem,subcaption,xcolor}
\usepackage{microtype}
\usepackage{float}
\usepackage{xurl}
\usepackage{wrapfig}
\usepackage{hyperref}
\usepackage{verbatim}
\hypersetup{colorlinks=true,citecolor=blue,linkcolor=black,urlcolor=blue,linktoc=all}
\usepackage{etoc}
\usepackage{tikz}
\usetikzlibrary{arrows.meta}
\graphicspath{{./}{figs/main/}{figs/}{followup_figures/}%
  {../../followup_figures/}{../../figs/main/}{../../figs/}}
\usepackage{wrapfig}
\usepackage{needspace}

\newcommand{\E}{\mathbb{E}}

\DeclareMathOperator{\Var}{Var}
\DeclareMathOperator*{\argmin}{arg\,min}
\newcommand{\cH}{\mathcal{H}}
\newcommand{\cZ}{\mathcal{Z}}

\newcommand{\PRCE}{\ensuremath{\mathrm{PRCE}}}
\newcommand{\SKCE}{\ensuremath{\mathrm{SKCE}}}
\newcommand{\SCF}{\ensuremath{\mathrm{SCF}}}

\newcommand{\mstar}{m^{\star}}

\newcommand{\tauhat}{\hat{\tau}_n}

\newcommand{\iprod}[1]{\langle #1 \rangle_{\cH_k}}

\theoremstyle{plain}
\newtheorem{theorem}{Theorem}[section]
\newtheorem{lemma}[theorem]{Lemma}
\newtheorem{corollary}[theorem]{Corollary}
\newtheorem{proposition}[theorem]{Proposition}
\theoremstyle{definition}

\theoremstyle{remark}
\newtheorem{remark}[theorem]{Remark}

\newif\ifinternalreview
\internalreviewfalse

\newcommand{\revm}[1]{}

\usepackage{iclr2027_conference}

\iclrfinalcopy
\title{Are You Sure Your LLM Is Sure? \\ Repairing and Auditing Confidence}
\author{Arijit Dey$^{1, 2}$\textsuperscript{\textsection} \quad Bhartendu Kumar$^{2}$\textsuperscript{\textsection} \quad Ramya Hebbalaguppe$^{2,3}$\textsuperscript{\textsection} \quad Khuushi Maheshwari$^{2}$\\
$^{1}$ MBZUAI \\
$^{2}$ TCS Research Labs, India \\
$^{3}$ IIT Delhi
}
\begin{document}
\etocdepthtag.toc{main}
\maketitle

\lhead{Preprint -- October 2026}

\begin{abstract}
Confidence scores determine whether an LLM answer is accepted, reviewed, or
regenerated, yet scalar recalibration can only correct errors that are visible
through the score itself. We introduce \textbf{RADAR}
(\textbf{R}epair, \textbf{A}dopt, and \textbf{D}iagnose \textbf{A}fter
\textbf{R}ecalibration), a reliability framework that separates confidence
repair from deciding whether the repair should be deployed and auditing the
errors that remain. Under squared loss, we decompose prediction risk into
outcome noise, information hidden by the score, and mapping error; restricting
repair to order-preserving maps additionally introduces a pooling gap.
For \emph{repair}, we propose Probe-Anchored Isotonic Calibration (PAIC),
which learns a monotone map from correctness-probe predictions on unlabeled
target prompts. We show that PAIC's excess risk depends on the probe's
score-conditional bias and fitting error, rather than on its overall prediction
accuracy. For \emph{adoption}, we derive a paired procedure with finite-sample
control of false improvements. For \emph{auditing}, we develop a
prompt-clustered kernel test that detects feature-aligned residual errors while
accounting for dependent answers from the same prompt. Across multiple LLM
checkpoints, benchmarks, and distribution shifts, we find that richer
predictors can improve confidence estimation, but projecting their information
back to a scalar score can lose these gains, and lower prediction risk does not
necessarily yield better review decisions. Across 47,400 replayed raw-score allocations, paired betting adopts candidates with lower held-out test risk in 42.95\% of allocations, compared with 15.09\% for one-sided empirical Bernstein. Together, \textbf{RADAR} provides a principled workflow to repair what a scalar confidence score can express, adopt a repair only when supported by evidence, and diagnose the systematic errors that recalibration leaves behind. Code and data will be released upon acceptance.
\end{abstract}

\section{Introduction}

\paragraph{A confidence score can be calibrated and still fail predictably.}
LLM confidence often determines whether an answer is accepted, reviewed, or
regenerated. A standard remedy is \emph{scalar recalibration}, which learns a
transformation of confidence using methods such as temperature scaling, Platt
scaling, or isotonic regression
\citep{zadrozny2002,guo2017calibration}. Yet scalar recalibration has two
fundamental limitations: it typically requires labeled data from the deployment
population, which may be unavailable under distribution shift, and it can
change score values but cannot recover information absent from the score.

For example, suppose every answer receives confidence $0.8$, $80\%$ are
correct, and the remaining $20\%$ systematically confuse minutes with hours.
Expected calibration error is zero
\citep{naeini2015obtaining,guo2017calibration}, yet the failure remains
perfectly predictable. Even correcting an original confidence of $0.9$ to
$0.8$ reduces squared loss without revealing this subgroup. Thus,
\emph{correcting a score and recovering distinctions hidden by that score are
different problems}.





\paragraph{RADAR: Repair, Adopt, and Diagnose After Recalibration.}
We formalize this distinction through \textbf{RADAR}
(\textbf{R}epair, \textbf{A}dopt, and \textbf{D}iagnose
\textbf{A}fter \textbf{R}ecalibration), illustrated in
Figure~\ref{fig:pipeline}. Under squared loss, the risk of a scalar-repaired
score decomposes into outcome noise, information hidden by the score, and
mapping error; only the last term can be corrected by scalar recalibration.
Restricting the repair to preserve score ordering can introduce an additional
pooling gap. This decomposition motivates three distinct questions:
\emph{how should a score be repaired without target labels, when should the
repair be adopted, and what systematic errors remain afterward?}


\begin{figure}[t]
\centering
\resizebox{0.9\textwidth}{!}{
\begin{tikzpicture}[
  card/.style={draw=blue!45!black,fill=blue!3,rounded corners=2pt,
    line width=.55pt,text width=4.05cm,minimum height=2.55cm,
    inner sep=5pt,align=left,font=\footnotesize,anchor=north},
  flow/.style={-{Latex[length=1.6mm]},line width=.65pt,blue!55!black}]
\node[card] (fit) at (-0.63,0) {
  \textbf{1. Repair the score}\\[2pt]
  Fit a monotone map of $P$ to probe predictions on unlabeled target
  prompts (PAIC, \S\ref{sec:paic}).\\[2pt]
  {\color{black!65}\emph{Matters:} the probe's bias at each score.\\[3.3mm]}};
\node[card, line width=0.8pt] (compare) at (4.35,0) {
  \textbf{2. Adopt}\\[2pt]
  Compare paired losses on labeled prompts; adopt only a certified
  improvement (\S\ref{sec:prce}).\\[2pt]
  {\color{black!65}\emph{Evidence}: paired labeled losses; optional probe assistance.}};
\node[card] (audit) at (9.2,0) {
  \textbf{3. Diagnose after Recalibration}\\[2pt]
  Test feature-aligned mean residuals, with prompts as sampling units
  (\S\ref{sec:skce}).\\[2pt]
  {\color{black!65}\emph{Targets:} residual error visible to the chosen features}};
\draw[flow] (fit.east) -- (compare.west);
\draw[flow] (compare.east) -- (audit.west);
\node[font=\footnotesize,align=center,text=black!75,anchor=north]
  at (4.6,-3.00) {Shared protocol: one outcome $Y$ $\;\cdot\;$ equal-prompt
  population $\;\cdot\;$ prompt-disjoint roles};
\end{tikzpicture}
}
\caption{RADAR uses separate prompt sets to fit a confidence repair, compare candidate risks, and test residual errors. All stages use the same outcome definition and weight prompts equally. PAIC fits the repair from probe predictions; comparison and auditing use held-out labels.}
\label{fig:pipeline}
\label{fig:teaser}
\end{figure}

\paragraph{Our approach.}
For \textbf{Repair}, we introduce \emph{Probe-Anchored Isotonic Calibration
(PAIC)}, which distills an independently trained correctness probe into a
monotone function of the original confidence using unlabeled target prompts. PAIC requires no target labels to fit, but held-out target labels test whether the fitted repair lowers target risk.
We show that PAIC's excess risk is controlled by the probe's
\emph{score-conditional bias} and its fitting error, rather than by overall
probe accuracy. 

For \textbf{Adopt}, we show that agreement with an imperfect probe can
misorder candidate confidence scores. We derive a paired risk comparison using
held-out labels and a betting-based procedure that controls the probability of
adopting a candidate that does not lower squared risk.

For \textbf{Diagnose}, we develop a feature-aware residual audit. Because
multiple answers generated for the same prompt may be dependent, the audit
uses cross-prompt kernel products and a prompt-level multiplier procedure to
test systematic residual structure without treating sibling answers as
independent.

\paragraph{Contributions.}
Our contributions are:
\begin{enumerate}[leftmargin=*,itemsep=1pt,topsep=2pt]
    \item \textbf{A unified framework for confidence repair, adoption, and residual diagnosis}: RADAR specifies separate data requirements and statistical guarantees for fitting a confidence repair, comparing candidate risks, and auditing residual errors. PAIC implements its repair stage using probe predictions on unlabeled target prompts.

    \item \textbf{Risk guarantees for repair and adoption.}
    PAIC performs target-label-free monotone repair with excess risk bounded by
    $(\|b_n\|_2+\sqrt{u_n})^2$
    (Theorem~\ref{thm:paic-main}), while our paired adoption rule controls the
    probability of deploying a non-improving candidate
    (Proposition~\ref{prop:betting-main}).

    \item \textbf{Residual diagnosis beyond scalar calibration.}
    We introduce a prompt-clustered kernel audit that detects
    feature-aligned residual errors while remaining valid under dependent
    answers from the same prompt
    (Theorem~\ref{thm:audit-main}).
\end{enumerate}
 Our experiments show that scalar recalibration can improve risk under some shifts, lose information relative to richer predictors in others, and fail to improve downstream decision utility even when prediction risk decreases. RADAR is designed to distinguish these regimes rather than assume that improved calibration alone is sufficient for deployment.

More broadly, this work aims to make AI systems more trustworthy not by making them answer more questions correctly, but by repairing their confidence when we can—and identifying when that confidence still should not be trusted.

\label{introend}

\section{Related Work}
\label{sec:related}

\paragraph{Grouping loss and monotone repair.}
Forecast decompositions separate reliability from resolution
\citep{degroot1983,brocker2009}. Grouping loss measures predictive information
hidden by a score, and non-injective recalibration can increase it
\citep{perezlebel2023grouping}; our $I$ is grouping loss under squared loss,
and isotonic calibration \citep{zadrozny2002,dimitriadis2021reliability} is our
monotone benchmark. Multiaccuracy, multicalibration, and semantic grouping correct
with features \citep{kim2018multiaccuracy,hebertjohnson2018multicalibration,yang2023grouping}.
For LLMs, \citet{chen2024reconfidencing} fit label-based isotonic maps within
learned groups, so serving needs group features, and
\citet{melo2026epistemic} estimate grouping loss for local audits and
deferral. PAIC instead serves a single map of the score
(Appendix~\ref{app:repair-context}).

\paragraph{Unlabeled adaptation and distillation.}
Under covariate shift, importance weighting reweights labeled source data
toward the target \citep{shimodaira2000improving}, including for calibration
\citep{park2020calibrated,wang2020transferable}; label-shift methods
re-estimate class priors \citep{lipton2018detecting,alexandari2020maximum}; and
ATC and DoC predict target accuracy from unlabeled data
\citep{garg2022leveraging,guillory2021predicting}. PAIC takes the complementary
outcome-model route: instead of a density ratio over high-dimensional features,
it transfers a probe's predictions on unlabeled target prompts into the map. It
is distillation with a student restricted to the score
\citep{hinton2015distilling}, akin to learning with privileged information
\citep{lopezpaz2016unifying}; our contribution is a guarantee identifying which
teacher errors matter.

\paragraph{Evidence for adopting a change.}
Prediction-powered inference (PPI) corrects model-based estimates with labeled
residuals \citep{angelopoulos2023ppi,angelopoulos2024ppiplusplus}; we apply it
to paired squared-loss contrasts between frozen candidates. Our adoption rule
tests bounded means by betting \citep{waudbysmith2024betting}, and
empirical-Bernstein bounds \citep{maurer2009empirical} give simultaneous
intervals (Appendix~\ref{app:corrected-comparisons}).

\paragraph{Kernel calibration tests.}
MMCE and the squared kernel calibration error (SKCE) embed residuals in a
reproducing kernel Hilbert space (RKHS)
\citep{kumar2018trainable,widmann2019calibration,marx2023distribution}. Their
estimators treat observations as independent; we handle answers clustered by
prompt, with multipliers in the spirit of wild-bootstrap kernel tests
\citep{chwialkowski2014wild}.

\paragraph{LLM confidence.}
Confidence can come from token likelihoods \citep{jiang2021know},
self-evaluation such as P(True) \citep{kadavath2022mostly}, verbalized
statements \citep{lin2022teaching,xiong2024uncertainty}, agreement across
samples \citep{wang2023selfconsistency,kuhn2023semantic,farquhar2024entropy},
or probes of internal states \citep{orgad2025intrinsic}. Our methods accept any
bounded scalar score; we evaluate likelihood-based and fixed-choice scores.
\citet{nakkiran2026concepts} relate semantic calibration to local loss
optimality; we instead repair and audit scores for fixed realized answers. 

See Appendix~\ref{app:related_detail} for a detailed and extended related work.
\label{relatedend}

\section{Repairing, Adopting, and Auditing a Confidence Score}
\label{sec:method}

\paragraph{Setup.}
Let $Z=\psi(X,A)$ collect the inputs available for answer $A$ to prompt $X$,
and let $Y\in[0,1]$ be binary correctness or a grade. A base score
$P=s_0(Z)$, a correctness probe $g(Z)$, and candidate scores $s(Z)$ all lie in
$[0,1]$. We measure squared risk, $L(s)=\E[(Y-s(Z))^2]$: the Brier score for
binary labels, and mean squared error targeting the expected grade for graded
labels. It is minimized by $\mstar(Z)=\E[Y\mid Z]$, and
\begin{equation}
\E[Y-s(Z)\mid Z]=0\ \Longleftrightarrow\ s(Z)=\mstar(Z)\quad\text{a.s.}
\label{eq:target}
\end{equation}
Scalar calibration, $\E[Y\mid s(Z)]=s(Z)$, checks averages only at each
reported value and is weaker. Several answers are sampled per prompt and may
be dependent, so expectations use the law $\mathbb Q$ that draws a prompt and
then one of its answers uniformly, and $\|\cdot\|_2$ is the corresponding
root-mean-square norm. The guarantees require sibling-preserving splits and
fitted components frozen independently of the comparison or audit prompts
(Appendix~\ref{app:assumptions}).

\paragraph{When must a correction use only the score?}
We study deployments in which the original score $P$ is available for each answer, while richer probe inputs or computation are available during offline adaptation. PAIC serves a fitted map of $P$ and preserves its ordering up to ties. When probe inputs are available at serving time, we include the direct probe in the candidate set (Table~\ref{tab:repair-inputs}).
A scalar map is the right object when deployment fixes the score's interface
while richer signals are available offline. \emph{Interface constraints:} a
threshold, triage queue, or API field consumes one score, and an
order-preserving map corrects its values while keeping its validated ranking,
up to ties. \emph{Fitting-time signals:} an expensive teacher, such as a larger
verifier, agreement among sampled answers, or execution checks, can score an
unlabeled calibration set once but is too costly for every served answer.
\emph{Access limits:} a hosted model may expose token log-probabilities but not
hidden states. When probe inputs are available at serving time, include the direct probe among the candidate scores. Declare supervision and serving inputs before comparison (Table~\ref{tab:repair-inputs}). Section~\ref{sec:emp:shift} distinguishes same-model probes from a verifier needing an extra pass per answer.

\begin{table}[t]
\centering
\footnotesize
\begin{tabularx}{\linewidth}{@{}>{\raggedright\arraybackslash}p{.31\linewidth}>{\raggedright\arraybackslash}X>{\raggedright\arraybackslash}p{.19\linewidth}@{}}
\toprule
\textbf{Correction} & \textbf{Supervision} & \textbf{Serving inputs} \\
\midrule
Source scalar maps & Source labels fit the map & $P$ \\
Importance-weighted isotonic & Source labels, reweighted toward unlabeled target prompts & $P$ \\
Source-distilled isotonic & $g$ from source labels; isotonic fit to $g$ on source prompts & $P$ \\
Target-probe Platt, beta & $g$ from source labels; parametric fit to $g$ on target prompts & $P$ \\
PAIC & $g$ from source labels; isotonic fit to $g$ on target prompts & $P$ \\
Target isotonic (yardstick) & Target labels fit the map & $P$ \\
Group maps \citep{chen2024reconfidencing} & Source labels fit groups and maps & $P$, group features \\
Direct probe & Source labels train $g$ & Probe features \\
\bottomrule
\end{tabularx}
\caption{Supervision and serving inputs in the raw-score shift experiments,
with the base scorer fixed. Source scalar maps are label-isotonic, positive Platt, monotone
beta, and a flexible nonmonotone regressor; the four rows from importance
weighting to PAIC share 500 unlabeled target and 500 unlabeled source prompts,
and target isotonic is a yardstick, not a competitor
(Appendix~\ref{app:repair-context}).}
\label{tab:repair-inputs}
\end{table}

\subsection{What a scalar map can change}
\label{sec:repair-limits}

For a recalibration $\tau:[0,1]\to[0,1]$, the best unrestricted target is
$q(P)=\E[Y\mid P]$, the expected correctness at each score. Let
$N=\E[\Var(Y\mid Z)]$ be outcome noise and $I=\E[\Var(\mstar(Z)\mid P)]$ the
variation in expected correctness hidden by the score.
Theorem~\ref{thm:decomp} gives
\begin{equation}
L(\tau\circ s_0)=N+I+\E[(\tau(P)-q(P))^2].
\label{eq:decomposition}
\end{equation}
Only the last term changes when we remap $P$. In the introductory population,
remapping $0.9$ to $0.8$ lowers risk from $0.17$ to $0.16$; if $Z$ identifies
the two families, the remaining $0.16$ is $I$ and $N=0$. The term $I$ is
grouping loss under squared loss \citep{perezlebel2023grouping}: risk below
$N+I$ requires inputs beyond $P$.

Preserving score order restricts $\tau$ to the class $\mathcal M$ of
nondecreasing maps $[0,1]\to[0,1]$. A strictly increasing map leaves every
ranking-based measure unchanged, including AUROC and the risk--coverage curve:
it moves values and the thresholds applied to them, not the order of answers.
Where $q$ decreases, a monotone map must pool distinct scores, which can create
ties. Let $\tau_{\mathrm{iso}}$ be the map in $\mathcal M$
closest to $q$ in $\|\cdot\|_2$, which weights scores by their frequency, and
let $G=\|q-\tau_{\mathrm{iso}}\|_2^2$ be the resulting pooling gap
(Appendix~\ref{app:projection-guide}).

\begin{theorem}[Monotone repair and pooling]
\label{thm:pooling-main}
Under the common bounded-information setup, $T=\tau_{\mathrm{iso}}(P)$
minimizes squared risk over $\mathcal M$ and satisfies
\[
L(T)=N+I+G,\qquad \E[Y\mid T]=T,\qquad
\E[\Var(\mstar\mid T)]=I+G.
\]
The maximum monotone gain over $P$ is $\E[(P-q(P))^2]-G$.
\end{theorem}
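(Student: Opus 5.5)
The plan is to read everything off the decomposition \eqref{eq:decomposition} (Theorem~\ref{thm:decomp}) together with the variational characterization of the $L^2$ projection onto the closed convex cone $\mathcal M$. First, for any $\tau\in\mathcal M$, \eqref{eq:decomposition} gives $L(\tau\circ s_0)=N+I+\|\tau-q\|_2^2$, where the norm is taken under the law of $P$ induced by $\mathbb Q$. Since $N$ and $I$ do not depend on $\tau$, minimizing risk over $\mathcal M$ is the same as minimizing $\|\tau-q\|_2$. The class $\mathcal M$ is convex and closed in $L^2(P)$: a.e.\ limits of monotone maps can be taken monotone after modification on a null set, using Helly selection if needed. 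So the minimizer $\tau_{\mathrm{iso}}$ exists, it is unique $P$-a.s., and $L(T)=N+I+G$ follows at once. The gain statement is also immediate: taking $\tau=\mathrm{id}$ gives $L(P)=N+I+\E[(P-q(P))^2]$, and subtracting $L(T)$ yields $\E[(P-q(P))^2]-G$.

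For calibration, $\E[Y\mid T]=T$, I would use the level-set property of isotonic projections. For any bounded measurable $h$, the map $\tau_{\mathrm{iso}}+\epsilon\, h\circ\tau_{\mathrm{iso}}$ stays in $\mathcal M$ for small $|\epsilon|$ of either sign. This holds because composing with a function of $\tau_{\mathrm{iso}}$ is constant on its level sets, so monotonicity is preserved after a small perturbation; the cleanest version takes $h$ nondecreasing and Lipschitz, or notes that $(1+\epsilon)\tau_{\mathrm{iso}}$ and shifts are admissible. The first-order condition then gives $\E[(q(P)-T)h(T)]=0$ for a determining class of $h$, hence $\E[q(P)\mid T]=T$. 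Because $T$ is a function of $P$, the tower property gives $\E[Y\mid T]=\E[q(P)\mid T]=T$.

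The main obstacle is the rigorous justification of this step. A perturbation by a general $h(T)$ can break monotonicity across level sets, so I would argue directly instead. On each level set $\{T=t\}$, one can replace $\tau_{\mathrm{iso}}$ by the conditional mean of $q$ on that set. Doing so keeps monotonicity when the set is a flat interval of $\tau_{\mathrm{iso}}$, and it weakly decreases the distance to $q$. Uniqueness of the projection then forces the two maps to be equal. Clamping to $[0,1]$ is harmless here, since $q\in[0,1]$.

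For the pooled-information identity, I would apply the law of total variance conditional on $T$, which is coarser than $P$:
\[
\E[\Var(\mstar\mid T)]=\E[\Var(\mstar\mid P)]+\E[\Var(q(P)\mid T)]=I+\E[(q(P)-T)^2]=I+G.
\]
The first equality uses $\sigma(T)\subseteq\sigma(P)$ and $\E[\mstar\mid P]=q(P)$. The second uses the calibration fact $\E[q(P)\mid T]=T$ just established.
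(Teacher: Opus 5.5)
Your decomposition step, the existence and uniqueness of the projection, the risk formula, the gain (using the identity map as a competitor) and the final total-variance step all match the paper's proof. The gap is in the calibration identity $\E[Y\mid T]=T$, specifically in the ``direct'' argument you fall back on.

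Replacing $\tau_{\mathrm{iso}}$ on every level set by the conditional mean of $q$ yields $\phi\circ\tau_{\mathrm{iso}}$ with $\phi(t)=\E[q(P)\mid T=t]$. This map is indeed weakly closer to $q$. But you give no reason it is nondecreasing, and flatness of the level sets is not a reason. What matters is whether the new constant on one level set overshoots the values on neighbouring level sets, and for a general nondecreasing map it can. Take $\nu$ uniform on two atoms $p_1<p_2$, with $\tau(p_1)=0.2$, $\tau(p_2)=0.3$, $q(p_1)=0.9$ and $q(p_2)=0.1$; the replaced values are $0.9$ and $0.1$. Ruling this out for $\tau_{\mathrm{iso}}$ is equivalent to the conclusion itself: by the uniqueness you invoke, $\phi\circ\tau_{\mathrm{iso}}$ has a nondecreasing version if and only if $\phi(T)=T$ almost surely. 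So the argument is circular.

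The perturbation route you set aside is the one that works, and it is the paper's route. However, you misdiagnosed its obstacle. Arbitrary bounded measurable $h$ are indeed not allowed. Yet for any Lipschitz $h$ with constant $L_h$, the map $v\mapsto v+\epsilon h(v)$ is nondecreasing whenever $|\epsilon|L_h<1$, for either sign of $\epsilon$. So monotonicity across level sets is not the problem.

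The real problem is the range $[0,1]$. Shifts and $(1+\epsilon)\tau_{\mathrm{iso}}$ are feasible in only one direction when $T$ has mass at $0$ or $1$, so the first-order condition yields only inequalities there. The paper handles this in two steps:
\begin{itemize}
\item It restricts to Lipschitz $h$ compactly supported in $(0,1)$. This gives $\E[(q(P)-T)h(T)]=0$, hence calibration on $\{0<T<1\}$.
\item It treats the endpoints with the competitors $\max(T,a)$ and $\min(T,1-a)$. Dividing their variational inequalities by $a$ and letting $a\downarrow0$ gives $\E[q(P)\mathbf 1\{T=0\}]\le0$ and $\E[(1-q(P))\mathbf 1\{T=1\}]\le0$. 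Both integrands are nonnegative, so both expectations vanish.
\end{itemize}

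Your clamping remark gives a shorter alternative if you use it here rather than in the level-set argument. Clamping a nondecreasing real-valued map to $[0,1]$ keeps it nondecreasing. Because $q\in[0,1]$, clamping also moves it pointwise closer to $q$. Hence $\tau_{\mathrm{iso}}$ also minimizes $\|\tau-q\|_2$ over all bounded nondecreasing real-valued maps. Every bounded Lipschitz $h$ on $[0,1]$, constants included, then gives a two-sided perturbation. The resulting identity $\E[(q(P)-T)h(T)]=0$ on this measure-determining class yields $\E[q(P)\mid T]=T$ everywhere, including the atoms at $0$ and $1$.
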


The best order-preserving repair is therefore calibrated yet still loses $I+G$
relative to $\mstar$, a loss that no check of its own calibration can reveal.
Theorem~\ref{thm:monotone} covers arbitrary score laws, including atoms, and
gives a three-state example with $I=0$ and $G=1/24$.

\subsection{PAIC: fitting the map to probe predictions}
\label{sec:paic}

Label-isotonic regression estimates $\tau_{\mathrm{iso}}$ from labeled pairs
$(P,Y)$. PAIC replaces the labels with a probe's predictions, so the map can be
fitted on prompts whose labels are unavailable, including prompts from the
population where the score will be served. Fit a bounded probe $g_n$ on labeled
training prompts. On $n$ calibration prompts that are independent of probe
training, each with $1\le K_j<\infty$ answers, fit
\begin{equation}
\tauhat\in\argmin_{\tau\in\mathcal M}
 \frac1n\sum_{j=1}^n\frac1{K_j}\sum_{i=1}^{K_j}
     (\tau(P_{ji})-g_n(Z_{ji}))^2,
\qquad s_{\rm PAIC}=\tauhat\circ s_0.
\label{eq:paic}
\end{equation}
Each prompt has total weight $1/n$, and serving evaluates only $\tauhat(P)$.
Conditional on probe training, PAIC's population target is the monotone
projection $\tau_{g_n}$ of $q_{g_n}(P)=\E[g_n(Z)\mid P]$. Define the
score-conditional bias $b_n=q_{g_n}-q$ and the population fitting gap
$u_n=R_{g_n}(\tauhat)-R_{g_n}(\tau_{g_n})$, where
$R_g(a)=\E[(g(Z)-a(P))^2]$; $u_n$ is a population loss gap, not the observed
training error.

\Needspace{7\baselineskip}
\begin{theorem}[PAIC under score-conditional probe bias]
\label{thm:paic-main}
With bounded probes fitted independently of the calibration prompts as above,
\[
0\le L(\tauhat\circ s_0)-(N+I+G)
\le(\|b_n\|_2+\sqrt{u_n})^2.
\]
If $\E_{\rm train}\|b_n\|_2^2\to0$ as independent calibration prompts grow,
PAIC converges in risk and in $L^2$ to the optimal monotone repair.
\end{theorem}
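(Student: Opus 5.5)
Conditional on probe training, the plan is to reduce the excess risk to an $L^2$ distance between monotone maps. Then we bound that distance by the triangle inequality through $\tau_{g_n}$. By the decomposition \eqref{eq:decomposition} applied to $\tau=\tauhat$, $L(\tauhat\circ s_0)=N+I+\|\tauhat-q\|_2^2$. Since $\tau_{\mathrm{iso}}$ is the $L^2(P)$-projection of $q$ onto the closed convex cone $\mathcal M$ and $\tauhat\in\mathcal M$, the obtuse-angle (variational) inequality $\langle q-\tau_{\mathrm{iso}},\tau-\tau_{\mathrm{iso}}\rangle\le 0$ for all $\tau\in\mathcal M$ gives
\[
\|\tauhat-q\|_2^2\ \ge\ G+\|\tauhat-\tau_{\mathrm{iso}}\|_2^2 .
\]
This yields the lower bound $0\le L-(N+I+G)$. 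For the upper bound, I would not use this Pythagorean split. Instead I would work directly with $L-(N+I+G)=\|\tauhat-q\|_2^2-\|\tau_{\mathrm{iso}}-q\|_2^2$.

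\textbf{Step 1: the fitting gap as a distance.} Write $R_{g_n}(a)=\E[(g_n-q_{g_n}(P))^2]+\|a-q_{g_n}\|_2^2$ by conditioning on $P$. Then $u_n=\|\tauhat-q_{g_n}\|_2^2-\|\tau_{g_n}-q_{g_n}\|_2^2$. The same projection inequality, now for $\tau_{g_n}$ as the projection of $q_{g_n}$, gives $\|\tauhat-\tau_{g_n}\|_2^2\le u_n$.

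\textbf{Step 2: the bias term.} Projection onto a closed convex set is $1$-Lipschitz in $L^2$, so $\|\tau_{g_n}-\tau_{\mathrm{iso}}\|_2\le\|q_{g_n}-q\|_2=\|b_n\|_2$. Combining Steps 1 and 2, $\|\tauhat-\tau_{\mathrm{iso}}\|_2\le\|b_n\|_2+\sqrt{u_n}$.

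\textbf{Step 3: back to excess risk.} Here lies the main obstacle. Expanding gives
\[
\|\tauhat-q\|_2^2-\|\tau_{\mathrm{iso}}-q\|_2^2=\|\tauhat-\tau_{\mathrm{iso}}\|_2^2+2\langle\tauhat-\tau_{\mathrm{iso}},\tau_{\mathrm{iso}}-q\rangle .
\]
The cross term is $\ge0$ by the variational inequality, so it cannot simply be dropped. To control it I would use the characterization of isotonic projections: $\tau_{\mathrm{iso}}$ is constant on level sets, and $q-\tau_{\mathrm{iso}}$ integrates to zero on each level set of $\tau_{\mathrm{iso}}$. Hence $\langle \tau_{\mathrm{iso}}-q,\phi(\tau_{\mathrm{iso}})\rangle=0$ for any function $\phi$. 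It follows that the cross term equals $\langle\tauhat-\tau_{\mathrm{iso}},\tau_{\mathrm{iso}}-q\rangle$ restricted to within-block variation of $\tauhat$.

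If this does not yield the clean constant, the fallback is to run the whole argument at the level of $q_{g_n}$. Write
\[
\|\tauhat-q\|_2\le\|\tauhat-q_{g_n}\|_2+\|b_n\|_2
\quad\text{and}\quad
\|\tau_{\mathrm{iso}}-q\|_2\ge\|\tau_{g_n}-q_{g_n}\|_2-\|b_n\|_2 .
\]
The second uses the minimality of $\tau_{g_n}$ for $q_{g_n}$ against $\tau_{\mathrm{iso}}$. Squaring and subtracting, combined with $\|\tauhat-q_{g_n}\|_2^2=\|\tau_{g_n}-q_{g_n}\|_2^2+u_n$, gives a bound of the form $(\|b_n\|_2+\sqrt{u_n})^2$ plus cross terms. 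I would then need to absorb those cross terms. I expect this bookkeeping, getting exactly the stated constant rather than one with an extra additive $\|b_n\|_2\cdot\|q-\tau_{\mathrm{iso}}\|$ term, to be the delicate part.

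\textbf{Step 4: convergence.} For the consistency clause, take expectation over probe training and calibration. First, $u_n\to0$ in probability because isotonic least squares with bounded responses and prompt-level weights is consistent: the bounded monotone class has finite bracketing entropy, and each prompt is an i.i.d.\ cluster with weight $1/n$. Bounded convergence then gives $\E u_n\to0$. Together with $\E\|b_n\|_2^2\to0$, the bound forces $\E[L(\tauhat\circ s_0)]\to N+I+G$. Finally, the Pythagorean inequality above upgrades this to $\|\tauhat-\tau_{\mathrm{iso}}\|_2\to0$ in $L^2$.
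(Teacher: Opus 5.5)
Your lower bound and Steps 1--2 are correct. However, they only establish $\|\tauhat-\tau_{\mathrm{iso}}\|_2\le\|b_n\|_2+\sqrt{u_n}$, which bounds the distance to $\tau_{\mathrm{iso}}$, not the excess risk. Step 3 is where the displayed inequality would actually be proved, and you leave it open; neither of your routes closes it.

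A distance bound alone cannot give the result. When $G>0$, the cross term $2\langle\tauhat-\tau_{\mathrm{iso}},\tau_{\mathrm{iso}}-q\rangle$ can be first order in the distance. In the paper's three-state example, $\tau=(0,\tfrac34-\epsilon,\tfrac34+\epsilon)$ is monotone and lies at squared distance $2\epsilon^2/3$ from $\tau_{\mathrm{iso}}$, yet its excess risk is $\epsilon/3+2\epsilon^2/3$. That perturbation varies only within the pooled block, so your level-set refinement does not remove the linear term. From the distance bound you can reach at best $(\|b_n\|_2+\sqrt{u_n})^2+2(\|b_n\|_2+\sqrt{u_n})\sqrt G$.

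Your fallback fails for a related reason. The triangle inequalities pair $\|b_n\|_2$ with the residual norms $\|\tauhat-q_{g_n}\|_2$ and $\|\tau_{g_n}-q_{g_n}\|_2$, which do not vanish. This leaves $u_n+2\|b_n\|_2(\|\tauhat-q_{g_n}\|_2+\|\tau_{g_n}-q_{g_n}\|_2)$. That bound is $u_n+O(\|b_n\|_2)$, so it suffices for the consistency clause, but it does not give the stated inequality.

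The missing idea is to stay quadratic and let $b_n$ meet only a difference of maps. Conditioning on $P$ gives the exact identity $L(a_1\circ s_0)-L(a_2\circ s_0)=R_{g_n}(a_1)-R_{g_n}(a_2)+2\langle a_1-a_2,b_n\rangle$.
\begin{itemize}
\item Take $a_1=\tauhat$ and $a_2=\tau_{\mathrm{iso}}$, and write $\tauhat-\tau_{\mathrm{iso}}=d+e$ with $d=\tau_{g_n}-\tau_{\mathrm{iso}}$ and $e=\tauhat-\tau_{g_n}$.
\item The probe-risk difference equals $u_n+R_{g_n}(\tau_{g_n})-R_{g_n}(\tau_{\mathrm{iso}})\le u_n-\|d\|_2^2$. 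This uses the variational inequality at $q_{g_n}$ with competitor $\tau_{\mathrm{iso}}$. That is the projection fact you need, not the $1$-Lipschitz bound $\|d\|_2\le\|b_n\|_2$ from your Step 2.
\item Completing the square gives $-\|d\|_2^2+2\langle d,b_n\rangle=\|b_n\|_2^2-\|d-b_n\|_2^2\le\|b_n\|_2^2$.
\item Your Step 1 and Cauchy--Schwarz give $2\langle e,b_n\rangle\le2\sqrt{u_n}\|b_n\|_2$.
\item Summing yields exactly $(\|b_n\|_2+\sqrt{u_n})^2$.
\end{itemize}

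Step 4 is fine in outline. The paper makes it quantitative through $u_n\le2\Delta_n(g_n)$, with $\E\Delta_n\le C_0/\sqrt n$ holding uniformly over fixed bounded anchors. That uniformity is what handles the fact that $g_n$ changes with $n$.
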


\paragraph{Which probe errors matter.}
Only the probe's error averaged at each base score moves PAIC's target. Errors
that cancel within a score level are harmless: in the constant-score example, a
probe predicting $0.8$ for every answer yields the optimal scalar target while
missing both families. A constant offset persists however many calibration
prompts are used. If only the unlabeled calibration inputs and a fixed probe
are available, populations whose correctness at one score differs by
$\pm\sqrt\delta$ can be indistinguishable; every scalar repair then has
worst-case excess risk at least $\delta$.
Overall accuracy therefore does not determine repair quality
(Section~\ref{sec:emp:mechanism}), and full probe consistency is sufficient but
not necessary. Theorem~\ref{thm:paic} gives $\E u_n\le25/\sqrt n$;
localization sharpens this to $\E u_n=O(n^{-2/3})$ even for a non-monotone
probe target, at the classical isotonic rate
\citep{zhang2002risk,chatterjee2015risk}. With squared score-conditional
bias bounded by $\delta$, the order $\delta+n^{-2/3}$ is minimax for
monotone scalar recalibration from exactly $n$ independent
$(P,g(Z))$ observations and no additional target-population information
(Appendix~\ref{app:paic-estimation}). The rates characterize convergence to the optimal monotone repair, which is also the population target of label-isotonic regression. Target-probe bias remains an assumption while finite-sample performance is evaluated in Section~\ref{sec:emp}.

\paragraph{Adapting to a new population.}
Theorem~\ref{thm:paic-main} holds under whichever population supplies the
calibration prompts, so drawing them from the target population repairs the
score there without target labels, with excess risk governed by the probe's
target bias $\E_{\rm tgt}[g_n-Y\mid P]$: under covariate shift, the
source-trained probe's error averaged over target inputs at each score.
Importance-weighted calibration targets the same map but needs a density ratio
over $Z$, where PAIC needs only those errors to average out.

\subsection{Deciding whether to adopt a repair}
\label{sec:prce}

Agreement with a probe cannot settle a comparison: a probe agrees perfectly
with itself, correct or not. For frozen candidates $s_a,s_b$ and probe $g$, let
$D_g(a,b)=\E[(s_a-g)^2]-\E[(s_b-g)^2]$ be their difference in squared
disagreement with $g$, the probe-relative calibration error (PRCE), and let
$\Delta_{ab}=L(s_a)-L(s_b)$ be their risk difference; negative values favor
$a$. Exactly (Proposition~\ref{prop:pairwise-risk}),
\begin{equation}
\Delta_{ab}=D_g(a,b)+2\E[(s_a-s_b)(g-Y)].
\label{eq:repair-comparison}
\end{equation}
Only probe error aligned with the candidate difference biases the comparison;
for scalar candidates it enters through $b_g(P)=\E[g-Y\mid P]$.

\paragraph{Correcting the comparison with labels.}
Freeze $g$ and $M$ candidates, including $s_0=P$, and draw a labeled pool
$\mathcal L$ and an unlabeled pool $\mathcal U$ from the same cluster law,
independent of each other and of all fitting. With $s_{a,ji}=s_a(Z_{ji})$ and
$g_{ji}=g(Z_{ji})$, define for prompt $j$
\begin{equation}
A_{j,ab}=\frac1{K_j}\sum_i
 \big[(s_{a,ji}-g_{ji})^2-(s_{b,ji}-g_{ji})^2\big],\qquad
C_{j,ab}=\frac2{K_j}\sum_i(s_{a,ji}-s_{b,ji})(g_{ji}-Y_{ji}).
\label{eq:main-contrasts}
\end{equation}
With bars denoting prompt averages over a pool, the prediction-powered
estimate $\widehat\Delta_{ab}=\overline A_{\mathcal U,ab}+\overline
C_{\mathcal L,ab}$ is unbiased for $\Delta_{ab}$ conditional on fitting, for any
bounded probe \citep{angelopoulos2023ppi}; a useful probe reduces the labeled
term's variance, and coincident pools give ordinary paired validation.

\paragraph{Adopting only certified improvements.}
Simultaneous empirical-Bernstein intervals cover every pairwise contrast
(Theorem~\ref{thm:corrected-comparison}). Their range terms exceed the gains
at the budgets in our tested controls (Section~\ref{sec:emp:adoption}). Adoption
needs only a one-sided answer per candidate $h$: does it lower risk relative to
$P$? Let $D_{jh}$ be prompt $j$'s average squared-loss difference between $h$
and $P$ on the labeled pool. Let $\delta_h\in[0,1]$ satisfy
$\delta_h\ge\sup_z|h(z)-s_0(z)|$, certified over the whole input domain;
then $|D_{jh}|\le B_h:=\delta_h(2-\delta_h)$.

\Needspace{8\baselineskip}
\begin{proposition}[Adoption by betting]
\label{prop:betting-main}
Fix $0<\alpha<1$, a nonempty finite set $\mathcal S\subset(0,1)$, and
$H\ge1$ non-original candidates with $B_h>0$, independently of comparison
outcomes. For $n$ labeled prompts drawn i.i.d.\ from the target cluster law
and independently of fitting, let
\[
E_h=\frac1{|\mathcal S|}\sum_{s\in\mathcal S}\ \prod_{j=1}^n
\Big(1-s\,\frac{D_{jh}}{B_h}\Big).
\]
Certify $h$ if $E_h\ge H/\alpha$; adopt the certified candidate with the
smallest mean $\overline D_h$ under a fixed tie order, and otherwise retain
$P$. Conditionally on
fitting, the probability that any candidate with $L(h)\ge L(P)$ is certified
is at most $\alpha$; in particular, a candidate that does not lower risk is
adopted with probability at most $\alpha$.
\end{proposition}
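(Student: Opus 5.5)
The plan is to show that, for each fixed candidate $h$, $E_h$ is an e-value under the null $L(h)\ge L(P)$ (expectation at most one, conditionally on fitting), and then combine Markov's inequality with a union bound over the $H$ candidates. Throughout, work conditionally on the fitting $\sigma$-field. Then $h$, $s_0$, $B_h$, $\mathcal S$ and $H$ are frozen, and the labeled prompts are i.i.d.\ clusters from the target law, so the prompt-level differences $D_{1h},\dots,D_{nh}$ are i.i.d.

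\textbf{Step 1: the factors are positive.} We have $D_{jh}=\frac1{K_j}\sum_i[(Y_{ji}-h_{ji})^2-(Y_{ji}-P_{ji})^2]$ with $Y\in[0,1]$. Writing $h_{ji}=P_{ji}+d$ with $|d|\le\delta_h$,
\[
(Y-h)^2-(Y-P)^2=d\,(2P+d-2Y),
\]
and maximizing this over the range of $Y,P$ and $|d|\le\delta_h$ (the constraint $P,h\in[0,1]$ matters) gives the stated $|D_{jh}|\le\delta_h(2-\delta_h)=B_h$. I take this bound as given in the text and only check it briefly. Hence $D_{jh}/B_h\in[-1,1]$, and every factor $1-sD_{jh}/B_h$ lies in $[0,2]$ because $s\in(0,1)$.

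\textbf{Step 2: the null expectation.} Under the null, the target law $\mathbb Q$ weights each prompt equally and then picks an answer uniformly, so $\E[D_{jh}]=L(h)-L(P)\ge0$. By independence across prompts,
\[
\E\prod_j\Big(1-s\,\frac{D_{jh}}{B_h}\Big)=\prod_j\Big(1-s\,\frac{\E D_{jh}}{B_h}\Big)\le1,
\]
using $1-sx\le 1$ for $x\ge0$ and $s>0$. Averaging over the fixed finite set $\mathcal S$ gives $\E E_h\le1$. It suffices to use the fixed-$n$ product; a supermartingale and Ville's inequality would give the same conclusion and also allow optional stopping.

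\textbf{Step 3: Markov, union bound, and adoption.} Since $E_h\ge0$, Markov's inequality gives $\Pr(E_h\ge H/\alpha)\le\alpha/H$ for each null candidate. A union bound over the at most $H$ null candidates bounds the probability that any null candidate is certified by $\alpha$. The adoption rule picks among certified candidates only, so a non-improving candidate can be adopted only if it is certified, and that event lies inside the union.

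\textbf{Main obstacle.} The delicate points are measurability and independence rather than any computation. Everything that could depend on the comparison outcomes must be fixed beforehand: $\mathcal S$, $H$, $\delta_h$, the tie order, and the use of sibling-preserving splits. This guarantees the i.i.d.\ prompt structure, and it is also why $\delta_h$ must be certified over the whole input domain rather than estimated from the labeled pool. I would make sure the conditioning argument states explicitly that $B_h$ is measurable with respect to the fitting $\sigma$-field.
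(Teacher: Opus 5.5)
Your proposal is correct and follows the paper's proof essentially step for step. Positivity of each factor comes from $|D_{jh}|\le B_h$ and $s<1$. Under $L(h)\ge L(P)$, independence across prompts gives $\E\prod_j(1-s\,D_{jh}/B_h)=\prod_j(1-s\,\E D_{jh}/B_h)\le1$, averaging over $\mathcal S$ gives $\E E_h\le1$, and Markov's inequality at $H/\alpha$ with a union bound over at most $H$ non-improving candidates finishes the argument. Your explicit check that the range bound $B_h=\delta_h(2-\delta_h)$ holds is a harmless addition; the paper takes that bound from the surrounding discussion.
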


Each factor is positive with mean at most one when $L(h)\ge L(P)$, so
independence and Markov's inequality give the bound (Appendix~\ref{app:betting}).
This standard betting construction \citep{waudbysmith2024betting} controls
adoption, not two-sided coverage. The proxy-assisted variant additionally
requires disjoint groups of mutually independent labeled and unlabeled
prompt clusters from the same target law, independent of fitting, with its
certified range and fixed mixture (Appendix~\ref{app:betting}). We use
$\mathcal S=\{0.05,0.1,0.25,0.5,0.9\}$; choosing a procedure or budget after
inspecting a sweep is outside this guarantee.

\subsection{Auditing residual error after repair}
\label{sec:skce}

A repaired score can still leave mean errors along features that it does not
see: the losses $I+G$ in Theorem~\ref{thm:pooling-main}. In the introductory
population, the repaired score's residual $r=Y-s(Z)$ averages zero overall but
$-0.8$ within the conversion family. Let $k$ be a real measurable
positive-definite kernel with separable RKHS $\cH_k$, measurable feature map
$\Phi(z)=k(z,\cdot)$, and $k(z,z)\le\kappa<\infty$. With
independent primed copies,
\begin{equation}
\SKCE_k(s)=\|\E[(Y-s(Z))\Phi(Z)]\|_{\cH_k}^2
          =\E[rr'k(Z,Z')]
\label{eq:skce}
\end{equation}
is the squared largest residual moment over the RKHS unit ball
\citep{widmann2019calibration,marx2023distribution}
(Theorem~\ref{thm:skce-scf}). If $\cH_k$ is dense in $L^2(\mathbb Q_Z)$, zero
identifies $s=\mstar$; a finite feature map tests only its own moments, so the
kernel declares which hidden errors the audit looks for.

\paragraph{Prompts, not answers, are the sampling units.}
Answers sampled for one prompt share its content and often its mistakes, so
their residuals can be correlated even when every conditional mean is correct.
Take $J\ge2$ i.i.d.\ prompt clusters, independent of the fitted score,
kernel and representation, each with $1\le K_j<\infty$ possibly dependent
answers and bounded outcomes. Average
residual features within each prompt, and multiply only across different
prompts:
\begin{equation}
\zeta_j:=\frac1{K_j}\sum_{i=1}^{K_j}r_{ji}\,k(Z_{ji},\cdot),\qquad
\widehat{\SKCE}_{k,c}:=\frac1{J(J-1)}\sum_{j\ne\ell}
                                      \iprod{\zeta_j,\zeta_\ell}.
\label{eq:cluster-skce}
\end{equation}

\begin{theorem}[Prompt-clustered audit]
\label{thm:audit-main}
Under Assumptions~\ref{ass:kernel}--\ref{ass:audit} and the bounded setup
above, $\widehat{\SKCE}_{k,c}$ is unbiased for
$\SKCE_k(s)$ under $\mathbb Q$, for arbitrary within-prompt dependence and
unequal cluster sizes. Suppose $\SKCE_k(s)=0$ and the covariance operator of
$\zeta_j$ is nonzero. Let $x_j=\zeta_j-\bar\zeta$, draw independent
$e_j\sim\mathcal N(0,1)$, and set
$T^*_J=(J-1)^{-1}\sum_{j\ne\ell}e_je_\ell\iprod{x_j,x_\ell}$. Rejecting when
$J\,\widehat{\SKCE}_{k,c}$ exceeds the conditional $(1-\alpha)$ quantile of
$T^*_J$ has pointwise asymptotic size $\alpha$ as $J$ and the number of multiplier draws
grow.
\end{theorem}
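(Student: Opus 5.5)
The plan has two parts: unbiasedness, then asymptotic size of the multiplier test.

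\emph{Unbiasedness.} Define $\mu:=\E_{\mathbb Q}[(Y-s(Z))\Phi(Z)]\in\cH_k$. The first step is to show $\E[\zeta_j]=\mu$ as a Bochner integral. Since $\|\zeta_j\|_{\cH_k}\le\sqrt\kappa$ (residuals lie in $[-1,1]$), $\zeta_j$ is Bochner integrable in the separable space $\cH_k$. Under $\mathbb Q$ a prompt is drawn and then one of its $K_j$ answers uniformly. Conditioning on the cluster therefore gives
\[
\E_{\mathbb Q}[r\Phi(Z)]=\E\Big[\tfrac1{K_j}\sum_i r_{ji}\Phi(Z_{ji})\Big]=\E[\zeta_j].
\]
This identity uses only linearity within the cluster, so it holds for arbitrary within-prompt dependence and random, unequal $K_j$. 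Because the score, kernel and representation are frozen independently of the audit prompts, the clusters are i.i.d.\ given fitting. For $j\ne\ell$, independence and continuity of the inner product give $\E\iprod{\zeta_j,\zeta_\ell}=\iprod{\E\zeta_j,\E\zeta_\ell}=\|\mu\|^2$. By the identity \eqref{eq:skce} (Theorem~\ref{thm:skce-scf}) this equals $\SKCE_k(s)$. Averaging over the $J(J-1)$ ordered pairs then yields unbiasedness of $\widehat{\SKCE}_{k,c}$.

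\emph{Null limit.} Under $\SKCE_k(s)=0$ we have $\mu=0$, so the $\zeta_j$ are i.i.d., centered and bounded in $\cH_k$, with covariance operator $\Sigma$. Write
\[
J\widehat{\SKCE}_{k,c}=\tfrac{J}{J-1}\Big(\big\|J^{-1/2}\textstyle\sum_j\zeta_j\big\|^2-J^{-1}\sum_j\|\zeta_j\|^2\Big).
\]
The Hilbert-space CLT (valid since $\E\|\zeta_j\|^2<\infty$ in separable $\cH_k$) gives $J^{-1/2}\sum_j\zeta_j\Rightarrow\mathcal G\sim N(0,\Sigma)$, and the LLN gives $J^{-1}\sum_j\|\zeta_j\|^2\to\operatorname{tr}\Sigma$. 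By the continuous mapping theorem,
\[
J\widehat{\SKCE}_{k,c}\Rightarrow W:=\|\mathcal G\|^2-\operatorname{tr}\Sigma=\sum_m\lambda_m(\xi_m^2-1),
\]
where $\lambda_m$ are the eigenvalues of $\Sigma$. Since $\Sigma\neq0$, $W$ is nondegenerate. It is a weighted sum of centered chi-squares with a continuous distribution function, which is what justifies converting weak convergence into convergence of quantiles.

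\emph{Multiplier step.} This is the main obstacle. The same decomposition gives
\[
T_J^*=\tfrac{J}{J-1}\Big(\big\|J^{-1/2}\textstyle\sum_j e_jx_j\big\|^2-J^{-1}\sum_j e_j^2\|x_j\|^2\Big).
\]
Conditionally on the data, $J^{-1/2}\sum_j e_jx_j$ is exactly Gaussian in $\cH_k$ with covariance $\hat\Sigma_J=J^{-1}\sum_j x_j\otimes x_j$. Because the $\zeta_j$ are bounded, the LLN in the separable Hilbert space of Hilbert--Schmidt operators gives $\hat\Sigma_J\to\Sigma$ in trace norm almost surely; the centering by $\bar\zeta$ contributes only $\bar\zeta\otimes\bar\zeta\to0$. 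A Gaussian measure depends continuously on its covariance in trace norm, so the conditional law of $\|J^{-1/2}\sum_j e_jx_j\|^2$ converges to that of $\|\mathcal G\|^2$. For the correction term, its conditional mean is $\operatorname{tr}\hat\Sigma_J\to\operatorname{tr}\Sigma$, and its conditional variance $2J^{-2}\sum_j\|x_j\|^4\le 32\kappa^2/J\to0$. Hence, conditionally and almost surely, $T_J^*\Rightarrow W$. Continuity of the distribution function of $W$ makes the conditional $(1-\alpha)$ quantile converge to the $(1-\alpha)$ quantile $w_{1-\alpha}$ of $W$. Replacing it by the Monte Carlo quantile adds an error that vanishes as the number of draws grows, by Glivenko--Cantelli. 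Slutsky's lemma then gives $\Pr(J\widehat{\SKCE}_{k,c}>\hat w)\to\Pr(W>w_{1-\alpha})=\alpha$, which is pointwise asymptotic size $\alpha$. I expect the delicate points to be the trace-norm covariance convergence and the continuity of the distribution function of $W$ at the relevant quantile; the latter should follow because a nonzero Gaussian quadratic form has an atomless law.
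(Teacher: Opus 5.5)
Your proposal is correct and follows essentially the paper's route (Theorems~\ref{thm:ustat} and~\ref{thm:cluster-bootstrap}): unbiasedness from $\E\zeta_j=\mu$ and independence of distinct clusters, the same diagonal-removal identities for $J\widehat{\SKCE}_{k,c}$ and $T^*_J$, a conditionally Gaussian multiplier sum whose covariance converges to $\Sigma$ in trace norm, the $32\kappa^2/J$ variance bound on the diagonal term, and quantile convergence to the weighted chi-squared limit. Two justifications are loose but easily repaired. First, a law of large numbers in the Hilbert--Schmidt space gives only Hilbert--Schmidt convergence; for trace-norm convergence use the strong law in the separable Banach space of trace-class operators (each $\zeta_j\otimes\zeta_j$ has trace norm at most $\kappa$), or combine Hilbert--Schmidt convergence with convergence of the traces of these positive operators, as the paper does via finite projections. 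Second, continuity of the limit CDF alone already gives the size limit, but convergence of the quantiles themselves also needs the strict monotonicity on $(-\operatorname{tr}\Sigma,\infty)$ that the paper establishes.
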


Theorems~\ref{thm:ustat} and~\ref{thm:cluster-bootstrap} give the proofs and
the weighted chi-squared null limit. A statistic over all pairs of answers
instead has mean $\theta+\frac{K-1}{JK-1}(\gamma-\theta)$ with $K$ answers per
prompt, where $\theta=\SKCE_k(s)$ and $\gamma$ is the mean product of two sibling residual
features (Remark~\ref{rem:degeneracy}); this bias has the order of the null
statistic, so answer-level tests can reject calibrated scores whenever siblings
share errors. Non-rejection does not certify a small residual
(Proposition~\ref{prop:kernel-ucb}), and applied to the probe the statistic
bounds the comparison bias in \eqref{eq:repair-comparison}
(Proposition~\ref{prop:kernel-comparison}).
\label{methodend}

\section{Experiments}
\label{sec:emp}
We test whether unlabeled target prompts improve scalar repair, which teacher errors affect the fitted map, and whether the adoption and audit procedures distinguish useful score changes from residual failures. We then evaluate review decisions at fixed costs. Each study specifies its correctness labels, serving inputs, and independent sampling units; Appendix~\ref{app:followup} gives the full protocols.

\subsection{What richer inputs add beyond the score}
\label{sec:emp:breadth}

We compare 13 predictors for Llama 3.2 3B, Qwen 2.5 3B/7B and Mistral 7B
on 3,270 BoolQ validation and 1,172 ARC Challenge test questions; ARC fitting
uses Easy questions. Each model selects one fixed answer per question.
$P$ is its probability normalized over allowed choices. Scalar controls use
$P$; full-score controls retain the choice probabilities and selected-option
identity; rich direct also uses hidden states. Supervised candidates share
fitting labels. Rich-PAIC-$U$ and score-PAIC-$U$ project their respective
teachers on the same unlabeled target inputs, serving only a monotone map
of $P$ (Appendix~\ref{app:benchmark-breadth}).

Serving the rich predictor directly yields lower Brier risk than its monotone projection for all four BoolQ models and for ARC Qwen3B. The differences are $0.0121$-$0.0169$ on BoolQ and $0.0101$ on ARC Qwen3B, with joint intervals excluding zero. Relative to the selected full-score predictor, hidden states improve Brier risk for BoolQ Qwen3B and Mistral7B by $0.00803$ and $0.01131$; the other six comparisons are unresolved. No joint interval establishes an advantage for rich-PAIC-U over the selected scalar control or score-PAIC-U. Figure~\ref{fig:breadth-primary} shows all $32$ prespecified contrasts.


\begin{figure}[t]
\centering
\includegraphics[width=\linewidth,height=7cm]{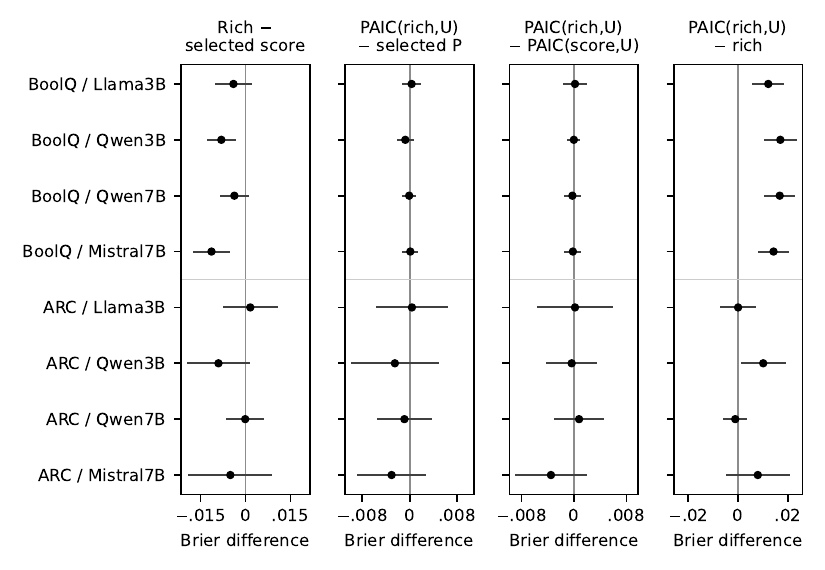}
\caption{Direct and projected confidence predictors on BoolQ and ARC. The fourth column measures the risk change from projecting each rich predictor onto a monotone function of $P$.}
\label{fig:breadth-primary}
\end{figure}

\subsection{Repair quality and transfer}
\label{sec:emp:transfer}
\label{sec:emp:shift}

\paragraph{Where the probe is wrong matters.}
\label{sec:emp:mechanism}
Equal probe MSE produces different scalar repairs. Three probes have the same excess outcome MSE, $0.0064$. With 2,000 calibration prompts, the two probes whose errors cancel within each score level yield mean excess repair risks of $3.90 \times 10^{-7}$ and $3.16 \times 10^{-5}$. A constant $0.08$ offset yields $0.0064$. Increasing the calibration sample removes fitting error but leaves the offset bias, as predicted by Theorem~\ref{thm:paic-main}.

\paragraph{Adapting a raw likelihood score under shift.}
Across difficulty, task, and subject shifts for Llama-3.2-3B and Qwen2.5-3B, tree-based PAIC reduces mean squared risk by $4.9$ - $16.4$\% relative to source label-isotonic calibration (Table~\ref{tab:repair-inputs}; Appendix~\ref{app:followup-shift}). These means use three overlapping refits, with 500 labeled source prompts and 500 unlabeled target prompts. PAIC has lower mean risk than every source-fitted scalar baseline in all six settings. Direct probes achieve lower risk in all six settings when their features are retained at serving time. With source labels and unlabeled source and target prompts matched, PAIC has lower estimated risk in 67 of 72 prespecified contrasts; approximate globally adjusted intervals favor PAIC in 47. All 18 contrasts with source-distilled isotonic favor target PAIC under that adjustment. Fitting on the target inputs therefore accounts for a substantial part of the improvement. Matching the squared-loss objective also narrows the difference from parametric target maps: PAIC's mean advantage over squared-loss beta is 0.000125–0.001368 across the six settings (Appendix~\ref{app:equal-info}).

\paragraph{Adapting a learned score to harder MATH problems.}
Holding the learned scorer and probe fixed, refitting PAIC on 777 unlabeled level-3 MATH prompts lowers Llama's target risk by $0.00633$ relative to source PAIC and by $0.00552$ relative to source label-isotonic; both adjusted intervals exclude zero. The effect varies by subject: Prealgebra risk increases by $0.00456$. For Qwen, the change from source PAIC is unresolved, and target PAIC remains $0.00083$ above source label-isotonic. The two checkpoints show different transfer outcomes under the same adaptation procedure (Appendix~\ref{app:target-adaptation}).

\paragraph{Presentation changes do not ensure repair gains.}
On constructed inventory-tracking problems, each model's answers stay fixed
across entity renaming, reordered facts and irrelevant context. Comparators
reuse 2,400 fitting labels per model across views. On 1,800 evaluation families
per model, target PAIC has higher risk than transformed-view label-isotonic.
Transformed teacher-fitting views narrow the gap, but no primary comparison
establishes a PAIC advantage: four favor the control and two remain unresolved
(Figure~\ref{fig:controlled-presentation}; Appendix~\ref{app:controlled-presentation}).

\begin{wrapfigure}{r}{7.5cm}
\centering
\includegraphics[width=\linewidth]{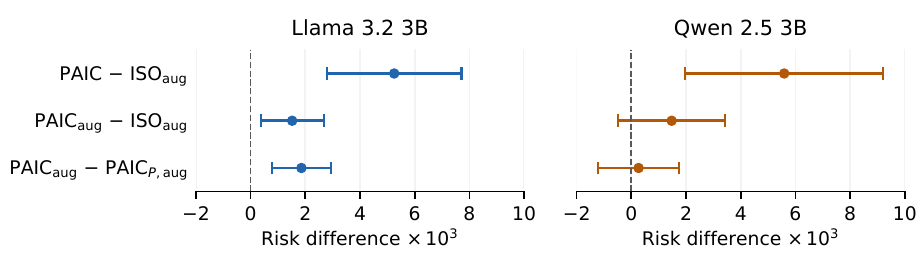}
\caption{\textbf{Controlled changes in presentation.} All six primary risk
contrasts; negative favors the first method. PAIC uses target projection;
$\mathrm{aug}$ fits transformed views with the same unique labels; $P$ marks a
score-only teacher. Approximate stratified finite-population Welch-$t$
intervals allocate $0.05/6$ per contrast, conditional on fixed fits and grades.}
\label{fig:controlled-presentation}
\end{wrapfigure}


\subsection{Adoption requires both an unbiased target and precision}
\label{sec:emp:adoption}
\label{sec:emp:controlled}

Six fixed candidates across 28 constructed settings test residual correction,
with 2,000 repetitions per setting. For a true risk difference $+0.0004$,
an offset probe gives mean proxy estimate $-0.002805$, reversing the preferred
score; labeled correction gives $+0.000397$. Corrected-to-labeled estimator
SD ratios are $0.798$ for a graded oracle control, $1.000$ for a binary blind
probe and $1.304$ for a reversed probe: assistance can improve, leave unchanged
or worsen precision (Appendix~\ref{app:controlled-comparisons}).

The simultaneous finite-sample intervals resolve no improvement in any
repetition, including with 2,000 labeled and 8,000 unlabeled prompts; their
range terms alone exceed the gains in these controls. A prespecified normal
benchmark has joint coverage $0.971$--$0.991$ (labeled) and
$0.9725$--$0.991$ (corrected), with asymptotic justification only.
Figure~\ref{fig:matched-power} isolates the adoption procedure using fixed candidates with known population risks. Under matched hypotheses, candidate families, certified ranges, and error budgets, paired betting reaches the prespecified adoption criterion with half as many comparison prompts in three populations and one quarter as many in the graded heterogeneous-gain population. Oracle-probe assistance does not reduce these crossings. Thus, correcting proxy bias and improving adoption efficiency are separate requirements; additional unlabeled predictions do not automatically reduce the labeling budget. On 47,400 cached raw-score allocations, betting adopts a beneficial candidate 20,356 times versus 7,155 for one-sided empirical Bernstein, with no false adoption estimated on held-out test prompts; most adoptions select a direct feature predictor (Appendix~\ref{app:followup-selection}).
\begin{figure}[t]
\centering
\includegraphics[width=.8\linewidth]{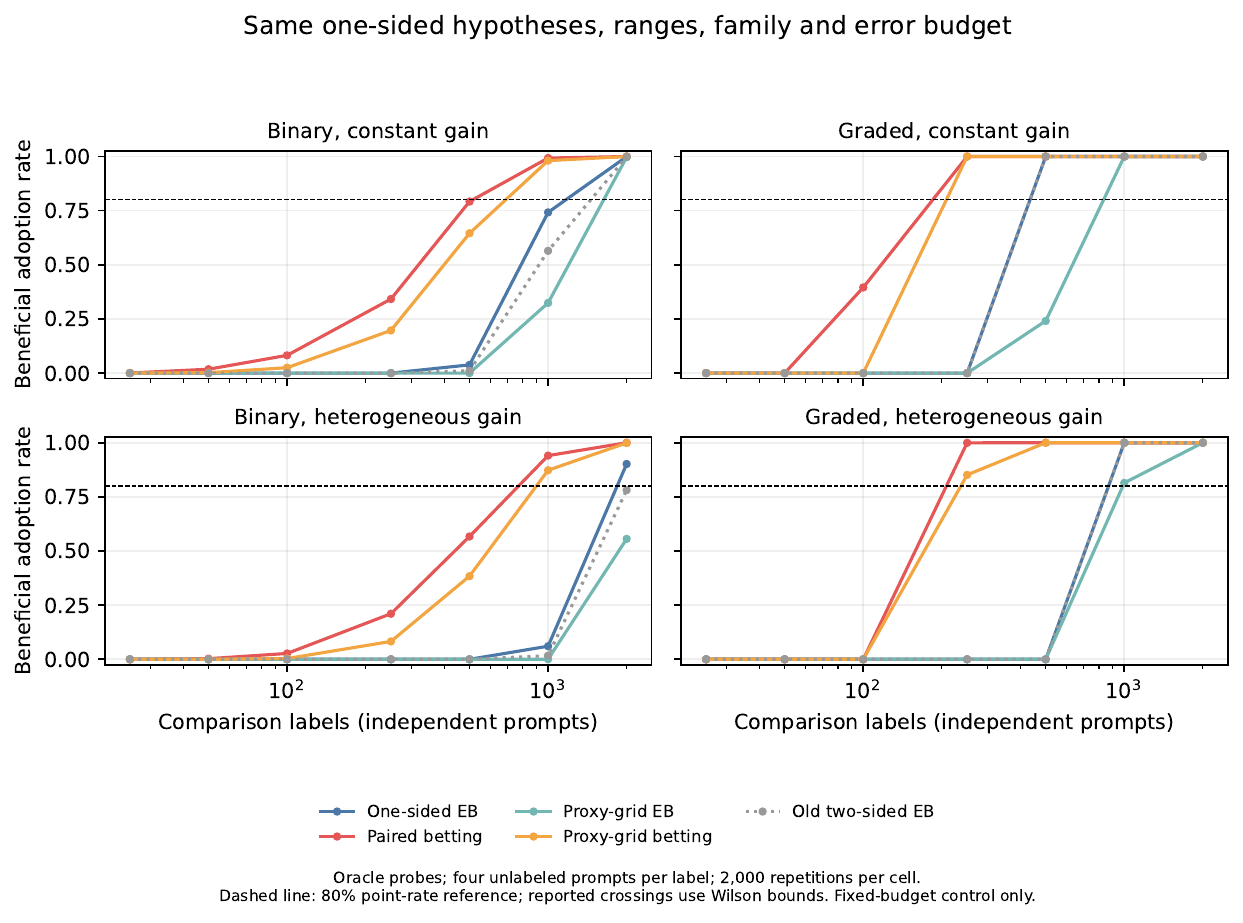}
\caption{\textbf{Adoption under matched one-sided tests.} Beneficial
adoption rate against the number of labeled comparison prompts in the four
known-risk gain populations (oracle probe, four unlabeled prompts per label,
2,000 repetitions per cell).}
\label{fig:matched-power}
\end{figure}
\subsection{Audits detect the mean errors their features expose}
\label{sec:emp:audit}
\label{sec:emp:controlled-audit}
The controlled audit fixes two score levels and two equally likely subgroups.
When subgroup errors cancel at each score, the feature test detects them while
the score-only test remains under its own null. A scalar distortion instead
favors the score-only test (Table~\ref{tab:study2-all}). Across seven calibrated
controls, feature-test rejection is $0.0485$--$0.0590$, with pointwise Wilson
intervals containing $0.05$. The learned auditor rejects 126 of 2,000
graded-null repetitions ($0.0630$, interval $[0.0532,0.0745]$). An
18,000-trial paired follow-up of that cell bounds the Gaussian test's excess
rejection over a sign-flip reference below 0.5 percentage points with exact
residuals (simultaneous 95\% bound); it neither explains the earlier rate nor
establishes nominal finite-sample size (Appendices~\ref{app:controlled-audit}
and~\ref{app:followup-audit}).
\subsection{Prediction loss and decision utility are different outcomes}
\label{sec:emp:decisions}
Perfect review costs $c$ and wrong acceptance one: accept $s\ge1-c$;
otherwise review. At $c=.10$, both adapted MATH maps review
all answers: Qwen's acceptance falls from $73.0\%$ to zero as its maximum
score crosses the threshold from $0.900015$ to $0.899516$. Rich-PAIC-$U$
accepts nothing on ARC Mistral at $c=.05$ and exceeds accept-all loss on
BoolQ Qwen3B/Mistral at $c=.50$. Rich direct has higher decision loss than its selected score control
on ARC Llama at every declared cost. On the archived
learned score (a previously recalibrated supervised score) at full budget,
comparison-fitted PAIC thresholds have higher
mean review loss than directly fitted original-score thresholds in $23/24$
settings with matched downstream labels (descriptive; Appendix~\ref{app:followup-policy}).

\section{Conclusion}
\label{sec:conclusion}
RADAR reframes confidence calibration as more than a score-correction problem. By separating confidence repair, evidence-based adoption, and residual diagnosis, it clarifies what scalar recalibration can improve, what information it cannot recover, and what evidence is needed before a repaired score should be deployed. Across models, tasks, and distribution shifts, our results show that lower prediction risk does not necessarily imply better downstream decisions. Our results argue for treating confidence not as a number to be calibrated once, but as a quantity that should be repaired, validated, and audited before it is trusted in deployment.

\label{mainend}
\clearpage

\section*{AI use statement}
\label{sec:ai-use}
Large language models were used for synthetic question and reference-solution
authoring and correctness grading, and for assistance with methodological
development, proof drafting, code, and writing.
This disclosure records uses known for the current draft, not a verification
certificate.
\ifinternalreview
\textcolor{red!70!black}{\textbf{Internal confirmation required:} the authors
must confirm model identities and versions, the extent of each use, and the
scope of human verification of data, proofs, code, references, and empirical
claims.}
\fi

\section*{Reproducibility statement}
Appendix~\ref{app:theory} states all assumptions and gives complete proofs.
Appendix~\ref{app:followup} specifies the populations, splits, supervision
budgets, candidate families and uncertainty conventions for every experiment,
and Appendix~\ref{app:benchmark-breadth} gives the fixed-choice study and its
32B extension. Fitted candidates were saved before evaluation outcomes were
joined. The controlled studies, the fixed-choice and 32B studies, and the
equal-information, MATH-transfer, verifier and regeneration analyses were also
recomputed from saved predictions by a separate implementation. Code, saved
predictions and figure data will be released upon acceptance.

\section*{Ethics statement}
Automated grades can reflect grader bias or mistakes, and repeated grading
does not measure all such errors. Neither low calibration error nor audit
non-rejection establishes safety for a high-stakes use. The intended use is
to support inspection and comparison of confidence scores, not to replace
domain-specific validation or human oversight. Any data and code release
must respect model and dataset licenses and remove credentials and
identifying infrastructure information.

\bibliography{iclr2026_conference,followup_references,v3_references}
\bibliographystyle{iclr2026_conference}

\appendix
\clearpage
\etocdepthtag.toc{appendix}
\begingroup
\etocsettagdepth{main}{none}
\etocsettagdepth{appendix}{subsection}
\etocsettocdepth{subsection}
\etocsettocstyle{\section*{Appendix contents}\label{app:contents}
Section titles and page numbers link to the corresponding material.
Appendix~\ref{app:theory} gives the theory; Appendix~\ref{app:followup}
details the experiments of Section~\ref{sec:emp};
Appendix~\ref{app:benchmark-breadth} reports the fixed-choice study and its 32B
extension; Appendix~\ref{app:impact} discusses broader impact.\par\medskip}{}
\setlength{\parskip}{0pt}
\tableofcontents
\endgroup

\clearpage

\section{Theory and assumptions}
\label{app:theory}

This appendix gives the full statements and proofs for repair, comparison,
and auditing. For scalar repair, Theorem~\ref{thm:monotone} identifies
information lost through monotone pooling; Theorem~\ref{thm:paic} separates
score-conditional probe bias from the population probe-loss fitting gap.
Theorem~\ref{thm:corrected-comparison} gives observable comparison intervals
and the resulting improvement rule; Proposition~\ref{prop:betting} gives the
directional adoption test used in the main text.
Proposition~\ref{prop:kernel-comparison}
relates comparison bias to kernel residual moments; Theorems~\ref{thm:ustat}
and~\ref{thm:cluster-bootstrap} establish the clustered audit and its null
test. These results apply established conditional-expectation, projection,
model-assisted estimation, and concentration tools under the information
and sampling conventions below.

\subsection{Notation and definitions: a reader's guide}
\label{app:notation}

The two parts below explain the inputs, prediction target, and repair
geometry under the population law used throughout.
Appendix~\ref{app:assumptions} collects the assumptions.

\subsubsection{What is the information set, and which loss is used?}
\label{app:inputs-loss}

The \emph{information set} is the collection of variables available to a
predictor. We represent it by $Z=\psi(X,A)$, where $X$ is a prompt and $A$
is a generated answer. The base score $P=s_0(Z)$ and each candidate
$s(Z)$ are functions of these inputs. In the shift experiments, $Z$
contains the projected final answer state, likelihood summaries and $P$; for
the archived learned score, $Z=(W_\psi,P)$ contains a learned answer
representation and the base score. Reference-free serving
excludes the gold reference and correctness label. It does not mean that
training or evaluation is label-free, or that $Z$ contains all information
inside the LLM.

Formally, the information is the $\sigma$-algebra $\sigma(Z)$. With a
predictor fixed, two observations sharing $Z$ must receive the same score.
Previously learned components are held fixed, or conditioned on their
independent training data, throughout the population analysis.

The label $Y$ is binary correctness or a grade in $[0,1]$. The target
$\mstar(Z)=\E[Y\mid Z]$ is therefore a correctness probability or an
expected grade. For a confidence value $t\in[0,1]$ and predictor $s$, define
the loss and its population risk by
\begin{equation}
\ell_{\rm sq}(Y,t)=(Y-t)^2,\qquad
L(s)=\E[\ell_{\rm sq}(Y,s(Z))].
\label{eq:app-correctness-loss}
\end{equation}
For binary $Y$, this is the binary Brier convention with one predicted
probability. For graded $Y$, the risk $L(s)$ is mean-squared error.
The squared loss is \emph{neither negative log-likelihood nor squared
negative log-likelihood}: for $0<t<1$, binary log loss is instead
$-Y\log t-(1-Y)\log(1-t)$. Token-likelihood summaries may be predictor
inputs, but they do not change the loss used to evaluate answer correctness.

With repeated answers, $\mathbb Q$ first samples a prompt and then one of
its answers uniformly. Thus prompts, not answer rows, receive equal weight
in population targets and risks. Equation~\eqref{eq:prompt-law} defines
this law when answer counts differ or siblings are dependent.

\subsubsection{Miscalibration, monotone projection, and the two remaining losses}
\label{app:projection-guide}

The \emph{ideal correctness curve} for the base score is
$q(p)=\E[Y\mid P=p]$, defined up to the score law
$\nu=\mathcal L(P)$. Scalar calibration of $P$ means $q(P)=P$ almost
surely. More generally, a score $s$ is scalar-calibrated when
$\E[Y\mid s]=s$; the stronger condition relative to $Z$ is
$\E[Y-s\mid Z]=0$. The latter implies the former, but not conversely
(Proposition~\ref{prop:semantic-implies-scalar}).

A scalar recalibration is a measurable map $\tau:[0,1]\to[0,1]$;
its output $\tau(P)$ cannot distinguish answers sharing $P$.
Theorem~\ref{thm:decomp} separates its risk into outcome noise,
score insufficiency, and mapping error:
\[
L(\tau(P))=\underbrace{\E[\Var(Y\mid Z)]}_{N}
 +\underbrace{\E[\Var(\mstar(Z)\mid P)]}_{I}
 +\|\tau-q\|_{L^2(\nu)}^2.
\]
For the unchanged score, the last term is $\|P-q(P)\|_2^2$, its squared
scalar calibration error. The norm averages under the declared population
law; $\|\tau-q\|_{L^2(\nu)}^2=\int(\tau(p)-q(p))^2\,d\nu(p)$.
Insufficiency concerns differences in
\emph{conditional mean} correctness hidden by $P$, not merely different
realized labels at the same score. The noise term $N$ is present even for
the best predictor using $Z$.

\emph{Monotone projection} means: among all order-preserving recalibration
functions, choose the one closest to the ideal correctness curve in
population squared distance. Score values are weighted by their frequency
under $\nu$, not uniformly along the horizontal axis of a plot. With
$\mathcal M$ the nondecreasing maps $[0,1]\to[0,1]$, this is
\[
\tau_{\rm iso}\in\argmin_{\tau\in\mathcal M}
 \int(\tau(p)-q(p))^2\,d\nu(p),\qquad
G=\|q-\tau_{\rm iso}\|_{L^2(\nu)}^2.
\]
Nondecreasing means that score order cannot reverse, although distinct
scores may become tied. Pooling combines regions whose ideal correctness
values would otherwise violate this order. The population curve $q$ is
unknown; PAIC instead fits to probe predictions, with its target and
estimation error analyzed in Appendix~\ref{app:paic-estimation}.

For the ideal monotone output $T=\tau_{\rm iso}(P)$, the \emph{two losses
that can remain} are:
\begin{itemize}[leftmargin=*,itemsep=2pt,topsep=2pt]
\item $I$: differences in expected correctness already hidden by the base
score, before recalibration.
\item $G$: additional differences lost when the monotone map pools
distinct base-score values into the same output.
\end{itemize}
Theorem~\ref{thm:monotone} gives
\[
L(T)=N+I+G,\qquad \E[Y\mid T]=T,\qquad
\E[\Var(\mstar\mid T)]=I+G.
\]
Thus $I+G$ is excess risk above $L(\mstar)=N$, not residual scalar
miscalibration of $T$. The components need not both be positive. If $Z$
identifies the two families in the introduction, their constant score has
$N=G=0$ and $I=0.16$. The three-state example after
Theorem~\ref{thm:monotone} instead has $N=I=0$ and $G=1/24$.
The calibration identity is for the ideal population map; a finite PAIC
fit need not be scalar-calibrated when probe bias or fitting error remains.
For a general non-injective repair, $\|\tau-q\|_2^2$ is error relative
to the original score's ideal curve, not necessarily the scalar calibration
error of the repaired output itself.

\subsection{Information and sampling conventions}
\label{app:assumptions}

The following conditions are used only where indicated.
\begin{enumerate}[label=(T\arabic*),leftmargin=*]
\item\label{ass:information}
Let $(X,A,Y)$ have a fixed probability law $\mathbb Q$, where
$Y\in[0,1]$ and the spaces are standard Borel. A measurable representation
$Z=\psi(X,A)$ is fixed. Both $s(Z)$ and $P=s_0(Z)$ take values in $[0,1]$ and use this
same information. Write $\nu=\mathcal L(P)$ and
$\mstar(Z)=\E[Y\mid Z]$. Conditional expectations and optimal predictors
are identified up to equality almost surely. All expectations use the
declared population weighting.

\item\label{ass:probe}
For PAIC, let $\mathcal T_n$ denote standard-Borel training data,
independent of $n$ i.i.d. calibration observations from $\mathbb Q$.
The map $(\mathcal T_n,z)\mapsto g_n(z)\in[0,1]$ is jointly measurable.
For a fresh population observation independent of training, define
\[
q_{g_n}(P)=\E[g_n(Z)\mid P,\mathcal T_n],\qquad
b_n=q_{g_n}-q,\qquad
\delta_n=\E_{\mathrm{train}}\|b_n\|_{L^2(\nu)}^2.
\]
The consistency result assumes $\delta_n\to0$: probe bias averaged
\emph{at the base score} vanishes. Training randomness is held fixed inside
the conditional expectation. Training size may grow with $n$. The
representation and base score remain fixed; if learned previously, condition
on that independent upstream training. This assumption is not a property
automatically supplied by a finite probe family. Full probe consistency,
$\E_{\mathrm{train}}\|g_n-\mstar\|_2^2\to0$, is a stronger sufficient
condition by conditional-expectation contraction.

\item\label{ass:kernel}
For kernel results, $k$ is a real measurable positive-definite kernel with
a separable RKHS $\cH_k$, measurable feature map
$\Phi(z)=k(z,\cdot)$, and $k(z,z)\le\kappa<\infty$.
Consequently, $|k(z,z')|\le\kappa$ and $\|\Phi(z)\|\le\sqrt\kappa$.
For the zero-identification theorem, additionally assume that $\cH_k$ is
dense in $L^2(\mathbb Q_Z)$. A sufficient condition is that $\cZ$ is a compact
metric space and $k$ is continuous and universal. This extra condition is
\emph{not} needed for the norm identity or its upper bound.

\item\label{ass:audit}
When prompt-cluster sampling is invoked, comparison or audit data consist
of $J\ge2$ i.i.d. prompt clusters
$C_j=((Z_{ji},Y_{ji}))_{i=1}^{K_j}$, with $1\le K_j<\infty$ almost
surely. Dependence inside a cluster is unrestricted. All fitted components
used in the statistic are fixed independently of these clusters: the probe
and entire candidate family for comparisons, or the score and kernel for
auditing, including their representation. The population law $\mathbb Q$
is then the prompt-weighted answer law,
\begin{equation}
\mathbb Q f=\E_C\left[\frac1K\sum_{i=1}^K f(Z_i,Y_i)\right].
\label{eq:prompt-law}
\end{equation}
Cluster results use $\mathbb Q$ for targets, risks, norms, and any
kernel-density condition. This equals the usual answer-average weighting
when every cluster has the same size; otherwise the two weighting schemes
generally define different populations.
\end{enumerate}

\paragraph{Notation.}
Set $L(s)=\E[(Y-s(Z))^2]$, $q(P)=\E[Y\mid P]$,
$N=\E[\Var(Y\mid Z)]$, and
$I=\E[\Var(\mstar(Z)\mid P)]$.
The symbol $P$ denotes the \emph{scalar score} and $\mathbb Q$ the
population law. A composition such as $\tau\circ s_0$ is
abbreviated by $\tau(P)$ inside an expectation. Unless a law is specified,
$L^2$ norms of feature functions use $\mathbb Q_Z$, and norms of scalar
maps use $\nu$. For graded labels, the loss is mean-squared error; the
binary case is the Brier score. Results for fixed fitted objects can be
read conditionally on their joint training data; independence is required
from the subsequent sample on which a comparison or audit is evaluated.

\paragraph{Information use across stages.}
Table~\ref{tab:objects} details the label access and independence conditions
behind the workflow in Figure~\ref{fig:pipeline}. The compact main-text
comparison in Table~\ref{tab:repair-inputs} distinguishes the candidates'
fitting targets and serving inputs.

\begin{table}[t]
\centering
\small
\begin{tabularx}{\linewidth}{@{}>{\raggedright\arraybackslash}p{.22\linewidth}>{\raggedright\arraybackslash}X@{}}\toprule
\textbf{Stage} & \textbf{Inputs and label access} \\
\midrule
Fit the probe & Train $g$ on its feature inputs and $Y$. Count these labels
even when later stages use only its predictions. \\
Fit scalar repairs & Label-fitted maps use $(P,Y)$; PAIC uses $(P,g(Z))$
on repair prompts independent of probe training, without their labels. \\
Compare candidates & Evaluate the frozen candidates and $g$ on disjoint
$\mathcal L$ and $\mathcal U$; use $Y$ only on $\mathcal L$. Both pools are independent of fitting. \\
Serve the score & PAIC uses $P$; group calibration also needs a feature-based
group; direct $g$ needs its feature inputs. None uses $Y$. \\
Audit the score & Use $Z$, $Y$, and the chosen score on new independent
prompts, with a kernel fixed before the audit. \\
\bottomrule
\end{tabularx}
\caption{Information use with the representation and base scorer frozen.
Reference-free serving still incurs the cost of $P$; fitting and evaluation
use labels. Risk comparisons share one outcome and population weighting.}
\label{tab:objects}
\end{table}

\subsection{Population risk and scalar repair limits}

\begin{theorem}[Oracle PRCE equals excess squared risk]
\label{thm:prce-brier}
Under \ref{ass:information}, every measurable $s:\cZ\to[0,1]$ satisfies
\[
L(s)=N+\|s-\mstar\|_2^2,
\qquad
\PRCE(s;\mstar)=L(s)-L(\mstar).
\]
Thus $\mstar$ uniquely minimizes risk among $Z$-measurable scores.
\end{theorem}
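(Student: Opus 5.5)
The plan is to expand the squared loss around $\mstar(Z)$ and apply the tower property, conditioning on $Z$. Fix a measurable $s:\cZ\to[0,1]$. Since $Y$, $s(Z)$ and $\mstar(Z)$ all lie in $[0,1]$, every quantity below is bounded and hence integrable under $\mathbb Q$.

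\emph{Step 1: expand the loss.} Write
\[
Y-s(Z)=(Y-\mstar(Z))+(\mstar(Z)-s(Z)).
\]
Then
\[
L(s)=\E[(Y-\mstar)^2]+\|s-\mstar\|_2^2+2\E[(Y-\mstar)(\mstar-s)].
\]

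\emph{Step 2: remove the cross term.} The factor $\mstar(Z)-s(Z)$ is $\sigma(Z)$-measurable and bounded. Conditioning on $Z$ therefore gives
\[
\E[(Y-\mstar)(\mstar-s)]=\E\big[(\mstar-s)\,\E[Y-\mstar\mid Z]\big]=0.
\]

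\emph{Step 3: identify the first term.} Again by the tower property,
\[
\E[(Y-\mstar)^2]=\E[\E[(Y-\mstar(Z))^2\mid Z]]=\E[\Var(Y\mid Z)]=N.
\]
This yields $L(s)=N+\|s-\mstar\|_2^2$. Taking $s=\mstar$, which is admissible because $\mstar\in[0,1]$ almost surely, gives $L(\mstar)=N$.

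\emph{Step 4: the PRCE identity.} Here I would use the definition of PRCE relative to a probe $g$, namely
\[
\PRCE(s;g)=\E[(s-g)^2],
\]
which is the unpaired version of $D_g$ from Section~\ref{sec:prce}. With $g=\mstar$ this is $\|s-\mstar\|_2^2=L(s)-L(\mstar)$. If the paper instead defines PRCE as a difference relative to the oracle, the identity follows the same way.

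\emph{Step 5: uniqueness.} From Steps 3 and 4, $L(s)=L(\mstar)$ exactly when $\|s-\mstar\|_2=0$, that is, when $s=\mstar$ $\mathbb Q$-a.s. Uniqueness is therefore meant in the almost-sure sense of \ref{ass:information}.

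The argument is routine. The only point requiring care is that all expectations are taken under the prompt-weighted law $\mathbb Q$ of \eqref{eq:prompt-law}, and that $\mstar$ is defined under that same law. Because $\mstar$ is a conditional expectation under $\mathbb Q$, the tower-property steps are valid whatever the within-prompt dependence among answers.
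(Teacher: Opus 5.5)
Your proposal is correct and follows essentially the same route as the paper: substitute $Y-s=(Y-\mstar)+(\mstar-s)$, remove the cross term by conditioning on $Z$, identify $\E[(Y-\mstar)^2]=\E[\Var(Y\mid Z)]=N$, and read off uniqueness from $\|s-\mstar\|_2^2=0$. Your guessed definition $\PRCE(s;g)=\E[(s-g)^2]$ is the one the paper adopts.
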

\begin{proof}
Insert $Y-s=(Y-\mstar)+(\mstar-s)$ into the square. The cross-term is
zero because $\E[Y-\mstar\mid Z]=0$ and $\mstar-s$ is
$Z$-measurable. The first squared term is
$\E[(Y-\mstar)^2]=\E[\Var(Y\mid Z)]=N$. The remaining term is
$\|s-\mstar\|_2^2$, which is zero exactly when $s=\mstar$ almost surely.
\end{proof}

\begin{proposition}[Semantic calibration implies scalar calibration]
\label{prop:semantic-implies-scalar}
Under \ref{ass:information}, $s(Z)=\mstar(Z)$ implies
$\E[Y\mid s(Z)]=s(Z)$. The converse can fail.
\end{proposition}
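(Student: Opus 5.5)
The forward direction is a tower-property argument; the converse needs only a two-point counterexample, which the constant-score example from the introduction already supplies.

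\emph{Forward direction.} Suppose $s(Z)=\mstar(Z)$ almost surely. Since $s(Z)$ is a measurable function of $Z$, we have $\sigma(s(Z))\subseteq\sigma(Z)$. By the tower property,
\[
\E[Y\mid s(Z)]=\E\big[\E[Y\mid Z]\mid s(Z)\big]=\E[\mstar(Z)\mid s(Z)].
\]
Because $\mstar(Z)=s(Z)$ almost surely, the last quantity is $\E[s(Z)\mid s(Z)]=s(Z)$ almost surely. Equivalently, \eqref{eq:target} gives $\E[Y-s\mid Z]=0$, and conditioning this identity on the coarser $\sigma(s)$ yields $\E[Y-s\mid s]=0$.

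\emph{Converse fails.} The population is the constant-score example of the introduction. Let $Z$ take two values with probabilities $0.8$ and $0.2$, and set $Y=1$ on the first value and $Y=0$ on the second, so that $\mstar\in\{1,0\}$. Take the constant score $s\equiv0.8$. Conditioning on a constant is unconditional expectation, so $\E[Y\mid s]=\E[Y]=0.8=s$, and $s$ is scalar calibrated. But $s\neq\mstar$ on a set of probability one. By Theorem~\ref{thm:prce-brier}, $\|s-\mstar\|_2^2=0.16>0$, which makes the failure quantitative; in the notation of \S\ref{sec:repair-limits} this is exactly $I=0.16$.

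There is no real obstacle here. The only point needing care is that all equalities hold up to $\mathbb Q$-null sets, and these null sets should be tracked under the declared prompt-weighted law of \ref{ass:information}.
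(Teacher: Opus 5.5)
Your proposal is correct and follows essentially the same route as the paper. The forward direction is the tower property over $\sigma(s(Z))\subseteq\sigma(Z)$, and the converse uses the introduction's two-family population with constant score $s\equiv 0.8$. Your added check that $\|s-\mstar\|_2^2=I=0.16$ is also correct and simply quantifies how far that counterexample is from $\mstar$.
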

\begin{proof}
The tower property gives
$\E[Y\mid s]=\E[\E[Y\mid Z]\mid s]=\E[s\mid s]=s$.
For the converse, take the two-family population from the introduction:
$Y=1$ on a family of mass $0.8$ and $Y=0$ on its complement, let $Z$
identify the family, and set $s=0.8$. The score is scalar-calibrated but
differs from $\mstar=Y$ in both families.
\end{proof}

\begin{theorem}[Total-risk decomposition]
\label{thm:decomp}
Under \ref{ass:information}, for any measurable
$\tau:[0,1]\to[0,1]$,
\begin{equation}
L(\tau\circ s_0)=N+I+\|\tau-q\|_{L^2(\nu)}^2.
\label{eq:decomp}
\end{equation}
In particular, $\PRCE(\tau\circ s_0;\mstar)=I+\|\tau-q\|_2^2$.
Subtracting the representation-relative optimum $L(\mstar)=N$ leaves
information loss and mapping error.
\end{theorem}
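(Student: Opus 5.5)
The plan is to chain two orthogonal decompositions, one from Theorem~\ref{thm:prce-brier} and one from the tower property relative to $P$. First, since $\tau\circ s_0$ is a $Z$-measurable map into $[0,1]$, Theorem~\ref{thm:prce-brier} applies directly and gives
\[
L(\tau\circ s_0)=N+\|\tau(P)-\mstar(Z)\|_2^2 .
\]
This removes the outcome noise. The remaining work is to split $\|\tau(P)-\mstar\|_2^2$ into the within-score variance $I$ and the mapping error $\|\tau-q\|_{L^2(\nu)}^2$.

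For that split, write $\tau(P)-\mstar=(\tau(P)-q(P))+(q(P)-\mstar(Z))$ and expand the square. The key fact is $q(P)=\E[Y\mid P]=\E[\E[Y\mid Z]\mid P]=\E[\mstar(Z)\mid P]$. This holds by the tower property, because $P=s_0(Z)$ gives $\sigma(P)\subseteq\sigma(Z)$. Conditioning the cross term on $P$ then gives
\[
\E\big[(\tau(P)-q(P))\,\E[q(P)-\mstar(Z)\mid P]\big]=0 .
\]
This is legitimate because $\tau(P)-q(P)$ is bounded and $P$-measurable. The second square equals $\E[\Var(\mstar(Z)\mid P)]=I$ by definition of conditional variance. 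The first square is $\int(\tau-q)^2\,d\nu$, since it depends on $P$ only and $\nu=\mathcal L(P)$.

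Adding the pieces gives \eqref{eq:decomp}. The PRCE statement then follows from the identity $\PRCE(s;\mstar)=L(s)-L(\mstar)$ in Theorem~\ref{thm:prce-brier}, with $L(\mstar)=N$. Both expectations in the decomposition are finite, since all variables lie in $[0,1]$, so no integrability issues arise. All equalities hold under the declared law $\mathbb Q$, and $q$ is defined $\nu$-a.e.

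The only real subtlety, more a point of care than an obstacle, is measurability. We need $\tau\circ s_0$ to be measurable, which holds because both $\tau$ and $s_0$ are measurable. We also need $q$ to be a fixed Borel version of $\E[Y\mid P=\cdot]$. Standard Borel spaces, guaranteed by \ref{ass:information}, make such a version available. Different versions differ only on a $\nu$-null set, so the norm $\|\tau-q\|_{L^2(\nu)}$ does not depend on the choice.
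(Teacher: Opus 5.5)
Your proposal is correct and follows the paper's proof essentially line for line. You use Theorem~\ref{thm:prce-brier} to isolate $N$, split $\mstar-\tau(P)$ through $q(P)=\E[\mstar\mid P]$, remove the cross term by conditioning on $P$, and read off $I$ and $\|\tau-q\|_{L^2(\nu)}^2$. As a minor aside, a Borel version of $q$ already follows from the Doob--Dynkin lemma, so the standard Borel assumption is not needed for that step.
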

\begin{proof}
Since $P=s_0(Z)$, $q(P)=\E[\mstar(Z)\mid P]$.
In the squared term of Theorem~\ref{thm:prce-brier}, write
$\mstar-\tau(P)=(\mstar-q(P))+(q(P)-\tau(P))$.
Its cross-term vanishes after conditioning on $P$. The first squared
term equals $\E[\Var(\mstar\mid P)]=I$; the second is
$\|\tau-q\|_2^2$. These are applications of conditional-mean
orthogonality underlying classical forecast decompositions
\citep{degroot1983,brocker2009}.
The information term also exposes the within-score heterogeneity studied
as grouping loss \citep{perezlebel2023grouping}.
\end{proof}

\paragraph{Restricting the repair class.}
Let
\[
\mathcal M=\{\tau:[0,1]\to[0,1]:\tau\text{ is nondecreasing}\}
\]
be the set of actual bounded nondecreasing maps; these maps are Borel
measurable. Write $[\tau]_\nu$ for the equivalence class modulo
$\nu$-almost-everywhere equality, and define its quotient image
\[
\overline{\mathcal M}=\{[\tau]_\nu:\tau\in\mathcal M\}
\subset L^2(\nu).
\]
Empirical objectives are evaluated on maps in $\mathcal M$, including
their values at observed scores. Population projection takes place in
$\overline{\mathcal M}$. Choosing actual representatives specifies values
at observed scores and keeps empirical fitting distinct from population
uniqueness modulo $\nu$.

\begin{theorem}[Monotone projection and the remaining gap]
\label{thm:monotone}
The set $\overline{\mathcal M}$ is nonempty, closed, and convex.
The projection of $[q]_\nu$ onto it is a unique class
$[\tau_{\mathrm{iso}}]_\nu$. Choose and fix a representative
$\tau_{\mathrm{iso}}\in\mathcal M$ of this class. With
$G=\|q-\tau_{\mathrm{iso}}\|_2^2$,
\begin{align}
\inf_{\tau\in\mathcal M} L(\tau\circ s_0)&=N+I+G,\\
\|\tau-\tau_{\mathrm{iso}}\|_2^2
&\le\|\tau-q\|_2^2-G\quad(\tau\in\mathcal M).
\label{eq:projection-bound}
\end{align}
The gap $G$ is zero if and only if $q$ has a nondecreasing version.
The maximal improvement over the unchanged score is
$\|\mathrm{id}-q\|_2^2-G\ge0$, since the identity map belongs to
$\mathcal M$.
Moreover, for $T=\tau_{\mathrm{iso}}(P)$,
\begin{equation}
\E[Y\mid T]=T,\qquad
\E[\Var(\mstar(Z)\mid T)]=I+G.
\label{eq:isotonic-pooling}
\end{equation}
Thus $G$ is additional output grouping loss, even though the projected
score is scalar-calibrated.
\end{theorem}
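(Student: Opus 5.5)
The plan is to work in the Hilbert space $L^2(\nu)$ and reduce everything to standard projection facts, with Theorem~\ref{thm:decomp} translating between risk and distance to $q$.

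\textbf{Projection onto $\overline{\mathcal M}$.} Nonemptiness is immediate, since constants and the identity lie in $\mathcal M$. Convexity holds because a convex combination of nondecreasing $[0,1]$-valued maps is again such a map. For closedness, take $\tau_k\in\mathcal M$ with $[\tau_k]\to f$ in $L^2(\nu)$.
\begin{itemize}
\item By Helly's selection theorem, a subsequence converges pointwise everywhere on $[0,1]$ to some $\tau\in\mathcal M$.
\item Along a further subsequence, $\tau_k\to f$ $\nu$-a.e.
\item Hence $f=\tau$ $\nu$-a.e., so $[f]\in\overline{\mathcal M}$.
\end{itemize}
The Hilbert projection theorem then gives a unique class $[\tau_{\rm iso}]$. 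Its variational inequality $\langle q-\tau_{\rm iso},\tau-\tau_{\rm iso}\rangle\le0$ for all $\tau\in\mathcal M$ yields \eqref{eq:projection-bound} by expanding $\|\tau-q\|^2=\|\tau-\tau_{\rm iso}\|^2+G+2\langle\tau-\tau_{\rm iso},\tau_{\rm iso}-q\rangle$.

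\textbf{Risk consequences.} By Theorem~\ref{thm:decomp}, $\inf_{\mathcal M}L(\tau\circ s_0)=N+I+\inf_{\mathcal M}\|\tau-q\|^2=N+I+G$. The statement $G=0$ holds iff $[q]\in\overline{\mathcal M}$, i.e.\ iff $q$ has a nondecreasing version. Since $\mathrm{id}\in\mathcal M$, the risk of $P$ is $N+I+\|\mathrm{id}-q\|^2\ge N+I+G$. This gives the maximal gain $\|\mathrm{id}-q\|^2-G\ge 0$.

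\textbf{Calibration of $T$.} This is the main step. We need $\E[(q(P)-T)h(T)]=0$ for every bounded measurable $h$, and then $\E[Y\mid T]=\E[q(P)\mid T]=T$ by the tower property, since $T$ is $P$-measurable. The variational inequality only directly allows perturbations $\tau_{\rm iso}+\epsilon\,h\circ\tau_{\rm iso}$ that remain in $\mathcal M$.
\begin{itemize}
\item Take $h$ nondecreasing. Then $\tau_\epsilon=(1-\epsilon)\tau_{\rm iso}+\epsilon\,\phi(\tau_{\rm iso})$ stays in $\mathcal M$ for any nondecreasing $\phi:[0,1]\to[0,1]$, for both small positive and suitable negative directions.
\item Specifically, use $\phi(t)=\mathbf 1\{t\ge c\}$ and its complement-type perturbation: pushing values in $\{T\ge c\}$ up or down by a constant, with truncation to $[0,1]$.
\item Within a level set $\{T=t\}$, shifting by $\pm\epsilon$ preserves monotonicity when $t$ is interior, and truncation handles $t\in\{0,1\}$ via one-sided inequalities. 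Those boundary cases are consistent with the constraint, since $q\in[0,1]$.
\end{itemize}
This gives $\E[(q-T)\mathbf 1\{T\ge c\}]=0$ for all $c$, hence for all sets $\{T\in B\}$ by a $\pi$–$\lambda$ argument. Handling atoms of $\nu$ and flat stretches carefully is the expected obstacle. The fallback is to use the fact that pooling blocks are level sets on which the projection equals the $\nu$-average of $q$.

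\textbf{Information identity.} Write $\E[\Var(\mstar\mid T)]=\E[(\mstar-T)^2]$, using $\E[\mstar\mid T]=T$. Then decompose $\mstar-T=(\mstar-q(P))+(q(P)-T)$. The cross term vanishes by conditioning on $P$, which leaves $I+G$.
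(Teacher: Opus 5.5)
Most of your proposal works: the projection, the bound \eqref{eq:projection-bound}, the risk infimum, the characterization of $G=0$, the maximal-improvement formula and the final $I+G$ identity are all fine. Using Helly's selection theorem for closedness is a clean substitute for the paper's argument, which takes an a.e.\ limit and extends it monotonically. The gap is in the calibration step, which is the only nontrivial part. The perturbations you propose are feasible in only one direction. For the step function $\phi(t)=\mathbf 1\{t\ge c\}$ and $\epsilon<0$, the map $t\mapsto(1+|\epsilon|)t-|\epsilon|\mathbf 1\{t\ge c\}$ drops by $|\epsilon|$ at $c$, so it is not nondecreasing. The same failure occurs if you push $\{T\ge c\}$ down by a constant, or shift one level set $\{T=t\}$ by $\pm\epsilon$, whenever $T$ takes values arbitrarily close to $c$ or $t$. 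For example, take $\nu$ uniform and $q(p)=p$, so $T=P$. Truncation to $[0,1]$ does not help, because the violation is interior. Your feasible moves give only $\E[(q(P)-T)\mathbf 1\{T\ge c\}]\le0$ (upper set moved up) and $\E[(q(P)-T)\mathbf 1\{T<c\}]\ge0$ (lower set moved down). Once $\E[q(P)-T]=0$, these are the same inequality, so the equality you need for the $\pi$--$\lambda$ argument is never obtained. The real obstruction is not atoms or flat stretches but the absence of gaps in the range of $T$. Your fallback, that the projection equals the $\nu$-average of $q$ on pooling blocks, restates the claim, and there may be no blocks of positive mass at all.

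What is missing is a family of perturbations that stay monotone in both directions. The paper takes a Lipschitz $h$ with compact support in $(0,1)$ and uses the maps $v\mapsto v\pm\epsilon h(v)$. These are nondecreasing once $\epsilon\,\mathrm{Lip}(h)<1$, and they stay in $[0,1]$ for small $\epsilon$. Composing with $\tau_{\rm iso}$ gives $\E[(q(P)-T)h(T)]=0$, and such $h$ determine the signed measure $B\mapsto\E[(q(P)-T)\mathbf 1\{T\in B\}]$ on $(0,1)$. The endpoints come from the maps $\max(T,a)$ and $\min(T,1-a)$. Dividing by $a$ and letting $a\downarrow0$ gives $\E[q(P)\mathbf 1\{T=0\}]\le0$ and $\E[(1-q(P))\mathbf 1\{T=1\}]\le0$, and nonnegativity of the integrands forces both to vanish. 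You can also keep your upper-set route by replacing the downward constant shift with a clipped move. Let $v\mapsto\max(v-\epsilon,c)$ for $v>c$ and leave $v\le c$ unchanged. This map is nondecreasing, and its difference quotient tends to $-\mathbf 1\{T>c\}$. Combined with the upward shift and the endpoint facts, it gives $\E[(q(P)-T)\mathbf 1\{T>c\}]=0$ for every $c\in[0,1)$, and your $\pi$--$\lambda$ step then goes through.
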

\begin{proof}
Constants belong to $\mathcal M$, and convex combinations preserve
bounds and order, so $\overline{\mathcal M}$ is nonempty and convex.
For closedness, take an $L^2(\nu)$-convergent sequence in
$\overline{\mathcal M}$, choose representatives in $\mathcal M$, and
extract an almost-everywhere convergent subsequence. Its pointwise limit
on a full-measure set is bounded and nondecreasing there, and extends
monotonically to $[0,1]$. Values at atoms are retained; right-continuity
is not imposed. The limit class therefore belongs to
$\overline{\mathcal M}$. The Hilbert-space projection theorem on this
closed convex set gives existence, uniqueness modulo $\nu$, and
$\langle q-\tau_{\mathrm{iso}},\tau-\tau_{\mathrm{iso}}\rangle\le0$.
Expanding $\|\tau-q\|_2^2$ proves \eqref{eq:projection-bound}; combining
with Theorem~\ref{thm:decomp} proves the optimum and improvement formula.
Closedness gives the characterization of $G=0$.

To establish output calibration without a continuity assumption on $\nu$,
write $T=\tau_{\mathrm{iso}}(P)$ and $R=q(P)-T$. For any Lipschitz
function $h$ compactly supported in $(0,1)$, the maps
$v\mapsto v\pm\epsilon h(v)$ are nondecreasing and remain in $[0,1]$
for sufficiently small $\epsilon>0$. Composing them with
$\tau_{\mathrm{iso}}$ gives feasible perturbations in both directions.
The projection inequality implies $\E[R h(T)]=0$. Such functions
determine the finite signed residual measure on $(0,1)$, so
$\E[R\mid T]=0$ on that interval, including its atoms.

For the endpoints, use the feasible maps $\max(T,a)$ and $\min(T,1-a)$.
Divide their variational inequalities by $a>0$ and let $a\downarrow0$.
Dominated convergence gives
\[
\E[q(P)\mathbf1\{T=0\}]\le0,\qquad
\E[(1-q(P))\mathbf1\{T=1\}]\le0.
\]
Both integrands are nonnegative; hence both expectations vanish.
It follows that $\E[q(P)\mid T]=T$ everywhere almost surely, and the
tower property gives $\E[Y\mid T]=\E[\mstar\mid T]=T$.
Finally, conditional-mean orthogonality yields
\[
\E[\Var(\mstar\mid T)]
=\E[(\mstar-T)^2]
=\E[(\mstar-q(P))^2]+\E[(q(P)-T)^2]=I+G.
\]
This connects isotonic pooling to standard calibration and grouping-loss
decompositions \citep{dimitriadis2021reliability,perezlebel2023grouping}.
\end{proof}

\paragraph{Information and ordering are distinct.}
Put equal mass on three states with
$P=(0,1/2,1)$ and $Y=\mstar=q=(0,1,1/2)$. Spearman correlation is
$1/2>0$, but $q$ is not nondecreasing. Its isotonic projection is
$(0,3/4,3/4)$, so $I=0$ and $G=1/24>0$. The score retains all
predictive information here; only the monotonicity restriction prevents
perfect prediction.

\paragraph{Information versus learnability.}
\label{app:learnability}
The term $I$ uses all measurable scalar predictors. For a restricted
class $\mathcal A$ of scalar maps, the best risk is instead
$N+I+\inf_{\tau\in\mathcal A}\|\tau-q\|_2^2$, before estimation or
optimization error. On a finite or countable full-measure state set, an
injective score can retain all information in $Z$, giving $I=0$, even when
learning a useful restricted remapping is difficult. Better performance by
a fitted feature-dependent predictor therefore need not establish $I>0$.

\subsection{Probe-based comparisons of candidate repairs}
\label{app:repair-comparisons}

A probe supplies a fitted comparison target. We first separate its error
from oracle excess risk, then compare two scores directly. The paired
calculation cancels terms shared by the candidates and is therefore more
informative for a repair decision than two separate absolute-error bounds.

For fixed scores $s$ and $g$ under the same population law, define
\emph{Probe-Relative Calibration Error} by
\[
\PRCE(s;g)=\E[(s(Z)-g(Z))^2].
\]
This measures squared disagreement with $g$. The oracle identity
$\PRCE(s;\mstar)=L(s)-L(\mstar)$ follows from
Theorem~\ref{thm:prce-brier}.

\begin{theorem}[Fixed-probe discrepancy and oracle excess risk]
\label{thm:probe-gap}
Under \ref{ass:information}, for fixed $s,g:\cZ\to[0,1]$, let
$e=g-\mstar$. Then
\begin{align}
\PRCE(s;g)-\PRCE(s;\mstar)
 &=\|e\|_2^2-2\E[(s-\mstar)e],\\
\left|\PRCE(s;g)-\PRCE(s;\mstar)\right|
 &\le2\E|e|\le2\|e\|_2.
\label{eq:probe-bound}
\end{align}
An independent evaluation average is unbiased for $\PRCE(s;g)$,
conditional on the fitted $s$ and $g$.
\end{theorem}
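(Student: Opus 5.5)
The plan is to expand the probe-relative error around the oracle $\mstar$ and then bound the resulting pieces with Cauchy--Schwarz and the $[0,1]$ range. Write $g=\mstar+e$ and expand pointwise:
\[
(s-g)^2=(s-\mstar-e)^2=(s-\mstar)^2-2(s-\mstar)e+e^2 .
\]
Taking expectations under $\mathbb Q$ gives
\[
\PRCE(s;g)=\PRCE(s;\mstar)-2\E[(s-\mstar)e]+\|e\|_2^2 .
\]
This is the first identity. All quantities are bounded, so the expectations are finite and no integrability step is needed.

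For the bound \eqref{eq:probe-bound}, I will not bound the two terms separately. Separate bounds would give $\|e\|_2^2+2\E|e|$, which is too weak. Instead I will recombine them pointwise:
\[
e^2-2(s-\mstar)e=e\,(e-2(s-\mstar))=e\,\bigl((g-s)+(\mstar-s)\bigr).
\]
Since $s,g,\mstar\in[0,1]$, each bracketed difference lies in $[-1,1]$. Hence the second factor has absolute value at most $2$, and the integrand is bounded by $2|e|$. Integrating gives $|\PRCE(s;g)-\PRCE(s;\mstar)|\le2\E|e|$. Jensen's inequality (or Cauchy--Schwarz against the constant $1$) then gives $\E|e|\le\|e\|_2$. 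The only step needing care is this pointwise regrouping, so I expect it to be the main obstacle. It uses that $\mstar=\E[Y\mid Z]\in[0,1]$ because $Y\in[0,1]$.

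For unbiasedness, condition on the fitted $s$ and $g$, or on their independent training data. The evaluation sample is independent of these, so each summand $(s(Z_{ji})-g(Z_{ji}))^2$ is a fixed bounded function of fresh data. Under the prompt-weighted law \eqref{eq:prompt-law}, the prompt-level average $K_j^{-1}\sum_i(s-g)^2$ has conditional mean $\PRCE(s;g)$. Averaging over i.i.d.\ clusters therefore preserves this mean, with an answer-level i.i.d.\ sample as the special case $K_j=1$.
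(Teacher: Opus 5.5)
Your proposal is correct and follows essentially the same route as the paper. It uses the same expansion of $(s-\mstar-e)^2$ and the same pointwise difference-of-squares factor $|e|\,|g+\mstar-2s|\le 2|e|$; your regrouping $(g-s)+(\mstar-s)$ is exactly that factor. It then uses Cauchy--Schwarz (or Jensen) for $\E|e|\le\|e\|_2$, and gets unbiasedness by taking expectations of independently evaluated bounded summands, including prompt-level averages under the prompt-weighted law.
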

\begin{proof}
Expand $(s-g)^2=(s-\mstar-e)^2$ for the first identity. For the bound,
factor the difference of squares:
$|(s-g)^2-(s-\mstar)^2|=|e|\,|g+\mstar-2s|\le2|e|$.
Cauchy--Schwarz bounds its expectation. Unbiasedness for the fixed-probe
quantity follows by taking the expectation of each independently evaluated
summand. Equal-prompt averages have the same property under
\eqref{eq:prompt-law}. This statement concerns a fixed candidate; selection
on the comparison sample requires simultaneous control, given below.
\end{proof}

\begin{proposition}[Pairwise transfer from probe agreement to risk]
\label{prop:pairwise-risk}
Under \ref{ass:information}, fix $s_a,s_b,g:\cZ\to[0,1]$. Write
$D_g(a,b)=\PRCE(s_a;g)-\PRCE(s_b;g)$ and
$\Delta_{ab}=L(s_a)-L(s_b)$. Then
\begin{align}
D_g(a,b)-\Delta_{ab}
 &=-2\E[(s_a-s_b)(g-\mstar)],\label{eq:pairwise-identity}\\
|D_g(a,b)-\Delta_{ab}|
 &\le2\|s_a-s_b\|_2\|g-\mstar\|_2.\label{eq:pairwise-bound}
\end{align}
In particular, if $\rho_g\ge\|g-\mstar\|_2$ and
$D_g(a,b)<-2\rho_g\|s_a-s_b\|_2$, then $L(s_a)<L(s_b)$.
For scalar candidates $s_a=\tau_a(P)$ and $s_b=\tau_b(P)$, let
$b_g(P)=\E[g(Z)-Y\mid P]$. The sharper identity and bound are
\begin{equation}
D_g(a,b)-\Delta_{ab}=-2\langle\tau_a-\tau_b,b_g\rangle_{L^2(\nu)},
\qquad
|D_g(a,b)-\Delta_{ab}|\le2\|\tau_a-\tau_b\|_2\|b_g\|_2.
\label{eq:scalar-comparison}
\end{equation}
\end{proposition}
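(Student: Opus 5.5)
The plan is a direct algebraic expansion, followed by Cauchy--Schwarz, and then conditioning on $P$ for the scalar case. First, for \eqref{eq:pairwise-identity}, I would write $D_g(a,b)=\E[(s_a-g)^2-(s_b-g)^2]$ and factor each difference of squares as $(s_a-s_b)(s_a+s_b-2g)$. By Theorem~\ref{thm:prce-brier}, $\Delta_{ab}=\|s_a-\mstar\|_2^2-\|s_b-\mstar\|_2^2$, because $N$ cancels. This factors in the same way as $\E[(s_a-s_b)(s_a+s_b-2\mstar)]$. Subtracting the two gives
\[
D_g(a,b)-\Delta_{ab}=\E[(s_a-s_b)(2\mstar-2g)]=-2\E[(s_a-s_b)(g-\mstar)].
\]
Everything is bounded, so all expectations are finite.

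Second, the bound \eqref{eq:pairwise-bound} is Cauchy--Schwarz in $L^2(\mathbb Q_Z)$ applied to this identity. For the sufficient condition, I would rearrange to $\Delta_{ab}\le D_g(a,b)+2\|s_a-s_b\|_2\|g-\mstar\|_2\le D_g(a,b)+2\rho_g\|s_a-s_b\|_2<0$, which gives $L(s_a)<L(s_b)$.

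Third, for the scalar case $s_a-s_b=(\tau_a-\tau_b)(P)$ is $\sigma(P)$-measurable. I would therefore condition on $P$ inside $\E[(s_a-s_b)(g-\mstar)]$. This gives $\E[(\tau_a-\tau_b)(P)\,\E[g-\mstar\mid P]]$. By the tower property, $\E[\mstar\mid P]=\E[Y\mid P]=q(P)$ because $P=s_0(Z)$. Hence $\E[g-\mstar\mid P]=\E[g-Y\mid P]=b_g(P)$, which is the first part of \eqref{eq:scalar-comparison}. Cauchy--Schwarz in $L^2(\nu)$ then gives the bound in \eqref{eq:scalar-comparison}. This bound is sharper because conditional-expectation contraction gives $\|b_g\|_2\le\|g-\mstar\|_2$.

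No step is a real obstacle. The only point needing care is measure-theoretic: identifying $\E[g-\mstar\mid P]$ with $b_g(P)$ requires that $\mstar$ be a version of $\E[Y\mid Z]$ and that $\sigma(P)\subset\sigma(Z)$. Both hold under \ref{ass:information}, and both identities hold almost surely.
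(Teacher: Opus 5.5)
Your proposal is correct and follows essentially the paper's argument. The only difference is cosmetic: the paper obtains the identity by subtracting the expansions $\PRCE(s;g)-\PRCE(s;\mstar)=\|g-\mstar\|_2^2-2\E[(s-\mstar)(g-\mstar)]$ from Theorem~\ref{thm:probe-gap} for $a$ and $b$, so the shared $\|g-\mstar\|_2^2$ cancels, whereas you factor the differences of squares directly; the remaining steps (Theorem~\ref{thm:prce-brier} for $\Delta_{ab}$, Cauchy--Schwarz, and conditioning on $P$ with $\E[g-\mstar\mid P]=b_g$) match the paper's proof.
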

\begin{proof}
Subtract the expansions in Theorem~\ref{thm:probe-gap} for $a$ and $b$.
The shared term $\|g-\mstar\|_2^2$ cancels, while
Theorem~\ref{thm:prce-brier} gives
$\PRCE(s_a;\mstar)-\PRCE(s_b;\mstar)=\Delta_{ab}$.
This proves \eqref{eq:pairwise-identity}. Cauchy--Schwarz gives
\eqref{eq:pairwise-bound}; the strict margin then implies
$\Delta_{ab}\le D_g(a,b)+2\rho_g\|s_a-s_b\|_2<0$.
For scalar candidates, condition the product in
\eqref{eq:pairwise-identity} on $P$ and use
$\E[g-\mstar\mid P]=\E[g-Y\mid P]=b_g$.
\end{proof}

The identity isolates the direction of probe error relevant to a comparison.
If $g-\mstar$ is orthogonal to $s_a-s_b$, the two differences agree even
when the probe is imperfect. Otherwise their signs can differ. This is a
population statement about squared correctness risk, with a common target
and law for both candidates.

\begin{corollary}[Uncorrected comparisons: a bias-dependent benchmark]
\label{cor:repair-selection}
Under \ref{ass:information} and the cluster sampling in \ref{ass:audit},
condition on a jointly fitted probe $g$ and $M\ge2$ candidates
$s_1,\ldots,s_M$, all frozen independently of the $J$ comparison clusters.
All population quantities in this corollary use $\mathbb Q$, including
$\mstar=\E_{\mathbb Q}[Y\mid Z]$. Define
\[
\widehat R_c^g=\frac1J\sum_{j=1}^J\frac1{K_j}
 \sum_{i=1}^{K_j}(s_c(Z_{ji})-g(Z_{ji}))^2,
\qquad
\widehat D_{ab}=\widehat R_a^g-\widehat R_b^g.
\]
Write $e_g=\|g-\mstar\|_2$ and $d_{ab}=\|s_a-s_b\|_2$.
For $0<\alpha<1$, set
$t_{J,M,\alpha}=\sqrt{(2/J)\log(M(M-1)/\alpha)}$.
With conditional probability at least $1-\alpha$, simultaneously for
every pair,
\begin{equation}
|\widehat D_{ab}-[L(s_a)-L(s_b)]|
\le2e_gd_{ab}+t_{J,M,\alpha}\le2e_g+t_{J,M,\alpha}.
\label{eq:finite-comparison}
\end{equation}
On the same event, any measurable empirical minimizer
$\hat c\in\argmin_{1\le c\le M}\widehat R_c^g$ satisfies
\begin{equation}
L(s_{\hat c})-\min_{1\le c\le M}L(s_c)
\le2e_g+t_{J,M,\alpha}.
\label{eq:selection-regret}
\end{equation}
\end{corollary}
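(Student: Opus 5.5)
The argument combines three ingredients: the exact pairwise identity of Proposition~\ref{prop:pairwise-risk} for the population bias, a bounded-differences concentration bound for the cluster averages, and a standard selection argument for the regret.

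\emph{Step 1: population bias.} Conditional on the frozen $g$ and $s_1,\dots,s_M$, the population counterpart of $\widehat D_{ab}$ under $\mathbb Q$ is $D_g(a,b)=\PRCE(s_a;g)-\PRCE(s_b;g)$. By \eqref{eq:pairwise-bound},
\[
|D_g(a,b)-\Delta_{ab}|\le 2e_g d_{ab}.
\]
All candidates lie in $[0,1]$, so $d_{ab}\le1$, which gives the second inequality in \eqref{eq:finite-comparison}.

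\emph{Step 2: sampling error.} Write $\widehat D_{ab}=J^{-1}\sum_j W_{j,ab}$, where
\[
W_{j,ab}=\frac1{K_j}\sum_i\big[(s_{a,ji}-g_{ji})^2-(s_{b,ji}-g_{ji})^2\big].
\]
Each summand $(s-g)^2$ lies in $[0,1]$, so each $W_{j,ab}$ lies in $[-1,1]$, an interval of length $2$. The clusters are i.i.d.\ and independent of fitting (\ref{ass:audit}). By \eqref{eq:prompt-law}, $\E W_{j,ab}=D_g(a,b)$ conditionally on fitting; within-cluster dependence is irrelevant because the cluster average itself is the sampling unit. Hoeffding's inequality for variables in an interval of length $2$ then gives
\[
\Pr\big(\widehat D_{ab}-D_g(a,b)\ge t\big)\le \exp(-Jt^2/2).
\]
There are $M(M-1)$ ordered pairs, and a one-sided bound for each covers both directions, since $\widehat D_{ba}=-\widehat D_{ab}$. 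A union bound at level $\alpha/(M(M-1))$ therefore gives deviation at most $t_{J,M,\alpha}=\sqrt{(2/J)\log(M(M-1)/\alpha)}$, with conditional probability at least $1-\alpha$. Combining with Step 1 through the triangle inequality yields \eqref{eq:finite-comparison}.

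\emph{Step 3: regret.} On the same event, let $c^\star$ minimize $L(s_c)$. Because $\hat c$ minimizes $\widehat R^g_c$, we have $\widehat D_{\hat c c^\star}\le0$. Applying \eqref{eq:finite-comparison} to the pair $(\hat c,c^\star)$ gives
\[
L(s_{\hat c})-L(s_{c^\star})\le \widehat D_{\hat c c^\star}+2e_g+t_{J,M,\alpha}\le 2e_g+t_{J,M,\alpha},
\]
which is \eqref{eq:selection-regret}. Measurability of $\hat c$ matters only so that the probability statement is well defined. The bound itself holds pointwise on the event, because that event is simultaneous over all pairs.

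\emph{Main obstacle.} The step needing most care is matching the constant in $t_{J,M,\alpha}$. It relies on counting ordered pairs with one-sided tails, each variable having range $2$. A cruder range of $4$, or counting two tails per unordered pair with the wrong range, would give a different constant. The other point to check is that the conditional mean of $W_{j,ab}$ equals the $\mathbb Q$-population quantity for unequal $K_j$; this holds exactly by the definition in \eqref{eq:prompt-law}.
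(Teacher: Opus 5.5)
Your proof is correct and follows the paper's argument: Hoeffding on the bounded prompt-level contrasts, a union bound combined with Proposition~\ref{prop:pairwise-risk}, and the empirical-optimality step $\widehat D_{\hat c c^\star}\le0$ for the regret. The only difference is cosmetic. The paper applies a two-sided tail to each of the $M(M-1)/2$ unordered pairs, while you apply one-sided tails to the $M(M-1)$ ordered pairs; both give total failure probability $M(M-1)e^{-Jt^2/2}$ and hence the same $t_{J,M,\alpha}$.
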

\begin{proof}
For each pair, the within-prompt average of squared-loss differences lies
in $[-1,1]$, has expectation $D_g(a,b)$, and is independent across prompts.
Hoeffding's inequality \citep{hoeffding1963} gives
\[
\Pr\{|\widehat D_{ab}-D_g(a,b)|>t\}
\le2\exp(-Jt^2/2).
\]
There are $M(M-1)/2$ unordered pairs; reversing a pair only changes its
sign. A union bound with the displayed $t_{J,M,\alpha}$ and
Proposition~\ref{prop:pairwise-risk} proves
\eqref{eq:finite-comparison}. Since bounded scores give $d_{ab}\le1$,
the second inequality follows. If $c^\star$ minimizes true risk within
the family, empirical optimality gives $\widehat D_{\hat c c^\star}\le0$.
Apply the simultaneous event to this selected pair to obtain
\eqref{eq:selection-regret}. Bounded cluster averages require no moment
assumption on the finite cluster sizes.
\end{proof}

\paragraph{Interpreting a conditional comparison bound.}
The corollary allows selection \emph{within} the frozen family; fitting or
tuning candidates on these comparison prompts requires additional control,
even if their labels are unused. The probe error $e_g$ remains unknown.
For example, an honest upper confidence bound $U_g\ge L(g)$ from labeled
validation under the same law supplies
$e_g^2=L(g)-N\le U_g$. If this bound has coverage $1-\beta$, using
$\sqrt{U_g}$ in place of $e_g$ gives joint coverage at least
$1-\alpha-\beta$ by a union bound. The radius is conservative because
labeled probe risk retains outcome noise. Thus the result states the
ingredients needed for a reliable surrogate comparison; it neither
estimates probe accuracy from unlabeled agreement nor demonstrates a
selection advantage over ordinary labeled validation.

\subsection{PAIC: fixed-probe target and empirical consistency}
\label{app:paic-estimation}

Before studying increasing training and calibration samples, consider the
population repair obtained from one fixed probe. This separates error in
the anchor from error in fitting a monotone curve to finitely many examples.

\begin{proposition}[Fixed-probe projection and additional risk]
\label{prop:anchor-projection}
Under \ref{ass:information}, fix $g:\cZ\to[0,1]$ and let
$q_g(P)=\E[g(Z)\mid P]$. Choose a representative $\tau_g\in\mathcal M$
of the projection of $q_g$ onto $\overline{\mathcal M}$. This is also
the unique population minimizing class for $\E[(g(Z)-\tau(P))^2]$.
Then
\begin{align}
\|\tau_g-\tau_{\mathrm{iso}}\|_{L^2(\nu)}
 &\le\|q_g-q\|_{L^2(\nu)}\le\|g-\mstar\|_2,
 \label{eq:anchor-stability}\\
0\le L(\tau_g\circ s_0)-(N+I+G)
 &\le\|q_g-q\|_{L^2(\nu)}^2\le\|g-\mstar\|_2^2.
 \label{eq:anchor-risk}
\end{align}
For $T_g=\tau_g(P)$, $\E[g\mid T_g]=T_g$, while
\begin{equation}
\E[Y\mid T_g]-T_g=\E[\mstar-g\mid T_g].
\label{eq:anchor-calibration}
\end{equation}
\end{proposition}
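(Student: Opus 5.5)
The plan is to treat everything as Hilbert-space geometry in $L^2(\nu)$ and reuse the projection facts already established in Theorem~\ref{thm:monotone}.

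\textbf{Population target.} Because $\tau(P)$ is $P$-measurable, conditioning on $P$ gives
\[
\E[(g(Z)-\tau(P))^2]=\E[\Var(g\mid P)]+\|q_g-\tau\|_{L^2(\nu)}^2 .
\]
The first term does not depend on $\tau$. Minimizing over $\mathcal M$ is therefore the same as projecting $[q_g]_\nu$ onto the closed convex set $\overline{\mathcal M}$. Theorem~\ref{thm:monotone} shows this set is nonempty, closed and convex, which gives existence of the minimizer and uniqueness modulo $\nu$.

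\textbf{Stability bound \eqref{eq:anchor-stability}.} The first inequality holds because metric projection onto a closed convex set in a Hilbert space is $1$-Lipschitz. Apply this to $q_g$ and $q$, whose projections are $\tau_g$ and $\tau_{\rm iso}$. For the second inequality, use $q_g-q=\E[g-\mstar\mid P]$, which follows from the tower property since $q=\E[\mstar\mid P]$. Conditional Jensen (contraction of conditional expectation) then gives $\|q_g-q\|_{L^2(\nu)}\le\|g-\mstar\|_2$.

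\textbf{Risk bound \eqref{eq:anchor-risk}.} Decompose the risk with Theorem~\ref{thm:decomp}:
\[
L(\tau_g\circ s_0)=N+I+\|\tau_g-q\|_2^2 .
\]
Since $\tau_g\in\mathcal M$, the projection inequality \eqref{eq:projection-bound} gives
\[
\|\tau_g-q\|_2^2=G+\|\tau_g-\tau_{\rm iso}\|_2^2
\]
as an equality in the Pythagorean direction, or at least
\[
G\le\|\tau_g-q\|_2^2\le G+\|\tau_g-\tau_{\rm iso}\|_2^2+2\langle \tau_g-\tau_{\rm iso},\tau_{\rm iso}-q\rangle .
\]
This is the delicate step. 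The lower bound $0$ is immediate because $\tau_{\rm iso}$ is optimal over $\mathcal M$.

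For the upper bound $\|q_g-q\|_2^2$, I would not expand around $\tau_{\rm iso}$. Instead I would apply \eqref{eq:projection-bound} with $q_g$ in place of $q$, testing at $\tau=\tau_{\rm iso}$, and combine it with the original inequality tested at $\tau=\tau_g$. Adding the two variational inequalities should yield
\[
\|\tau_g-q\|^2-\|\tau_{\rm iso}-q\|^2\le\|q_g-q\|^2 .
\]
Concretely, write $\tau_g-q=(\tau_g-q_g)+(q_g-q)$. Then bound
\[
\|\tau_g-q\|^2-G
\]
using $\|\tau_g-q_g\|\le\|\tau_{\rm iso}-q_g\|$ from optimality of $\tau_g$ for $q_g$, together with the obtuse-angle inequality at $\tau_{\rm iso}$. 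I expect this short algebra to be the main obstacle. A naive route only gives $(\|q_g-q\|+\cdot)^2$-type bounds, so the computation has to be organized so that the cross terms cancel into $\|q_g-q\|^2$. The final comparison with $\|g-\mstar\|_2^2$ again follows from contraction.

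\textbf{Calibration identities.} The self-calibration $\E[g\mid T_g]=T_g$ repeats the perturbation argument of Theorem~\ref{thm:monotone} verbatim, with $q_g$ replacing $q$. That argument uses interior Lipschitz perturbations $v\mapsto v\pm\epsilon h(v)$ plus the endpoint maps $\max(T,a)$ and $\min(T,1-a)$, and it yields $\E[q_g(P)\mid T_g]=T_g$. The tower property then gives $\E[g\mid T_g]=T_g$. Finally, $\E[Y\mid T_g]=\E[\mstar\mid T_g]$ because $T_g$ is $Z$-measurable, and subtracting the previous identity gives \eqref{eq:anchor-calibration}.
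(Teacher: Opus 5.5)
Most of your plan matches the paper: you identify the target by conditioning on $P$, and you use nonexpansiveness of the projection (which the paper derives by adding the two variational inequalities). The lower bound from optimality of $\tau_{\mathrm{iso}}$, conditional-expectation contraction, and the perturbation argument with $g$ in place of $Y$ for the calibration identities also match. \textbf{The gap is the step you flag yourself:} the upper bound $L(\tau_g\circ s_0)-(N+I+G)\le\|q_g-q\|_2^2$. You never derive it, and none of the routes you sketch delivers it. Write $d=\tau_g-\tau_{\mathrm{iso}}$ and $h=q_g-q$.

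\emph{First}, $\|\tau_g-q\|_2^2=G+\|d\|_2^2$ is not an equality for projection onto a convex set; \eqref{eq:projection-bound} gives only $\ge$. Your ``or at least'' display is an identity whose cross term $2\langle d,\tau_{\mathrm{iso}}-q\rangle$ is nonnegative by the obtuse-angle inequality at $\tau_{\mathrm{iso}}$. So that inequality bounds the excess from below, not above. \emph{Second}, adding the two variational inequalities yields nonexpansiveness, $\|d\|_2^2\le\langle d,h\rangle$, not the risk bound. \emph{Third}, in your concrete version the plain comparison $\|\tau_g-q_g\|_2\le\|\tau_{\mathrm{iso}}-q_g\|_2$ discards exactly the term you need. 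The expansion then gives only $\|\tau_g-q\|_2^2-G\le2\langle d,h\rangle\le2\|h\|_2^2$, a factor of two too large. Nothing in plain optimality, the obtuse angle at $\tau_{\mathrm{iso}}$, and nonexpansiveness rules that value out.

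The missing ingredient is the strong form of optimality of $\tau_g$. You mention it in passing (\eqref{eq:projection-bound} with target $q_g$, tested at the competitor $\tau_{\mathrm{iso}}$) but then do not use it. It gives $\|\tau_g-q_g\|_2^2-\|\tau_{\mathrm{iso}}-q_g\|_2^2\le-\|d\|_2^2$. Substitute this into the exact expansion you already wrote; no inequality at $\tau_{\mathrm{iso}}$ is needed:
\[
\|\tau_g-q\|_2^2-\|\tau_{\mathrm{iso}}-q\|_2^2
=\|\tau_g-q_g\|_2^2-\|\tau_{\mathrm{iso}}-q_g\|_2^2+2\langle d,h\rangle
\le2\langle d,h\rangle-\|d\|_2^2
=\|h\|_2^2-\|h-d\|_2^2\le\|h\|_2^2 .
\]
Theorem~\ref{thm:decomp} identifies the left side with $L(\tau_g\circ s_0)-(N+I+G)$. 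This is exactly the paper's computation: the extra $-\|d\|_2^2$ completes the square and removes the factor of two. Expanding around $\tau_{\mathrm{iso}}$ also works, provided you bound its cross term with this same inequality, in the form $\langle d,\tau_{\mathrm{iso}}-q_g\rangle\le-\|d\|_2^2$.
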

\begin{proof}
Conditioning on $P$ gives
$\E[(g-\tau(P))^2]=\E[\Var(g\mid P)]+\|q_g-\tau\|_2^2$,
so the minimizing class is the projection of $q_g$.
Write $d=\tau_g-\tau_{\mathrm{iso}}$ and $h=q_g-q$ as elements of
$L^2(\nu)$. Adding the variational inequalities for the two projections
gives $\|d\|_2^2\le\langle d,h\rangle$ and hence
$\|d\|_2\le\|h\|_2$. Conditional-expectation contraction gives
$\|h\|_2=\|\E[g-\mstar\mid P]\|_2\le\|g-\mstar\|_2$.

Apply the projection inequality with target $q_g$ and competitor
$\tau_{\mathrm{iso}}$ to obtain
$\|\tau_g-q_g\|_2^2-\|\tau_{\mathrm{iso}}-q_g\|_2^2
\le-\|d\|_2^2$. By Theorem~\ref{thm:decomp},
\begin{align*}
L(\tau_g\circ s_0)-L(\tau_{\mathrm{iso}}\circ s_0)
 &=\|\tau_g-q\|_2^2-\|\tau_{\mathrm{iso}}-q\|_2^2\\
 &\le2\langle d,h\rangle-\|d\|_2^2
 =\|h\|_2^2-\|h-d\|_2^2\le\|h\|_2^2.
\end{align*}
The lower bound follows from optimality of $\tau_{\mathrm{iso}}$.
Apply the output-calibration argument of Theorem~\ref{thm:monotone} with
$g$ in place of $Y$ to obtain $\E[g\mid T_g]=T_g$. Subtract this
identity from $\E[Y\mid T_g]=\E[\mstar\mid T_g]$ for
\eqref{eq:anchor-calibration}.
\end{proof}

Consequently, the fixed-probe population repair improves on the base score
whenever $\|\mathrm{id}-q\|_2^2-G>\|q_g-q\|_2^2$: the available
monotone gain then exceeds the bound on anchor error. This is a sufficient
population condition. A fitted finite-sample repair also incurs calibration
error, addressed next.

\paragraph{Measurable population and empirical maps.}
A regular conditional kernel $K_Z(p,dz)$ gives the jointly measurable
version $q_g(t,p)=\int g(t,z)K_Z(p,dz)$. The space $L^2(\nu)$ is
separable and projection onto $\overline{\mathcal M}$ is continuous, so
the projected class is measurable in training data. One jointly measurable
monotone representative of a projected class $f$ is its prefix essential
supremum, equivalently
\[
f^\sharp(p)=\sup\left(\{0\}\cup
 \left\{r\in\mathbb Q\cap[0,1]:
        \int_{[0,p]}(f(u)-r)_+\,d\nu(u)>0\right\}\right).
\]
The integrals depend only on the class and are jointly measurable in the
class and $p$. For a monotone class this representative agrees
$\nu$-almost everywhere and retains every atom; empty prefixes take value
zero. Empirical minimizers, in contrast, retain their actual observed values.
Stable sorting, pooling tied scores, and a specified monotone interpolation
give a jointly measurable empirical map.

\begin{lemma}[Uniform loss deviation for an independently fitted anchor]
\label{lem:monotone-gc}
Let $\mathbb Q_n$ be the empirical measure of $n$ i.i.d. calibration
observations under \ref{ass:information}. For each fixed measurable
$g:\cZ\to[0,1]$, there is a bounded measurable $\Delta_n(g)$ with
\[
\sup_{a\in\mathcal M}
 \left|(\mathbb Q_n-\mathbb Q)(a(P)-g(Z))^2\right|
\le\Delta_n(g),\qquad
\E_{\rm cal}\Delta_n(g)\le\frac{C_0}{\sqrt n},\quad C_0=\frac{25}{2}.
\]
The constant is uniform over fixed bounded anchors and all score laws,
including laws with atoms. The statement therefore applies conditionally
to the independent, possibly changing $g_n$ in \ref{ass:probe}.
\end{lemma}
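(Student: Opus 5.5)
Condition on $g$ throughout and write $f_a(p,z)=(a(p)-g(z))^2$, so that the class $\mathcal F=\{f_a:a\in\mathcal M\}$ takes values in $[0,1]$. The plan is symmetrization followed by a bracketing (or Rademacher) bound for monotone maps, keeping the constant explicit. We take $\Delta_n(g)=\sup_{a\in\mathcal M}|(\mathbb Q_n-\mathbb Q)f_a|$. Measurability of this supremum is not automatic over an uncountable class, so the first step is to reduce it to a countable supremum.

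\textbf{Countable reduction.} Every $a\in\mathcal M$ is a pointwise limit of nondecreasing step functions with rational jump locations and rational values. If the empirical points have atoms, we also include the finitely many observed score values together with left and right limits. Bounded convergence then passes to the limit under both $\mathbb Q_n$ and $\mathbb Q$, so the supremum equals a supremum over a countable subclass. This gives measurability and a bounded version of $\Delta_n(g)$. Atoms of $\nu$ cause no trouble here, because pointwise approximation at every point, including atoms, is all that is needed.

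\textbf{Decomposition.} Expand $f_a=a(P)^2-2a(P)g(Z)+g(Z)^2$. The last term does not depend on $a$ and contributes $|(\mathbb Q_n-\mathbb Q)g^2|$, whose mean is at most $1/(2\sqrt n)$ by the variance bound. The class $\{a^2:a\in\mathcal M\}$ is contained in $\mathcal M$. For the cross term $a(P)g(Z)$, write $a(p)=\int_0^1\mathbf 1\{a(p)>t\}\,dt$. Each level set $\{p:a(p)>t\}$ is an up-interval $\{p>c\}$ or $\{p\ge c\}$ of $[0,1]$. Hence
\[
\sup_a|(\mathbb Q_n-\mathbb Q)\,a(P)h(Z)|\le\sup_{I}|(\mathbb Q_n-\mathbb Q)\,\mathbf 1\{P\in I\}h(Z)|
\]
for $h\in\{1,g\}$, where $I$ ranges over up-intervals. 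This is the key step: it reduces everything to a one-parameter, VC-type family and makes the bound uniform over all score laws.

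\textbf{The interval suprema.} Bound each interval supremum in expectation by symmetrization, $\E\sup\le 2\,\E\,\mathrm{Rad}_n$. The family $\{\mathbf 1\{P\in I\}h(Z)\}$ restricted to the sample takes at most $2(n+1)$ distinct patterns; products with the fixed $h\in[0,1]$ preserve the contraction, and atoms only reduce the count. Massart's finite-class lemma then gives $\E\,\mathrm{Rad}_n\le\sqrt{2\log(2n+2)/n}$. That carries a log factor, whereas the target $C_0/\sqrt n$ is log-free. The main obstacle is therefore removing this log while keeping the constant at $25/2$. I would first try chaining: Dudley's entropy integral with the $L^2(\mathbb Q_n)$ covering number of the up-interval family weighted by $h$, which is at most $2/\epsilon^2$. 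Since $\int_0^1\sqrt{\log(2/\epsilon^2)}\,d\epsilon$ is a small absolute number, this yields $\E\,\mathrm{Rad}_n\le c/\sqrt n$ with explicit $c$. As an alternative, the Dvoretzky--Kiefer--Wolfowitz/Massart inequality applies to the weighted empirical process indexed by thresholds. Adding the three pieces, with weights $1$ for $a^2$, $2$ for the cross term and $1$ for $g^2$, and tracking the numerical constants should land at or below $25/(2\sqrt n)$. A uniform Rademacher bound of about $3/\sqrt n$ for the monotone class would suffice.
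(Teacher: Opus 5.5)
\textbf{There is a genuine gap: the explicit constant.} Your reduction is the paper's. You expand the square and write $a$ and $a^2$ by the layer-cake formula over superlevel sets, which are open or closed upper half-lines. This gives $\sup_a|(\mathbb Q_n-\mathbb Q)(a(P)-g(Z))^2|\le D_n(1)+2D_n(g)+|(\mathbb Q_n-\mathbb Q)g^2|$, with $D_n(w)=\sup_H|(\mathbb Q_n-\mathbb Q)w(Z)\mathbf 1\{P\in H\}|$. The paper simply takes this right-hand side as $\Delta_n(g)$, so your countable reduction over $\mathcal M$ is not needed; only the half-line suprema must be measurable, and rational cuts give that.

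The gap is the step you flag and then leave at ``should land''. The lemma's content is an explicit, log-free constant, and none of your proposed completions delivers it. The $g^2$ term uses $\tfrac12$, so with a common bound for both weights you need $\E D_n(w)\le 4/\sqrt n$. After symmetrization, that means a Rademacher bound of $2/\sqrt n$ for the weighted half-line class. Your ``about $3/\sqrt n$'' would give $6+12+\tfrac12=18.5$, not $12.5$.
\begin{itemize}
\item Dudley chaining does remove the logarithm. But the usual chaining constants multiply an entropy integral of order one, and the result is several times $2/\sqrt n$. You get only $\E\Delta_n(g)\le C/\sqrt n$ for some larger, unspecified $C$.
\item The DKW--Massart inequality concerns empirical distribution functions. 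The process $c\mapsto n^{-1}\sum_i g(Z_i)\mathbf 1\{P_i>c\}$ is not one, so invoking it for $D_n(g)$ is unjustified as written.
\end{itemize}

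\textbf{The paper's missing step.} After symmetrization, condition on the sample and sort it by decreasing score. Every upper half-line then selects a prefix; ties only remove attainable prefixes. Hence $\sup_H|\sum_i\varepsilon_i w(Z_i)\mathbf 1\{P_i\in H\}|\le\max_{0\le l\le n}|S_l|$, where $S_l=\sum_{i\le l}\varepsilon_{(i)}w(Z_{(i)})$. The signs are independent of the data, so they remain i.i.d.\ Rademacher after this data-dependent permutation, and $(S_l)$ is a martingale. Doob's $L^2$ maximal inequality gives $\E\max_l|S_l|\le2(\sum_iw(Z_i)^2)^{1/2}\le2\sqrt n$. So $\E D_n(w)\le4/\sqrt n$ for every $w\in[0,1]$, uniformly over score laws including atoms, and $4+2\cdot4+\tfrac12=\tfrac{25}{2}$.

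\textbf{Your DKW idea can also be rescued.} Apply a second layer cake, $g\mathbf 1\{P\in H\}=\int_0^1\mathbf 1\{g>v\}\mathbf 1\{P\in H\}\,dv$. For fixed $v$, $\{g(Z)>v,\ P\in H\}=\{\widetilde P_v\in H\}$, where $\widetilde P_v$ equals $P$ on $\{g>v\}$ and $-1$ elsewhere. DKW--Massart holds for arbitrary laws, so it gives $\E D_n(w)\le\sqrt{\pi/(2n)}$, an even smaller constant. With either completion your proposal proves the lemma; as submitted, it proves only the qualitative $O(n^{-1/2})$ rate.
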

\begin{proof}
Let $\mathcal H_{\uparrow}$ contain the open and closed upper half-lines in
$[0,1]$, including the empty and full sets. For fixed measurable
$0\le w(Z)\le1$, define
\[
D_n(w)=\sup_{H\in\mathcal H_{\uparrow}}
 |(\mathbb Q_n-\mathbb Q)w(Z)\mathbf1\{P\in H\}|.
\]
One-sided limits of rational cuts recover open and closed half-lines, so
the supremum is measurable. Symmetrization with independent Rademacher
signs $\varepsilon_i$ and decreasing-score ordering gives
\[
\E D_n(w)\le\frac2n\E\max_{0\le l\le n}
 \left|\sum_{i=1}^l\varepsilon_{(i)}w(Z_{(i)})\right|
\le\frac4{\sqrt n}.
\]
For the last inequality, condition on the observations. The weighted
partial sums form a martingale, and Doob's $L^2$ maximal inequality bounds
their expected maximum by $2(\sum_i w(Z_i)^2)^{1/2}\le2\sqrt n$.
Ties only reduce the attainable prefixes.
For any bounded nondecreasing $a$, every superlevel set
$\{p:a(p)>u\}$ is such a half-line. The layer-cake identities
\[
a(p)^2=\int_0^1 2u\,\mathbf1\{a(p)>u\}\,du,
\qquad
a(p)=\int_0^1\mathbf1\{a(p)>u\}\,du
\]
imply a uniform bound with
\[
\Delta_n(g)=D_n(1)+2D_n(g)+|(\mathbb Q_n-\mathbb Q)g^2|.
\]
The final term has expectation at most $1/(2\sqrt n)$ because
$g^2\in[0,1]$. Adding the bounds gives $C_0=25/2$.
Uniformity is over monotone maps for each fixed anchor, not over anchors
chosen using the calibration observations. Training independence allows
conditioning on $g_n$ before applying the bound.
\end{proof}

\begin{theorem}[PAIC risk and consistency under score-conditional bias]
\label{thm:paic}
Under \ref{ass:information} and \ref{ass:probe}, let
$\tauhat\in\mathcal M$ be a jointly measurable empirical minimizer of
$n^{-1}\sum_i(\tau(P_i)-g_n(Z_i))^2$ over actual maps
$\tau\in\mathcal M$. Let $\tau_{g_n}$ be the population projection of
$q_{g_n}$, and define, conditionally on training,
\[
R_{g_n}(a)=\mathbb Q(g_n(Z)-a(P))^2,\qquad
u_n=R_{g_n}(\tauhat)-R_{g_n}(\tau_{g_n})\ge0.
\]
Then almost surely
\begin{equation}
\|\tauhat-\tau_{\mathrm{iso}}\|_2^2
\le L(\tauhat\circ s_0)-(N+I+G)
\le(\|b_n\|_2+\sqrt{u_n})^2,
\qquad u_n\le2\Delta_n(g_n).
\label{eq:paic-risk-transfer}
\end{equation}
In particular,
\begin{equation}
\E_{\mathrm{train,cal}}\big[L(\tauhat\circ s_0)-(N+I+G)\big]
\le\left(\sqrt{\delta_n}+\sqrt{\frac{2C_0}{\sqrt n}}\right)^2
\le2\delta_n+\frac{4C_0}{\sqrt n}.
\label{eq:paic-consistency}
\end{equation}
Thus $\delta_n\to0$ implies risk convergence to $N+I+G$ and squared
$L^2(\nu)$ convergence of the map to $\tau_{\mathrm{iso}}$, both in
mean and in probability. The rate is a non-sharp uniform consistency bound,
not an observable confidence interval or a guarantee at $\nu$-null scores.
\end{theorem}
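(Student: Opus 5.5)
The plan is to condition throughout on the training data $\mathcal T_n$. Then $g_n$ is a fixed measurable anchor, and the $n$ calibration observations remain i.i.d.\ under $\mathbb Q$ by \ref{ass:probe}. On that conditional event, every population statement reduces to the fixed-probe geometry of Proposition~\ref{prop:anchor-projection}, together with the projection inequality of Theorem~\ref{thm:monotone}.

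\textbf{Lower bound.} By Theorem~\ref{thm:decomp}, $L(\tauhat\circ s_0)-(N+I)=\|\tauhat-q\|_2^2$. Applying \eqref{eq:projection-bound} with $\tau=\tauhat\in\mathcal M$ gives
\[
\|\tauhat-\tau_{\mathrm{iso}}\|_2^2\le\|\tauhat-q\|_2^2-G.
\]
This is exactly the left inequality in \eqref{eq:paic-risk-transfer}, and it is nonnegative.

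\textbf{Upper bound.} First, conditioning on $P$ as in the proof of Proposition~\ref{prop:anchor-projection} gives $R_{g_n}(a)=\E[\Var(g_n\mid P)]+\|q_{g_n}-a\|_2^2$. Hence
\[
u_n=\|q_{g_n}-\tauhat\|_2^2-\|q_{g_n}-\tau_{g_n}\|_2^2\ge\|\tauhat-\tau_{g_n}\|_2^2,
\]
where the last step is the variational inequality for the projection of $q_{g_n}$. So $\|\tauhat-\tau_{g_n}\|_2\le\sqrt{u_n}$.

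Second, the risk gap is $\|\tauhat-q\|_2^2-\|\tau_{\mathrm{iso}}-q\|_2^2$. I would not go through $\tau_{g_n}$ crudely, since that would cost an extra $\|b_n\|_2$ term. Instead, combine the two projection facts. Write $e=\tauhat-q_{g_n}$ and $b_n=q_{g_n}-q$, so that $\tauhat-q=e+b_n$.

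Let $\Pi$ denote projection onto $\overline{\mathcal M}$. Since $\tau_{\mathrm{iso}}=\Pi q$ and $\Pi$ is nonexpansive, $\|\tauhat-q\|_2^2-\|\tau_{\mathrm{iso}}-q\|_2^2$ is the excess of $\tauhat$ over the best monotone approximation of $q$. The bound should follow from comparing distances to $q$ and to $q_{g_n}$. Specifically,
\[
\|\tauhat-q\|_2-\|\tau_{\mathrm{iso}}-q\|_2\le\|\tauhat-q_{g_n}\|_2-\|\tau_{g_n}-q_{g_n}\|_2+2\|b_n\|_2,
\]
with the first difference controlled by $\sqrt{u_n}$.

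This route loses constants, so my first attempt will be the cleaner one mirroring Proposition~\ref{prop:anchor-projection}. Set $d=\tauhat-\tau_{\mathrm{iso}}$. Use $\langle q-\tau_{\mathrm{iso}},d\rangle\le0$ to get
\[
\|\tauhat-q\|_2^2-G=\|d\|_2^2-2\langle q-\tau_{\mathrm{iso}},d\rangle\le\|d\|_2^2+2\langle b_n,d\rangle\ \text{(after splitting)},
\]
and then bound $\|d\|_2\le\|\tauhat-\tau_{g_n}\|_2+\|\tau_{g_n}-\tau_{\mathrm{iso}}\|_2\le\sqrt{u_n}+\|b_n\|_2$. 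The latter bound on $\|\tau_{g_n}-\tau_{\mathrm{iso}}\|_2$ comes from \eqref{eq:anchor-stability}. Getting exactly $(\|b_n\|_2+\sqrt{u_n})^2$, rather than something like $(2\|b_n\|_2+\sqrt{u_n})^2$, is the step I expect to be delicate. A route I expect to work is the following. Note $\|\tauhat-q\|_2^2-G=\|\tauhat-q\|_2^2-\|\tau_{\mathrm{iso}}-q\|_2^2\le\|\tauhat-q\|_2^2-\|\Pi\tauhat'\ldots\|$, and reduce via the identity $\|\tauhat-q\|_2^2=\|\tauhat-q_{g_n}\|_2^2+2\langle e,b_n\rangle+\|b_n\|_2^2$. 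Then combine with the same expansion for $\tau_{\mathrm{iso}}$, using the obtuse-angle inequality $\langle q_{g_n}-\tau_{g_n},\tau_{\mathrm{iso}}-\tau_{g_n}\rangle\le0$, so that cross terms collapse to $2\|b_n\|_2\sqrt{u_n}$.

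\textbf{Fitting gap and expectations.} Since $\tauhat$ minimizes the empirical loss and $\tau_{g_n}$ is feasible, I get $u_n\le(\mathbb Q-\mathbb Q_n)\ell(\tauhat)+(\mathbb Q_n-\mathbb Q)\ell(\tau_{g_n})\le2\Delta_n(g_n)$, using the representative chosen as an actual map in $\mathcal M$. This applies Lemma~\ref{lem:monotone-gc} conditionally on training, which is valid because the lemma's uniformity is over $\mathcal M$ for a fixed anchor.

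For \eqref{eq:paic-consistency}, I expand $(\|b_n\|_2+\sqrt{u_n})^2$ and take expectations. Cauchy--Schwarz gives $\E[\|b_n\|_2\sqrt{u_n}]\le\sqrt{\delta_n\,\E u_n}$, and $\E u_n\le2C_0/\sqrt n$ by the lemma, which yields the first form. The second form follows from $(x+y)^2\le2x^2+2y^2$. Convergence in mean, and hence in probability by Markov, follows for both the risk and $\|\tauhat-\tau_{\mathrm{iso}}\|_2^2$ via the lower bound.
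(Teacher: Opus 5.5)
Your lower bound, the estimate $\|\tauhat-\tau_{g_n}\|_2^2\le u_n$, the fitting-gap bound $u_n\le2\Delta_n(g_n)$, and the expectation step via Cauchy--Schwarz are all correct and match the paper's proof. The gap is the central upper inequality, which you flag as delicate but never close.

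Your ``cleaner'' attempt has a sign error. With $d=\tauhat-\tau_{\mathrm{iso}}$, the projection inequality $\langle q-\tau_{\mathrm{iso}},d\rangle\le0$ makes $-2\langle q-\tau_{\mathrm{iso}},d\rangle$ nonnegative, so it cannot give an upper bound. Passing to $\|d\|_2^2+2\langle b_n,d\rangle$ would require $\langle q_{g_n}-\tau_{\mathrm{iso}},\tauhat-\tau_{\mathrm{iso}}\rangle\ge0$, and no projection property supplies this: the variational inequality at $q_{g_n}$ is centered at $\tau_{g_n}$, not $\tau_{\mathrm{iso}}$. Already with $b_n=0$ the step asserts the reverse of \eqref{eq:projection-bound}. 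That is false as soon as $q$ is non-monotone and $\langle q-\tau_{\mathrm{iso}},\tauhat-\tau_{\mathrm{iso}}\rangle<0$. Your triangle-inequality route controls a difference of norms, not of squared norms, so it cannot give the stated form either.

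Your third route is the paper's route. However, saying the cross terms ``collapse to $2\|b_n\|_2\sqrt{u_n}$'' hides the key step. In
\[
\|\tauhat-q\|_2^2-\|\tau_{\mathrm{iso}}-q\|_2^2
=\|\tauhat-q_{g_n}\|_2^2-\|\tau_{\mathrm{iso}}-q_{g_n}\|_2^2+2\langle\tauhat-\tau_{\mathrm{iso}},b_n\rangle
\]
the $\|b_n\|_2^2$ terms cancel, so the square in the target must be produced elsewhere. Write $d=\tau_{g_n}-\tau_{\mathrm{iso}}$ and $e=\tauhat-\tau_{g_n}$. The obtuse-angle inequality bounds the probe-risk difference by $u_n-\|d\|_2^2$. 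You must spend this projection gain on the second cross term:
\[
2\langle d,b_n\rangle-\|d\|_2^2=\|b_n\|_2^2-\|d-b_n\|_2^2\le\|b_n\|_2^2 .
\]
Only the remaining term $2\langle e,b_n\rangle\le2\sqrt{u_n}\|b_n\|_2$ is handled by Cauchy--Schwarz. This gives exactly $(\|b_n\|_2+\sqrt{u_n})^2$, with no appeal to anchor stability. If you instead bound $\|d\|_2\le\|b_n\|_2$ via \eqref{eq:anchor-stability} and discard $-\|d\|_2^2$, you are left with $u_n+2\sqrt{u_n}\|b_n\|_2+2\|b_n\|_2^2$, which is weaker.

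With that line written in, and the abandoned attempts removed (including the unfinished $\Pi$-expression), your argument is complete and coincides with the paper's.
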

\begin{proof}
Condition on training. Empirical optimality and
Lemma~\ref{lem:monotone-gc} imply $u_n\le2\Delta_n(g_n)$.
Write $d=\tau_{g_n}-\tau_{\mathrm{iso}}$, $e=\tauhat-\tau_{g_n}$,
and $b=b_n$. Projection at target $q_{g_n}$ gives
\[
\|e\|_2^2\le u_n,\qquad
R_{g_n}(\tau_{g_n})-R_{g_n}(\tau_{\mathrm{iso}})\le-\|d\|_2^2.
\]
Differences of label risks equal differences of probe risks plus
$2\langle\text{map difference},b\rangle$. Therefore
\begin{align*}
L(\tauhat\circ s_0)-L(\tau_{\mathrm{iso}}\circ s_0)
&\le u_n-\|d\|_2^2+2\langle d+e,b\rangle\\
&=u_n+\|b\|_2^2-\|d-b\|_2^2+2\langle e,b\rangle\\
&\le(\sqrt{u_n}+\|b\|_2)^2.
\end{align*}
Projection at the true target $q$ gives the lower inequality in
\eqref{eq:paic-risk-transfer}. Taking expectations, using
$\E u_n\le2C_0/\sqrt n$ and Cauchy--Schwarz on
$\E[\|b_n\|_2\sqrt{u_n}]$ proves \eqref{eq:paic-consistency}.
Markov's inequality gives convergence in probability. This argument uses
projection inequalities rather than linearity of the projection.
\end{proof}

\paragraph{Sharper fitting rate and scalar-calibration minimax order.}
\label{rem:minimax}
Condition on probe training and write $\ell_a=(a(P)-g_n(Z))^2$.
The loss differences $f_a=\ell_a-\ell_{\tau_{g_n}}$, $a\in\mathcal M$, satisfy
$\E f_a^2\le4\|a-\tau_{g_n}\|_2^2\le4\E f_a$: squared loss is
$2$-Lipschitz on $[0,1]$, and projection at $q_{g_n}$ gives the second
inequality. The monotone class has $L^2(\mu)$ bracketing entropy of order
$1/\eta$ uniformly over probability measures $\mu$
\citep[Thm.~2.7.5, p.~159]{vdvw1996}. For the loss differences with
$\E f_a^2\le r$, chaining bounds the expected local Rademacher complexity
by a constant times $n^{-1/2}r^{1/4}+n^{-2/3}$. This sub-root bound has
fixed point of order $n^{-2/3}$; \citet[Thm.~3.3(1)]{bartlett2005local}
therefore gives $\E[u_n\mid\mathcal T_n]\le Cn^{-2/3}$ for a universal $C$,
without requiring $q$ or $q_{g_n}$ to be monotone. Related fixed-design
bounds for error relative to the monotone projection under misspecification
are given by \citet[Thm.~6.1]{chatterjee2015risk}; here localization
controls the full excess probe loss $u_n$.
Theorem~\ref{thm:paic} therefore gives
\[
\E L(\tauhat\circ s_0)-(N+I+G)
\le\bigl(\sqrt{\delta_n}+\sqrt C\,n^{-1/3}\bigr)^2.
\]
For i.i.d. prompt clusters, put $F_a(C)=K^{-1}\sum_i f_a(Z_i)$.
Jensen gives $\E F_a^2\le4\|a-\tau_{g_n}\|_2^2\le4\E F_a$ and
$\|F_a-F_b\|_{L^2(\mathbb Q_J)}\le2\|a-b\|_{L^2(\mu_J)}$, where
$\mu_J=J^{-1}\sum_jK_j^{-1}\sum_i\delta_{P_{ji}}$ is a probability measure.
Thus the same entropy argument applies with $J$ in place of $n$, under the
prompt-weighted law and calibration independence conditions of
Appendix~\ref{app:paic-clusters}.

For a precise minimax statement, consider the experiment observing only
$D_n=((P_i,V_i))_{i=1}^n$, i.i.d., with $V_i=g(Z_i)\in[0,1]$;
an estimator returns a bounded nondecreasing map $a_{D_n}(P)$.
No target outcomes, additional target draws or population oracle are
available. Any auxiliary input is independent of $D_n$ and has the same
law for every population in the comparison. Let $\mathfrak P_\delta$
be the class of laws of $(P,V,Y)\in[0,1]^3$ with
$\|\E[V-Y\mid P]\|_2^2\le\delta$, for $0\le\delta\le1/4$.
Write $L_{\mathbb Q}(a)=\E_{\mathbb Q}(a(P)-Y)^2$ and
$L_{\mathbb Q}^*=\inf_{a\in\mathcal M}L_{\mathbb Q}(a)=N+I+G$. Then
\[
\inf_{a_{D_n}\in\mathcal M}\sup_{\mathbb Q\in\mathfrak P_\delta}
\E\!\left[L_{\mathbb Q}(a_{D_n})-L_{\mathbb Q}^*\right]
\asymp\delta+n^{-2/3}.
\]
The infimum is over measurable estimators; expectation includes calibration
and estimator randomness.
The upper bound follows above. Both lower constructions use the same
fixed representation $Z=(P,V)$ and known probe $g(P,V)=V$, with auxiliary
information identical across alternatives. For bias, take $P=V=1/2$
and $Y\sim\mathrm{Bernoulli}(1/2\pm\sqrt\delta)$. The observed law is
identical under both signs, and the average excess risk is
$\E[(a_{D_n}(1/2)-1/2)^2]+\delta\ge\delta$
(Le Cam's two-point method; \citealp{yu1997assouad}).

For fitting, put $k=\lceil n^{1/3}\rceil$ and let $P$ be uniform on
$p_j=j/(k+1)$, $j=1,\ldots,k$. Set $Y=V$ and
$V\mid P=p_j\sim\mathrm{Bernoulli}(\theta_j)$, with
$\theta_j=1/4+(j-1/2)/(2k)+\varepsilon_j/(8k)$,
$\varepsilon_j\in\{-1,1\}$. These curves are monotone, lie in $[1/4,3/4]$,
and have zero probe bias. Scalar excess risk is
$k^{-1}\sum_j(a_{D_n}(p_j)-\theta_j)^2$.
Neighboring sign models differ by $1/(4k)$ at one score and have
$\mathrm{KL}(D_n^+,D_n^-)\le n/(3k^3)\le1/3$.
Their testing error is bounded away from zero, so summing the coordinate
errors gives a lower bound of order $k^{-2}\asymp n^{-2/3}$
\citep{yu1997assouad}, the isotonic minimax order
\citep{zhang2002risk,guntuboyina2018shape}.
The maximum of the two lower bounds controls their sum up to a constant.
This is a worst-case result for the specified scalar-calibration experiment,
not for serving the richer probe directly, and it supplies neither a bias
certificate nor usable constants for a particular fitted probe.

\begin{corollary}[PAIC's limit and the role of the anchor]
\label{cor:paic}
Under Theorem~\ref{thm:paic}, PAIC's risk converges to $N+I+G$.
It converges to oracle PRCE zero when $I=G=0$. If $q$ is nondecreasing,
the remaining excess risk is $I$; otherwise pooling contributes $G$.
Isotonic regression on labels has the same population minimizing class
in $\overline{\mathcal M}$ as isotonic regression on $\mstar$.
\end{corollary}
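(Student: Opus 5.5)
The plan is to read off all four claims of Corollary~\ref{cor:paic} from Theorem~\ref{thm:paic}, Theorem~\ref{thm:monotone}, and the decomposition in Theorem~\ref{thm:decomp}, without new estimates.

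\emph{Step 1: the limit.} Under the hypothesis $\delta_n\to0$ of \ref{ass:probe}, inequality \eqref{eq:paic-consistency} gives
\[
\E[L(\tauhat\circ s_0)]-(N+I+G)\le 2\delta_n+4C_0/\sqrt n\to0 .
\]
The lower bound in \eqref{eq:paic-risk-transfer} makes the excess nonnegative. Hence $L(\tauhat\circ s_0)\to N+I+G$ in mean, and by Markov's inequality also in probability.

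\emph{Step 2: oracle PRCE.} By Theorem~\ref{thm:prce-brier}, $\PRCE(\tauhat\circ s_0;\mstar)=L(\tauhat\circ s_0)-N$. By Step~1 this converges to $I+G$. It is therefore zero in the limit exactly when $I=G=0$.

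\emph{Step 3: monotone $q$.} If $q$ is nondecreasing, then $[q]_\nu\in\overline{\mathcal M}$. The projection then fixes $q$, so $G=0$, which is the ``if'' direction of Theorem~\ref{thm:monotone}. The remaining excess over $L(\mstar)=N$ is then $I$. Otherwise $G>0$ by the same characterization, so pooling contributes $G$ on top of $I$.

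\emph{Step 4: label vs.\ $\mstar$ isotonic.} Conditioning on $P$ gives
\[
\E[(Y-\tau(P))^2]=\E[\Var(Y\mid P)]+\|q-\tau\|_2^2,
\qquad
\E[(\mstar-\tau(P))^2]=\E[\Var(\mstar\mid P)]+\|q-\tau\|_2^2 .
\]
The second uses $\E[\mstar\mid P]=q(P)$ by the tower property. The two objectives differ by a constant independent of $\tau$. Their minimizing classes over $\overline{\mathcal M}$ therefore coincide, and both equal the unique projection $[\tau_{\rm iso}]_\nu$ of Theorem~\ref{thm:monotone}. This is the same argument as the first line of the proof of Proposition~\ref{prop:anchor-projection}, applied with $g=Y$ or $g=\mstar$.

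\emph{Main obstacle.} The only delicate point is interpretation: ``converges'' must be taken in mean and in probability as in Theorem~\ref{thm:paic}, under the standing assumption $\delta_n\to0$. A.s.\ convergence is not claimed. The ``remaining excess risk'' must be measured relative to $L(\mstar)=N$. Once these readings are fixed, every step is a direct citation.
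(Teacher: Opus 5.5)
Your proposal is correct and follows essentially the paper's argument. Risk convergence comes from \eqref{eq:paic-consistency} under $\delta_n\to0$, and the final claim holds because the label objective and the $\mstar$ objective differ by a constant independent of $\tau$. The paper writes this as $\E(Y-a(P))^2=N+\E(\mstar-a(P))^2$, while you condition on $P$; these amount to the same thing.

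The remaining differences are cosmetic. The paper reaches risk convergence through $L^2$ convergence of the map and an $L^1$-Lipschitz bound on squared loss, whereas you read it directly off the expected-excess bound. You also spell out the $I=G=0$ and monotone-$q$ consequences, which the paper leaves implicit.
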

\begin{proof}
For any two bounded maps, the difference of their squared losses is
bounded in expectation by twice their $L^1$ distance. Hence convergence
in \eqref{eq:paic-consistency} gives convergence of risk to
$L(\tau_{\mathrm{iso}})=N+I+G$. The final claim follows from
$\E(Y-a(P))^2=N+\E(\mstar-a(P))^2$: the term $N$ does not depend on
$a$. Probe anchoring can change finite-sample bias and variance, but the
population projection argument does not guarantee dominance over
label-anchored isotonic regression \citep{zadrozny2002}.
\end{proof}

The weaker assumption can hold without full probe consistency: take
$Z=Y$ a fair bit, $P=1/2$, and $g_n=1/2$. Then $b_n=0$ and
$\delta_n=0$, but $\|g_n-\mstar\|_2^2=1/4$ for every $n$.
The probe is adequate for the scalar target while ignoring within-score
information. Training independence remains essential: a probe that equals
one only at continuously distributed calibration observations can have zero
population error against $\mstar=0$ while producing all-one fit targets.

\paragraph{Extension to repeated answers.}
\label{app:paic-clusters}
For i.i.d. calibration clusters, independent of probe training, replace
$\mathbb Q_n$ by the prompt-level empirical law
$\mathbb Q_J f=J^{-1}\sum_j K_j^{-1}\sum_i f(Z_{ji},Y_{ji})$ and use
the target and norms under \eqref{eq:prompt-law}. For independent uniform
answer indices $I_j$ within the observed clusters, conditional Jensen gives
\[
\sup_H\left|\sum_j\varepsilon_j\frac1{K_j}
 \sum_i w(Z_{ji})\mathbf1\{P_{ji}\in H\}\right|
\le\E_{I\mid C}\sup_H\left|\sum_j\varepsilon_j
 w(Z_{jI_j})\mathbf1\{P_{jI_j}\in H\}\right|.
\]
After averaging over clusters and indices, the selected answers are i.i.d.
under $\mathbb Q$. The same symmetrization bound applies with $J$ in place
of $n$; the $g^2$ term is also a bounded independent cluster average.
Hence Lemma~\ref{lem:monotone-gc} and Theorem~\ref{thm:paic} hold with
the same $C_0$ and $J$ as sample size. Finite random cluster sizes need
no moment bound. All training/calibration independence conditions remain.
Pooling out-of-fold predictions from different fitted representations requires a separate aggregation argument and compatible feature geometry.

\subsection{Residual-corrected risk comparisons and recommendations}
\label{app:corrected-comparisons}

The following construction applies prediction-powered mean estimation
\citep{angelopoulos2023ppi} to paired losses. Unlike an uncorrected PRCE
comparison, it uses labels to estimate the probe-error direction relevant
to the decision. It does not require probe consistency.

Condition on a fitted probe $g$ and $M\ge2$ bounded candidate scores,
including $s_0$. Take two mutually independent samples from the same
prompt-cluster law in \ref{ass:audit}: $J_L\ge2$ labeled prompts and
$J_U\ge2$ prompts whose labels are unobserved. Both samples are independent
of all fitting, feature selection, and candidate construction. For a generic
cluster $C=((Z_i,Y_i))_{i=1}^K$, define
\begin{align}
A_{ab}(C)&=\frac1K\sum_i\big[(s_a(Z_i)-g(Z_i))^2
                          -(s_b(Z_i)-g(Z_i))^2\big],\\
D_{ab}(C)&=\frac1K\sum_i\big[(Y_i-s_a(Z_i))^2
                          -(Y_i-s_b(Z_i))^2\big].
\label{eq:cluster-contrasts}
\end{align}
Then $\E A_{ab}=D_g(a,b)$ and $\E D_{ab}=\Delta_{ab}$ under the
common induced law $\mathbb Q$. The exact residual correction is
\begin{equation}
D_{ab}(C)-A_{ab}(C)
=\frac2K\sum_i(s_a(Z_i)-s_b(Z_i))(g(Z_i)-Y_i).
\label{eq:residual-correction}
\end{equation}
For a fixed $\lambda\in[0,1]$, write
\begin{equation}
\widehat\Delta_{ab}^{\lambda}
=\overline{(D_{ab}-\lambda A_{ab})}_L
 +\lambda\overline A_{U,ab}.
\label{eq:weighted-comparison}
\end{equation}
Here and below bars average prompts. The choice $\lambda=1$ is the
residual-corrected comparison; $\lambda=0$ is ordinary paired labeled
validation. For each fixed $\lambda$, cancellation of the two proxy
expectations proves $\E\widehat\Delta_{ab}^{\lambda}=\Delta_{ab}$.
In contrast, $\PRCE(g;g)=0$ automatically makes $g$ an uncorrected
PRCE minimizer if it is an eligible candidate. Independent probe fitting
does not remove this self-reference.

\paragraph{An observable mean bound.}
For $n\ge2$, unbiased sample variance $v$ with divisor $n-1$, known
range diameter $R$, and $0<\epsilon<1$, let
\begin{equation}
B(n,v,R,\epsilon)
=\sqrt{\frac{2v\log(4/\epsilon)}{n}}
 +\frac{7R\log(4/\epsilon)}{3(n-1)}.
\label{eq:empirical-bernstein}
\end{equation}
For i.i.d. observations in an interval of diameter $R$, their mean is
within this radius of its expectation with probability at least
$1-\epsilon$. This follows by rescaling Theorem~4 of
\citet{maurer2009empirical} and applying its one-sided bound to both
tails at error level $\epsilon/2$. Their pairwise sample variance equals
the usual variance with divisor $n-1$.

\begin{theorem}[Simultaneous corrected comparisons]
\label{thm:corrected-comparison}
Under the fixed-fitting and two-pool assumptions above, let
$\Lambda\subset[0,1]$ be a nonempty finite set fixed before comparison.
Set $H=M(M-1)/2$ and $\epsilon=\alpha/(2H|\Lambda|)$ for
$0<\alpha<1$. Let $v_{L,ab}^{\lambda}$ be the sample variance of
$D_{ab}-\lambda A_{ab}$ across labeled prompts, and $v_{U,ab}$ the
sample variance of $A_{ab}$ across unlabeled prompts. Define
\begin{equation}
r_{ab}^{\lambda}
=B(J_L,v_{L,ab}^{\lambda},2(1+\lambda),\epsilon)
 +\lambda B(J_U,v_{U,ab},2,\epsilon).
\label{eq:corrected-radius}
\end{equation}
Conditionally on fitting, with probability at least $1-\alpha$,
\begin{equation}
|\widehat\Delta_{ab}^{\lambda}-\Delta_{ab}|
\le r_{ab}^{\lambda}
\quad\text{for every candidate pair and every }\lambda\in\Lambda.
\label{eq:corrected-coverage}
\end{equation}
The event covers selection within the frozen family and grid. It does not
make a post-selected point estimate unbiased.
\end{theorem}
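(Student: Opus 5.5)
The plan is to reduce the simultaneous statement to individual empirical-Bernstein events and then take a union bound. Everything is conditional on the frozen probe, the candidates and the grid $\Lambda$. For a fixed ordered pair $(a,b)$ and fixed $\lambda\in\Lambda$, decompose the estimation error as
\[
\widehat\Delta_{ab}^{\lambda}-\Delta_{ab}
=\Big(\overline{(D_{ab}-\lambda A_{ab})}_L-(\Delta_{ab}-\lambda D_g(a,b))\Big)
+\lambda\Big(\overline A_{U,ab}-D_g(a,b)\Big).
\]
This identity holds because $\E A_{ab}=D_g(a,b)$ and $\E D_{ab}=\Delta_{ab}$ under $\mathbb Q$, as recorded before \eqref{eq:weighted-comparison}. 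By the triangle inequality, it then suffices to bound each bracket by the corresponding empirical-Bernstein radius $B(\cdot)$ in \eqref{eq:corrected-radius}.

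Next I check the hypotheses of the Maurer--Pontil bound \eqref{eq:empirical-bernstein} for each pool. The prompt clusters are i.i.d. and independent of all fitting, so the cluster summaries $D_{ab}(C_j)-\lambda A_{ab}(C_j)$, $j\in\mathcal L$, are i.i.d. The summaries $A_{ab}(C_j)$, $j\in\mathcal U$, are i.i.d. as well, and the bound needs nothing else about within-prompt dependence or the sizes $K_j$.

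For the ranges, all scores and $Y$ lie in $[0,1]$, so each within-prompt squared-loss difference lies in $[-1,1]$. Averaging over answers keeps $D_{ab}(C)$ and $A_{ab}(C)$ in $[-1,1]$, and hence $D_{ab}-\lambda A_{ab}\in[-(1+\lambda),1+\lambda]$. The diameters are therefore $2(1+\lambda)$ and $2$, matching \eqref{eq:corrected-radius}. The labeled summary is computable from $\mathcal L$ alone, since $A_{ab}$ uses only scores and probe values, which are observed there too. Each application then gives a two-sided deviation bound with failure probability at most $\epsilon$.

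The union bound finishes the argument. Reversing a pair only flips the sign of every summand and leaves both variances unchanged, so it suffices to treat the $H=M(M-1)/2$ unordered pairs. For each pair and each of the $|\Lambda|$ grid values there are two events, one per pool; when $\lambda=0$ the unlabeled term is absent, which only helps. The total failure probability is at most $2H|\Lambda|\epsilon=\alpha$. On the complement, \eqref{eq:corrected-coverage} holds simultaneously, so it also holds for any pair or $\lambda$ selected afterwards; this is exactly the selection-robustness claim. It says nothing about unbiasedness of a post-selected estimate.

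The main point to get right is the exact form of the cited bound. One has to rescale Maurer--Pontil's Theorem~4 from $[0,1]$ to an interval of diameter $R$: the variance scales by $R^2$ and the range term by $R$. One also has to apply the one-sided bound to each tail at level $\epsilon/2$ to obtain the stated $\log(4/\epsilon)$ constants, and confirm that their variance convention agrees with the divisor-$(n-1)$ sample variance. This was asserted in the paragraph before the theorem, so I would simply invoke it there.
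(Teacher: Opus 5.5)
Your proposal is correct and follows essentially the same route as the paper's proof. Both arguments bound the prompt-level ranges to get diameters $2(1+\lambda)$ and $2$, apply the empirical-Bernstein radius to the labeled and unlabeled averages, and take a union bound over $2H|\Lambda|$ events; they then conclude with the triangle inequality, expectation cancellation, and sign reversal for unordered pairs.
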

\begin{proof}
Squared losses lie in $[0,1]$, so $A_{ab},D_{ab}\in[-1,1]$ and
$D_{ab}-\lambda A_{ab}\in[-(1+\lambda),1+\lambda]$.
Apply \eqref{eq:empirical-bernstein} to the labeled and unlabeled prompt
averages for every unordered pair and every grid value. There are at most
$2H|\Lambda|$ events, each failing with probability at most $\epsilon$.
Their simultaneous success has probability at least $1-\alpha$ by a
union bound; events for different pairs need not be independent.
The triangle inequality, expectation cancellation in
\eqref{eq:weighted-comparison}, and reversal of pair signs prove
\eqref{eq:corrected-coverage}. Arbitrary within-prompt dependence and
finite random $K$ are allowed because contributions are bounded cluster
averages.
\end{proof}

For the main construction use $\Lambda=\{1\}$, giving
$\epsilon=\alpha/(2H)$ and labeled/unlabeled range diameters $4$ and $2$.
For a prespecified labeled-only procedure no unlabeled sample is needed:
use $\overline D_L$ with radius $B(J_L,v_D,2,\alpha/H)$.
Comparing several separately reported methods does not authorize selecting
one afterward while retaining its unadjusted coverage level.

\begin{corollary}[A recommendation with controlled false improvement]
\label{cor:certified-repair}
Fix $\eta_{\min}\ge0$. For a fixed permitted weight, select the candidate
$a\ne0$ with smallest
$U_{a0}=\widehat\Delta_{a0}^{\lambda}+r_{a0}^{\lambda}$, using a
predeclared tie order. Recommend it only when $U_{a0}<-\eta_{\min}$;
otherwise retain $s_0$ and report an inconclusive comparison. Under
Theorem~\ref{thm:corrected-comparison},
\[
\Pr\{\text{recommend }a\ne0\text{ with }
       L(s_a)\ge L(s_0)-\eta_{\min}\mid\text{fitting}\}\le\alpha.
\]
Selection of both the candidate and a weight in the jointly covered
$\Lambda$ obeys the same guarantee.
\end{corollary}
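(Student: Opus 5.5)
The plan is to reduce the false-recommendation event to the simultaneous coverage event of Theorem~\ref{thm:corrected-comparison} and show that, on that event, no bad candidate can clear the threshold. Let $\mathcal E$ denote the event in \eqref{eq:corrected-coverage}, holding simultaneously for every pair and every $\lambda\in\Lambda$. Conditionally on fitting, $\Pr(\mathcal E^c)\le\alpha$. The candidate family, the grid $\Lambda$, the threshold $\eta_{\min}$ and the tie order are all fixed before the comparison sample is drawn. Hence the selected index $\hat a$ and the selected weight $\hat\lambda$ are measurable functions of the comparison data. Because the coverage event is uniform, it remains valid at these data-dependent indices.

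On $\mathcal E$, suppose the rule recommends some $\hat a\ne0$ with weight $\hat\lambda$. By the decision rule, $U_{\hat a0}=\widehat\Delta^{\hat\lambda}_{\hat a0}+r^{\hat\lambda}_{\hat a0}<-\eta_{\min}$. Coverage applied to the pair $(\hat a,0)$ at $\hat\lambda$ gives $\Delta_{\hat a0}\le\widehat\Delta^{\hat\lambda}_{\hat a0}+r^{\hat\lambda}_{\hat a0}$. Combining the two inequalities yields $L(s_{\hat a})-L(s_0)=\Delta_{\hat a0}<-\eta_{\min}$. So on $\mathcal E$, any recommended candidate satisfies $L(s_{\hat a})<L(s_0)-\eta_{\min}$. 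The event ``recommend $a\ne0$ with $L(s_a)\ge L(s_0)-\eta_{\min}$'' is therefore contained in $\mathcal E^c$, and its conditional probability is at most $\alpha$. This holds whether $\lambda$ is fixed or chosen from $\Lambda$, since $\mathcal E$ covers every grid value.

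The remaining points are bookkeeping. First, the sign convention must match: pairs are unordered in the theorem, so reversing a pair flips both $\widehat\Delta$ and $\Delta$, and the radius is symmetric. Second, the selection rule, smallest $U_{a0}$ with a fixed tie order, plays no role beyond measurability, since the argument bounds every candidate uniformly.

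The only real obstacle is making sure the coverage statement is genuinely simultaneous over the data-chosen pair and weight. It must not be a fixed-index statement applied post hoc. This is exactly what the union bound in Theorem~\ref{thm:corrected-comparison} over $H|\Lambda|$ contrasts supplies. The proof should also state explicitly that candidates or grid values tuned on the comparison sample fall outside the guarantee.
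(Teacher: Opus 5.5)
Your proposal is correct and matches the paper's proof. On the simultaneous coverage event of Theorem~\ref{thm:corrected-comparison}, every recommended candidate satisfies $\Delta_{a0}\le U_{a0}<-\eta_{\min}$, so a false recommendation forces that event to fail, which happens with probability at most $\alpha$; because the coverage is uniform over all pairs and all of $\Lambda$, the data-dependent choice of candidate and weight is covered, as you note.
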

\begin{proof}
On the simultaneous event, $\Delta_{a0}\le U_{a0}<-\eta_{\min}$
for every accepted recommendation. A false declaration therefore implies
failure of that event. This is an unconditional false-declaration probability
over comparison samples, not a conditional error rate among recommendations.
\end{proof}

\paragraph{When the correction can help.}
For a fixed weight, independence of the two pools gives exactly
\begin{equation}
\Var(\widehat\Delta_{ab}^{\lambda})
=\frac{\Var(D_{ab}-\lambda A_{ab})}{J_L}
 +\frac{\lambda^2\Var(A_{ab})}{J_U}.
\label{eq:comparison-variance}
\end{equation}
Thus additional unlabeled prompts reduce the proxy contribution, while
probe usefulness depends on the residual contribution. If $\Var(A_{ab})>0$,
the population variance-minimizing weight in $[0,1]$ is
\[
\lambda_{ab}^{\star}
=\operatorname{clip}_{[0,1]}
 \frac{\operatorname{Cov}(D_{ab},A_{ab})}
 {(1+J_L/J_U)\Var(A_{ab})}.
\]
When $\Var(A_{ab})=0$, all weights have the same variance; choose zero.
This is the familiar control-variate calculation underlying power tuning
\citep{angelopoulos2024ppiplusplus}. A plug-in weight estimated on the
comparison data is not automatically covered by a fixed-weight theorem;
use an independently fixed weight or the simultaneous finite-grid result.
A common weight gives coherent differences of candidate-risk estimates;
pair-specific weights instead define pairwise comparisons.

The finite-sample radius is observable, not necessarily narrow. Its variance
term can adapt to small paired differences, but its range term can dominate
at small $J_L$. A valid known bound $\|s_a-s_b\|_\infty\le d\le1$
permits replacing the absolute bound $1$ for $A_{ab},D_{ab}$ by
$2d-d^2$: factor the difference of squares and maximize over two scores
at distance at most $d$ and a target in $[0,1]$. This can tighten the
range diameters, but an observed maximum does not bound unobserved values.
If labeled and unlabeled pools are the same observations with the same
weights, \eqref{eq:weighted-comparison} reduces algebraically to
$\overline D_L$ for every $\lambda$; the independent-pool variance
formula no longer applies. Neither correction nor a small probe error
guarantees a label-efficiency or runtime advantage.

\subsection{Directional adoption by betting}
\label{app:betting}

The intervals above cover every pairwise contrast in both directions. Adopting
a candidate requires less: evidence that it lowers risk relative to the
unchanged score. The following test uses a standard betting construction
\citep{waudbysmith2024betting}. It controls false adoption.

Condition on the fitted probe and the frozen candidate family, and let $H$
count the non-original candidates. For candidate $h$ and an independent
labeled prompt $j$ with $K_j$ answers, write
\begin{align*}
D_{jh}&=\frac1{K_j}\sum_{i=1}^{K_j}\big[(Y_{ji}-h(Z_{ji}))^2-(Y_{ji}-P_{ji})^2\big],\\
A_{jh}&=\frac1{K_j}\sum_{i=1}^{K_j}\big[(h(Z_{ji})-g(Z_{ji}))^2-(P_{ji}-g(Z_{ji}))^2\big],
\end{align*}
so that $\E D_{jh}=L(h)-L(P)$ under $\mathbb Q$. Let $\delta_h\in[0,1]$ satisfy
$\delta_h\ge\sup_z|h(z)-s_0(z)|$; this must be a certified bound over the whole
input domain, not an observed maximum. Two scores within $\delta_h$ of each
other and a target in $[0,1]$ have squared losses differing by at most
$B_h=\delta_h(2-\delta_h)$: factor the difference of squares and maximize, as
in Appendix~\ref{app:corrected-comparisons}. Hence $|D_{jh}|,|A_{jh}|\le B_h$.

\begin{proposition}[Adoption by betting]
\label{prop:betting}
Condition on fitting, and let the $n$ labeled prompts be i.i.d.\ and
independent of fitting. Fix $0<\alpha<1$ and a nonempty finite set
$\mathcal S\subset[0,1)$. For each non-original candidate with $B_h>0$, let
\[
E_h(s)=\prod_{j=1}^n\Big(1-s\,\frac{D_{jh}}{B_h}\Big),\qquad
E_h=\frac1{|\mathcal S|}\sum_{s\in\mathcal S}E_h(s).
\]
Then
\[
\Pr\big\{\exists h:\ L(h)\ge L(P)\ \text{and}\ E_h\ge H/\alpha\big\}\le\alpha.
\]
Consequently, the rule that certifies every candidate with $E_h\ge H/\alpha$,
adopts the certified candidate with the smallest mean $\overline D_h$ under a
fixed tie order, and otherwise retains $P$, adopts a candidate with
$L(h)\ge L(P)$ with probability at most $\alpha$.
\end{proposition}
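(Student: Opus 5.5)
The plan is to show that, for each fixed candidate $h$ with $L(h)\ge L(P)$, the averaged wealth $E_h$ is a nonnegative random variable with mean at most one. Markov's inequality and a union bound over the $H$ non-original candidates then finish the argument.

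Throughout we condition on fitting, so the probe, the candidate family, the bounds $\delta_h$ (hence $B_h$), $\mathcal S$, $\alpha$ and $H$ are all fixed. The labeled prompts are i.i.d. and independent of fitting, so for fixed $h$ the variables $D_{1h},\ldots,D_{nh}$ are i.i.d. Each has mean $\mu_h=\E D_{jh}=L(h)-L(P)$ under the prompt-weighted law $\mathbb Q$ of \eqref{eq:prompt-law}.

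\textbf{Step 1: positivity of each factor.} Every answer-level loss difference satisfies $|(Y-h)^2-(Y-P)^2|=|h-P|\,|2Y-h-P|$. The certified bound $|h-P|\le\delta_h$ holds on the whole domain. Maximizing over $Y,h,P\in[0,1]$ under this constraint gives at most $\delta_h(2-\delta_h)=B_h$, and averaging within a prompt preserves the bound, so $|D_{jh}|\le B_h$. For $s\in\mathcal S\subset[0,1)$, each factor $1-sD_{jh}/B_h$ therefore lies in $[1-s,1+s]$ and is strictly positive. Hence $E_h(s)>0$ and $E_h>0$.

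\textbf{Step 2: mean at most one under the null.} By independence across prompts,
\[
\E E_h(s)=\prod_{j=1}^n\bigl(1-s\,\mu_h/B_h\bigr)=(1-s\mu_h/B_h)^n .
\]
If $\mu_h\ge0$, then $0\le 1-s\mu_h/B_h\le1$, using $s\mu_h/B_h\le s<1$. So $\E E_h(s)\le1$, and averaging over the fixed finite set $\mathcal S$ gives $\E E_h\le1$. Markov's inequality then yields
\[
\Pr\{E_h\ge H/\alpha\}\le\alpha/H .
\]
No martingale or optional-stopping argument is needed, because $n$ is fixed in advance; the product of independent mean-bounded factors suffices.

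\textbf{Step 3: union bound and the adoption rule.} Let $\mathcal N=\{h: L(h)\ge L(P)\}$. This set is deterministic given fitting and has at most $H$ elements. Then
\[
\Pr\{\exists h\in\mathcal N: E_h\ge H/\alpha\}\le|\mathcal N|\,\alpha/H\le\alpha .
\]
The adoption rule can select a candidate with $L(h)\ge L(P)$ only if that candidate was certified, so the event of such an adoption lies inside the event just bounded. This holds whatever the tie order and the $\overline D_h$-based choice among certified candidates, since the bound concerns certification alone and not which certified candidate is chosen.

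\textbf{Main point of care.} The only delicate step is Step 1. Positivity of every factor relies on $B_h$ being a \emph{certified} almost-sure bound, not an empirical maximum, and on $s<1$ strictly. I would state these two requirements explicitly. I would also note that $\mathcal N$ and $H$ are fixed independently of the comparison outcomes, which is what makes the union bound legitimate.
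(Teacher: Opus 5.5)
Your proof is correct and follows the paper's argument step for step: you get positivity of each factor ($\ge 1-s>0$) from the certified bound $|D_{jh}|\le B_h$, then $\E E_h\le 1$ by independence across prompts when $L(h)\ge L(P)$, then Markov's inequality at threshold $H/\alpha$, and finally a union bound over at most $H$ non-improving candidates, with false adoption contained in false certification. The only point the paper adds is that a candidate with $B_h=0$ coincides with $P$ and is never certified.
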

\begin{proof}
Fix $h$ with $L(h)\ge L(P)$, so $\E D_{jh}\ge0$. For $s\in\mathcal S$, each
factor satisfies $1-sD_{jh}/B_h\ge1-s>0$ because $|D_{jh}|\le B_h$, and its
expectation is $1-s\E D_{jh}/B_h\le1$. Prompts are independent, so
$\E E_h(s)=\prod_j\E(1-sD_{jh}/B_h)\le1$, and averaging over $\mathcal S$ gives
$\E E_h\le1$. Markov's inequality gives $\Pr\{E_h\ge H/\alpha\}\le\alpha/H$,
and a union bound over at most $H$ such candidates proves the claim. A
candidate with $B_h=0$ coincides with $P$ and is never certified.
\end{proof}

The main text and experiments use $\mathcal S=\{0.05,0.1,0.25,0.5,0.9\}$.
The guarantee applies to one prespecified method and comparison; choosing a
method or budget after inspecting a sweep is not covered.

\paragraph{A proxy-assisted variant.}
Condition on fitting and prespecify disjoint groups. Each group contains
one labeled prompt and a fixed positive number $r$ of unlabeled prompts.
All these prompt clusters are mutually independent, sampled from the same
target cluster law, and independent of fitting; their prompt-weighted answer
law is $\mathbb Q$. For fixed $\lambda\in[0,1]$, form
\[
\widetilde D_{jh,\lambda}=D_{jh}-\lambda A_{jh}+\lambda\overline A^{\mathcal U}_{jh},
\]
where $\overline A^{\mathcal U}_{jh}$ averages $A$ over the group's unlabeled
prompts. Then $\E\widetilde D_{jh,\lambda}=\E D_{jh}$ and
$|\widetilde D_{jh,\lambda}|\le B_h+2\lambda\delta_h$: the residual part
$D-\lambda A=(1-\lambda)D+\lambda(D-A)$ is bounded by
$(1-\lambda)B_h+2\lambda\delta_h$, because $|D-A|\le2\delta_h$, and the
unlabeled part by $\lambda B_h$. Replacing $D_{jh}/B_h$ by
$\widetilde D_{jh,\lambda}/(B_h+2\lambda\delta_h)$ and averaging over a fixed
uniform mixture of $s\in\mathcal S$ and $\lambda\in\{0,.25,.5,.75,1\}$, the same
proof applies. The zero-weight component gives a valid fallback but pays a
mixture penalty.

\subsection{Kernel identities and their scope}
\label{app:kernel-definitions}

The kernel step maps a conditional-mean residual into a collection of
testable moments. For attribution, take the product kernel
$\widetilde k((z,y),(z',y'))=k(z,z')yy'$ in the distribution-matching
framework of \citet{marx2023distribution}. Compare the law of $(Z,Y)$
with a law of $(Z,\widetilde Y)$ having the same $Z$ marginal and
$\E[\widetilde Y\mid Z]=s(Z)$, with $\widetilde Y\in[0,1]$.
Their difference of mean embeddings is $\E[(Y-s(Z))\Phi(Z)]$.
Its squared norm is the SKCE below; for graded outcomes this specializes
distribution matching to conditional means.

Define the corresponding residual functional by
\begin{equation}
\SCF_k(s):=\sup_{\substack{f\in\cH_k\\\|f\|_{\cH_k}\le1}}
          |\E[(Y-s(Z))f(Z)]|.
\label{eq:scf-definition}
\end{equation}

\begin{theorem}[Residual norm identity and zero identification]
\label{thm:skce-scf}
Under \ref{ass:information} and the bounded-RKHS part of
\ref{ass:kernel}, put $r=Y-s(Z)$ and $\mu_s=\E[r\Phi(Z)]$.
For independent primed copies,
\[
\SKCE_k(s)=\E[rr'k(Z,Z')]
 =\|\mu_s\|_{\cH_k}^2=\SCF_k(s)^2.
\]
If $\cH_k$ is dense in $L^2(\mathbb Q_Z)$, then
$\SKCE_k(s)=0$ if and only if $s=\mstar$ almost surely.
\end{theorem}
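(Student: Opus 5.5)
The plan has two parts: the norm identities through Bochner integration in the separable RKHS, and zero identification through density of $\cH_k$ in $L^2(\mathbb Q_Z)$.

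\emph{Step 1: Bochner integrability.} The feature map $\Phi$ is measurable into the separable space $\cH_k$. It satisfies $\|\Phi(z)\|\le\sqrt\kappa$, and $|r|\le1$ because $Y,s\in[0,1]$. Hence $r\Phi(Z)$ is Bochner integrable, and $\mu_s=\E[r\Phi(Z)]\in\cH_k$ is well defined. Bochner integrals commute with bounded linear functionals. Combined with the reproducing property, this gives $\iprod{f,\mu_s}=\E[rf(Z)]$ for every $f\in\cH_k$.

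\emph{Step 2: the norm identities.} Taking the supremum over the unit ball in Step~1 gives $\SCF_k(s)=\sup_{\|f\|\le1}|\iprod{f,\mu_s}|=\|\mu_s\|$ by Cauchy--Schwarz. The supremum is attained at $f=\mu_s/\|\mu_s\|$, or is trivially zero when $\mu_s=0$. For the SKCE, use an independent copy $(Z',Y')$ and write
\[
\|\mu_s\|^2=\iprod{\E[r\Phi(Z)],\E[r'\Phi(Z')]}=\E\iprod{r\Phi(Z),r'\Phi(Z')}=\E[rr'k(Z,Z')].
\]
The middle equality holds because the Bochner integral of the tensor-type integrand under the product measure factors by independence and Fubini. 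The integrand is bounded by $\kappa$, so Fubini applies. This establishes all three equalities.

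\emph{Step 3: zero identification.} If $s=\mstar$ a.s., then conditioning on $Z$ gives $\E[rf(Z)]=\E[\E[r\mid Z]f(Z)]=0$ for all $f$, so $\SKCE_k(s)=0$. Conversely, suppose $\SKCE_k(s)=0$. Then $\E[rf(Z)]=0$ for all $f\in\cH_k$, and conditioning again gives $\langle \mstar-s,f\rangle_{L^2(\mathbb Q_Z)}=0$ on $\cH_k$.

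Some care is needed here. Since $k$ is bounded, every $f\in\cH_k$ satisfies $|f(z)|\le\sqrt\kappa\|f\|$. So $\cH_k$ embeds into $L^2(\mathbb Q_Z)$, meaning it is a subspace of that space modulo null functions. The function $w=\mstar-s$ is bounded and hence lies in $L^2$. The map $f\mapsto\langle w,f\rangle_{L^2}$ is continuous on $L^2$ and vanishes on a dense subset, so it vanishes identically. Choosing $f=w$ gives $\|w\|_{L^2}^2=0$, that is, $s=\mstar$ a.s.

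The main obstacle is the measure-theoretic bookkeeping rather than any single hard estimate. Two points need attention. The first is justifying the exchange of inner product and expectation under the product law, which rests on Bochner integrability in a separable space, where weak and strong measurability coincide by Pettis. The second is that density in $L^2(\mathbb Q_Z)$ is understood through this embedding of equivalence classes. Everything else follows directly from the tower property and Cauchy--Schwarz.
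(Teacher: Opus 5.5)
Your proposal is correct and follows the same route as the paper's proof. Both arguments run as follows:
\begin{enumerate}
\item $\|r\Phi(Z)\|\le\sqrt\kappa$ makes the Bochner mean $\mu_s$ well defined.
\item The reproducing property and independence give $\E[rr'k(Z,Z')]=\|\mu_s\|^2$.
\item Cauchy--Schwarz, with the maximizer $\mu_s/\|\mu_s\|$, gives $\SCF_k(s)=\|\mu_s\|$.
\item Conditioning on $Z$, then density of $\cH_k$ in $L^2(\mathbb Q_Z)$ applied to the bounded function $\mstar-s$, gives zero identification.
\end{enumerate}
Your extra bookkeeping makes explicit what the paper leaves implicit: Pettis measurability, Fubini justified by the bound $\kappa$, and continuity of $f\mapsto\langle\mstar-s,f\rangle_{L^2}$.
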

\begin{proof}
The bound $\|r\Phi(Z)\|\le\sqrt\kappa$ makes the Bochner mean well
defined. The reproducing property and independence give
\[
\E[rr'k(Z,Z')]
=\E\langle r\Phi(Z),r'\Phi(Z')\rangle
=\langle\mu_s,\mu_s\rangle.
\]
For $f\in\cH_k$, $\E[rf(Z)]=\langle\mu_s,f\rangle$.
Cauchy--Schwarz gives a supremum of $\|\mu_s\|$ over the unit ball;
when $\mu_s\ne0$ it is attained at $\mu_s/\|\mu_s\|$, and when
$\mu_s=0$ every value is zero. These are the usual mean-embedding
identities \citep{gretton2012kernel,muandet2017kernel}.
Related kernel calibration criteria use distribution matching
\citep{marx2023distribution}; the identity here specifically concerns
the residual-weighted mean embedding.

Conditioning on $Z$ writes the moment as
$\E[(\mstar-s)f(Z)]$. If it vanishes for every RKHS function, density
and $\mstar-s\in L^2(\mathbb Q_Z)$ imply $\|\mstar-s\|_2^2=0$.
The converse is immediate. On a compact metric space, continuous
universality supplies this density through density of continuous
functions in $L^2$ \citep{sriperumbudur2011universality}. The relevant
residual measure is signed, so injectivity stated only for probability
measures is not, by itself, the required argument.
\end{proof}

\begin{theorem}[Upper bound by oracle PRCE and continuity]
\label{thm:skce-prce}
Under the bounded-RKHS conditions above, for bounded $Z$-measurable
scores $s,t$,
\begin{align}
0\le\SKCE_k(s)&\le\kappa\PRCE(s;\mstar),\\
|\sqrt{\SKCE_k(s)}-\sqrt{\SKCE_k(t)}|
&\le\sqrt\kappa\,\|s-t\|_2.
\label{eq:skce-continuity}
\end{align}
Universality is unnecessary for either statement.
\end{theorem}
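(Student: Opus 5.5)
The plan is to work entirely through the residual-weighted mean embedding $\mu_s=\E[r\Phi(Z)]$ from Theorem~\ref{thm:skce-scf}, which already gives $\SKCE_k(s)=\|\mu_s\|_{\cH_k}^2$. Nonnegativity is then immediate. Both inequalities will follow by bounding Bochner means of bounded RKHS-valued random elements with $\|\Phi(z)\|_{\cH_k}\le\sqrt\kappa$.

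\textbf{Upper bound.} I would first condition on $Z$. Since $\Phi(Z)$ is $Z$-measurable and Bochner integrals commute with conditional expectation of the scalar factor, $\mu_s=\E[(\mstar(Z)-s(Z))\Phi(Z)]$. This step needs $\E[r\mid Z]=\mstar-s$ together with boundedness, so that the Bochner integral exists and the tower property applies. Jensen's inequality for the Bochner norm then gives
\[
\|\mu_s\|_{\cH_k}\le\E\bigl[|\mstar-s|\,\|\Phi(Z)\|_{\cH_k}\bigr]\le\sqrt\kappa\,\E|\mstar-s|\le\sqrt\kappa\,\|\mstar-s\|_2 .
\]
Squaring yields $\SKCE_k(s)\le\kappa\|s-\mstar\|_2^2$, and by Theorem~\ref{thm:prce-brier} this right-hand side equals $\kappa\PRCE(s;\mstar)$.

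\textbf{Continuity.} Here I would apply the reverse triangle inequality in $\cH_k$: $|\|\mu_s\|-\|\mu_t\||\le\|\mu_s-\mu_t\|$. The $Y$ terms cancel, so $\mu_s-\mu_t=\E[(t(Z)-s(Z))\Phi(Z)]$. The same Jensen and Cauchy--Schwarz step bounds this by $\sqrt\kappa\,\|s-t\|_2$, which is the Lipschitz statement in \eqref{eq:skce-continuity}.

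Neither argument uses density of $\cH_k$, which matches the remark that universality is unnecessary. The only point needing care, and the main obstacle, is justifying that conditioning inside a Bochner integral is legitimate. I would handle this by testing against an arbitrary $f\in\cH_k$: the pairing $\langle\mu_s,f\rangle=\E[r f(Z)]=\E[(\mstar-s)f(Z)]$ holds by the scalar tower property. Since $f$ is arbitrary, this identifies the two embeddings without appealing to a vector-valued conditional expectation.
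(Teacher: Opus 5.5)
Your proposal is correct and follows the paper's proof essentially line for line. You rewrite $\mu_s=\E[(\mstar-s)\Phi(Z)]$, bound its norm by $\sqrt\kappa\,\E|\mstar-s|\le\sqrt\kappa\,\|\mstar-s\|_2$ and square, then apply the reverse triangle inequality with $\mu_s-\mu_t=\E[(t-s)\Phi(Z)]$; your pairing with arbitrary $f\in\cH_k$ is a clean justification of the conditioning step, which the paper leaves implicit (note also that $\PRCE(s;\mstar)=\|s-\mstar\|_2^2$ holds by definition, so Theorem~\ref{thm:prce-brier} is not needed there).
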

\begin{proof}
$\E[(Y-\mstar)\Phi(Z)]=0$, so
$\mu_s=\E[(\mstar-s)\Phi(Z)]$. Consequently,
\[
\|\mu_s\|\le\sqrt\kappa\,\E|\mstar-s|
\le\sqrt\kappa\,\|\mstar-s\|_2.
\]
Square to obtain the first bound. For the second, use the reverse
triangle inequality followed by
$\|\mu_s-\mu_t\|=\|\E[(t-s)\Phi(Z)]\|
\le\sqrt\kappa\|s-t\|_2$.
\end{proof}

\begin{proposition}[Kernel control of comparison bias]
\label{prop:kernel-comparison}
Under \ref{ass:information} and the bounded-RKHS conditions in
\ref{ass:kernel}, fix $s_a,s_b,g$ under one population law. If
$w=s_a-s_b\in\cH_k$, then
\begin{equation}
|D_g(a,b)-\Delta_{ab}|
\le2\|w\|_{\cH_k}\sqrt{\SKCE_k(g)}.
\label{eq:kernel-comparison}
\end{equation}
For any $v\in\cH_k$ approximating $w$ in $L^2$,
\[
|D_g(a,b)-\Delta_{ab}|
\le2\|v\|_{\cH_k}\sqrt{\SKCE_k(g)}
       +2\|w-v\|_2\|g-\mstar\|_2.
\]
\end{proposition}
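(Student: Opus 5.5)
We start from the exact identity \eqref{eq:pairwise-identity} of Proposition~\ref{prop:pairwise-risk}:
\[
D_g(a,b)-\Delta_{ab}=-2\E[w(Z)(g(Z)-\mstar(Z))],
\]
with $w=s_a-s_b$. Conditioning on $Z$ replaces $g-\mstar$ by $g-Y$, since $w$ is $Z$-measurable. So the bias equals $-2\E[(g-Y)w(Z)]=2\E[r_g w(Z)]$, where $r_g=Y-g(Z)$ is the probe's residual. This expresses the comparison bias as a residual moment of the probe tested against the single function $w$.

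For the first bound, assume $w\in\cH_k$. By the reproducing property and the Bochner-mean argument of Theorem~\ref{thm:skce-scf}, $\E[r_g w(Z)]=\langle\mu_g,w\rangle_{\cH_k}$ with $\mu_g=\E[r_g\Phi(Z)]$. This step is legitimate because $\|r_g\Phi(Z)\|\le\sqrt\kappa$, so the mean exists and commutes with the inner product. Cauchy--Schwarz in $\cH_k$ gives $|\E[r_g w]|\le\|w\|_{\cH_k}\|\mu_g\|_{\cH_k}$. Theorem~\ref{thm:skce-scf} identifies $\|\mu_g\|^2=\SKCE_k(g)$, which yields \eqref{eq:kernel-comparison}. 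Equivalently, one may write $w=\|w\|_{\cH_k}\,(w/\|w\|_{\cH_k})$ and invoke the definition \eqref{eq:scf-definition} of $\SCF_k(g)$ directly; the case $w=0$ is trivial.

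For the approximation bound, split $w=v+(w-v)$ inside the moment:
\[
2\E[r_g w]=2\E[r_g v]+2\E[r_g(w-v)].
\]
The first term is bounded exactly as above by $2\|v\|_{\cH_k}\sqrt{\SKCE_k(g)}$. For the second, we return to the conditional form $\E[r_g(w-v)]=\E[(\mstar-g)(w-v)]$, which holds because $w-v$ is $Z$-measurable. Cauchy--Schwarz in $L^2(\mathbb Q_Z)$ then bounds it by $\|w-v\|_2\|g-\mstar\|_2$. The triangle inequality combines the two pieces.

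The only point needing care is measurability and integrability. We must ensure that $v$, as an RKHS element, is a measurable function in $L^2(\mathbb Q_Z)$, so that $\E[r_g v]=\langle\mu_g,v\rangle$ holds and $w-v\in L^2$. This follows from the measurable feature map and boundedness $|v(z)|\le\sqrt\kappa\|v\|_{\cH_k}$ in \ref{ass:kernel}. Beyond this check, the argument is two applications of Cauchy--Schwarz in different spaces.
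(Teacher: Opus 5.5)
Your proposal is correct and matches the paper's proof essentially step for step. The paper likewise rewrites the bias via the paired identity as $2\E[(Y-g)w(Z)]=2\langle\mu_g,w\rangle_{\cH_k}$ and applies RKHS Cauchy--Schwarz with Theorem~\ref{thm:skce-scf}; for the approximation bound it splits $w=v+(w-v)$, conditions the remainder on $Z$, and uses $L^2$ Cauchy--Schwarz, with your measurability check being a detail the paper leaves implicit.
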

\begin{proof}
For $\mu_g=\E[(Y-g)\Phi(Z)]$, the paired identity gives
$D_g-\Delta_{ab}=2\E[(Y-g)w(Z)]=2\langle\mu_g,w\rangle$.
Apply RKHS Cauchy--Schwarz and Theorem~\ref{thm:skce-scf}.
For the second statement, split $w=v+(w-v)$ and condition the remainder
on $Z$, then use $L^2$ Cauchy--Schwarz. This is a decision-specific
specialization of residual/kernel loss control
\citep{marx2023distribution}; it audits the \emph{probe}, not the chosen
candidate's residuals.
\end{proof}

A finite Nystr\"om space need not contain nonlinear or isotonic candidate
contrasts. One alternative, fixed after candidate fitting but before
comparison, is a finite kernel spanning those contrasts:
$k_{\mathcal S}(z,z')=H^{-1}\sum_{a<b}w_{ab}(z)w_{ab}(z')$.
It has diagonal at most one and RKHS contrast norms at most $\sqrt H$.
Its SKCE tests precisely the selected residual moments, not every
feature-conditioned error. A quantitative use of
\eqref{eq:kernel-comparison} needs an upper bound on SKCE, not merely
failure to reject its zero null.

\begin{theorem}[Limiting SKCE after PAIC]
\label{thm:paic-skce}
Under Theorem~\ref{thm:paic} and \ref{ass:kernel},
\[
\SKCE_k(\tauhat\circ s_0)
\longrightarrow\SKCE_k(\tau_{\mathrm{iso}}\circ s_0)
\le\kappa(I+G)
\]
in probability and in mean. This is a limiting upper bound, not a
guarantee that recalibration decreases SKCE.
\end{theorem}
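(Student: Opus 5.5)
The plan has two parts. First, convergence follows from the continuity bound \eqref{eq:skce-continuity} in Theorem~\ref{thm:skce-prce} combined with the $L^2(\nu)$ convergence of the map in Theorem~\ref{thm:paic}. Second, the upper bound follows from the first inequality of Theorem~\ref{thm:skce-prce} together with the decomposition in Theorem~\ref{thm:decomp}.

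\emph{Convergence.} Set $s=\tauhat\circ s_0$ and $t=\tau_{\mathrm{iso}}\circ s_0$. Since both are bounded $Z$-measurable scores, \eqref{eq:skce-continuity} gives
\[
\bigl|\sqrt{\SKCE_k(s)}-\sqrt{\SKCE_k(t)}\bigr|\le\sqrt\kappa\,\|\tauhat-\tau_{\mathrm{iso}}\|_{L^2(\nu)}.
\]
Here $\|\tauhat\circ s_0-\tau_{\mathrm{iso}}\circ s_0\|_2=\|\tauhat-\tau_{\mathrm{iso}}\|_{L^2(\nu)}$ because $\nu$ is the law of $P$. One point needs care: $\tauhat$ is random through the training and calibration data. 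I will therefore apply the identity conditionally on those data, with the population expectation taken over a fresh draw. This matches how $L(\tauhat\circ s_0)$ is defined in Theorem~\ref{thm:paic}.

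Theorem~\ref{thm:paic} gives $\|\tauhat-\tau_{\mathrm{iso}}\|_2^2\to0$ in mean and in probability, under its standing assumption $\delta_n\to0$. Hence the square roots converge in probability. To get back to SKCE itself, I use that both values lie in $[0,\kappa]$, since $|r|\le1$ and $|k|\le\kappa$. The square roots therefore lie in $[0,\sqrt\kappa]$, and $|a^2-b^2|\le2\sqrt\kappa\,|a-b|$ on that interval. This yields
\[
|\SKCE_k(s)-\SKCE_k(t)|\le2\kappa\,\|\tauhat-\tau_{\mathrm{iso}}\|_2,
\]
which gives convergence in probability. Taking expectations and applying Jensen's inequality, $\E\|\cdot\|_2\le(\E\|\cdot\|_2^2)^{1/2}$, gives convergence in mean. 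Boundedness alone would also give mean convergence via dominated convergence.

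\emph{Limit bound.} Theorem~\ref{thm:skce-prce} gives $\SKCE_k(t)\le\kappa\,\PRCE(t;\mstar)$. By Theorem~\ref{thm:decomp}, $\PRCE(t;\mstar)=I+\|\tau_{\mathrm{iso}}-q\|_2^2=I+G$. Equivalently, Theorem~\ref{thm:monotone} gives $L(t)=N+I+G$, and $\PRCE(t;\mstar)=L(t)-N$.

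The only delicate step is the measurability and conditioning bookkeeping: treating SKCE at a data-dependent score as a population functional evaluated conditionally. This is already built into Theorem~\ref{thm:paic} through the jointly measurable minimizer. I expect no real obstacle here, though the write-up should note that SKCE need not decrease relative to $s_0$.
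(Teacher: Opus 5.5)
Your proposal is correct and follows the paper's proof essentially step for step. The continuity bound \eqref{eq:skce-continuity}, together with the $\sqrt\kappa$ cap on each square root, gives $|\SKCE_k(\tauhat\circ s_0)-\SKCE_k(\tau_{\mathrm{iso}}\circ s_0)|\le2\kappa\|\tauhat-\tau_{\mathrm{iso}}\|_2$; Theorem~\ref{thm:paic} (under $\delta_n\to0$) then supplies the convergence, and Theorem~\ref{thm:skce-prce} with $\PRCE(\tau_{\mathrm{iso}}\circ s_0;\mstar)=I+G$ gives the limiting bound, where your Jensen step and your conditioning on the fitted map simply spell out details the paper leaves implicit.
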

\begin{proof}
Apply \eqref{eq:skce-continuity} to the two composed scores. Each square
root is at most $\sqrt\kappa$, so the difference of the SKCE values is
at most $2\kappa\|\tauhat-\tau_{\mathrm{iso}}\|_2$.
Theorem~\ref{thm:paic} gives the asserted convergence. The limiting bound
uses Theorem~\ref{thm:skce-prce} and
$\PRCE(\tau_{\mathrm{iso}}\circ s_0;\mstar)=I+G$.
\end{proof}

\begin{remark}[Kernel dependence, approximation, and tolerance claims]
\label{rem:scope}
Exact zero identification does not make nonzero tolerances kernel-free.
For equiprobable $Z\in\{0,1\}$, let $Y=Z$ and $s=1/2$. The normalized
kernel matrix
\[
K_\rho=\begin{pmatrix}1&\rho\\\rho&1\end{pmatrix},\qquad 0\le\rho<1,
\]
is universal on this finite space, yet
$\SKCE_{K_\rho}(s)=(1-\rho)/8$ while
$\PRCE(s;\mstar)=1/4$. The kernel can make the former arbitrarily
small without changing the latter. Thus neither normalization nor
universality supplies a kernel-uniform inverse bound.
For a fixed finite feature map, including a Nystr\"om approximation,
the norm identity continues to hold but zero tests only those feature
moments. Full identification does not follow automatically.

PAIC need not minimize SKCE either. Let $Y=Z\in\{0,1\}$ with state
probabilities $(1/4,3/4)$, $s_0=9/10$, and
$k(z,z')=\mathbf1\{z=z'\}$. With oracle anchor $g=\mstar$, population
PAIC gives $3/4$, lowering
risk from $21/100$ to $3/16$, while increasing SKCE from $9/160$ to
$9/128$. The residual IPM and its square have the same minimizers.
Squaring provides the quadratic form used by the unbiased pair estimator;
the norm identity computes the witness supremum directly.
\end{remark}

\subsection{Inference with independent prompt clusters}
\label{app:cluster-inference}

Let $\zeta_j=K_j^{-1}\sum_i(Y_{ji}-s(Z_{ji}))\Phi(Z_{ji})$,
$\mu=\E\zeta_j$, and
$C=\E[(\zeta_j-\mu)\otimes(\zeta_j-\mu)]$.
Here $u\otimes v$ is the operator $w\mapsto\langle v,w\rangle u$.
Because $\|\zeta_j\|\le\sqrt\kappa$, $C$ is trace-class. In this
subsection $\theta=\|\mu\|^2$ is SKCE under the law in
\eqref{eq:prompt-law}.

\begin{theorem}[Prompt-cluster U-statistic]
\label{thm:ustat}
Under \ref{ass:kernel} and \ref{ass:audit},
\[
U_J=\frac1{J(J-1)}\sum_{j\ne\ell}\langle\zeta_j,\zeta_\ell\rangle
\]
is unbiased for $\theta$. If
$\sigma^2=4\Var\langle\zeta_1,\mu\rangle>0$, then
$\sqrt J(U_J-\theta)\Rightarrow\mathcal N(0,\sigma^2)$.
Under $\mu=0$, with $(\lambda_b)$ the eigenvalues of $C$,
\begin{equation}
JU_J\ \Rightarrow\ W_C:=\sum_{b\ge1}\lambda_b(G_b^2-1),
\qquad G_b\stackrel{\mathrm{iid}}\sim\mathcal N(0,1).
\label{eq:cluster-null}
\end{equation}
The eigenvalues are those of the residual-weighted \emph{cluster}
covariance, not of the bare semantic kernel.
\end{theorem}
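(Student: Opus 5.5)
We treat $U_J$ as a degree-two U-statistic with Hilbert-space-valued kernel $h(C_j,C_\ell)=\langle\zeta_j,\zeta_\ell\rangle$. The clusters $C_j$ are i.i.d. under \ref{ass:audit}, and the $\zeta_j$ are bounded by $\sqrt\kappa$ under \ref{ass:kernel}. Unbiasedness is immediate. For $j\ne\ell$, independence lets us pull expectations inside the inner product, $\E\langle\zeta_j,\zeta_\ell\rangle=\langle\E\zeta_j,\E\zeta_\ell\rangle=\|\mu\|^2$; this is justified by Bochner integrability, as in Theorem~\ref{thm:skce-scf}. Next we identify $\mu$ with the population residual embedding under $\mathbb Q$. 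By \eqref{eq:prompt-law}, $\E\zeta_j=\E_C[K^{-1}\sum_i r_i\Phi(Z_i)]=\mathbb Q[r\Phi(Z)]$, applied coordinatewise through inner products with arbitrary $f\in\cH_k$. So $\theta=\SKCE_k(s)$ under $\mathbb Q$. No assumption on within-cluster dependence is used, because only products across distinct clusters appear.

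For the non-degenerate limit we use the Hoeffding decomposition. Write $\zeta_j=\mu+x_j$ with $\E x_j=0$. Then
\[
U_J-\theta=\frac{2}{J}\sum_j\langle x_j,\mu\rangle+\frac{1}{J(J-1)}\sum_{j\ne\ell}\langle x_j,x_\ell\rangle .
\]
The linear term is an i.i.d. average of bounded scalars with variance $\Var\langle\zeta_1,\mu\rangle$. The classical CLT then gives $\sqrt J$ times it $\Rightarrow\mathcal N(0,\sigma^2)$ with $\sigma^2=4\Var\langle\zeta_1,\mu\rangle$. The degenerate remainder has mean zero. Distinct off-diagonal pairs are uncorrelated, so its variance is $2\E\langle x_1,x_2\rangle^2/(J(J-1))$, which is at most $2\kappa^2/(J(J-1))$. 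Multiplying by $\sqrt J$ therefore gives an $O_P(J^{-1/2})$ term, and Slutsky finishes this case.

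Under $\mu=0$ the linear term vanishes and $JU_J=(J-1)^{-1}\sum_{j\ne\ell}\langle x_j,x_\ell\rangle$. We expand in an orthonormal eigenbasis $(\phi_b)$ of the trace-class operator $C$, which exists because $\cH_k$ is separable. Put $\xi_{jb}=\langle x_j,\phi_b\rangle$; these are uncorrelated across $b$ with variances $\lambda_b$. Then
\[
JU_J=\frac{J}{J-1}\sum_b\Big[\Big(J^{-1/2}\sum_j\xi_{jb}\Big)^2-J^{-1}\sum_j\xi_{jb}^2\Big].
\]
For any finite truncation $b\le B$, the multivariate CLT sends the vector $(J^{-1/2}\sum_j\xi_{jb})_{b\le B}$ to independent $\mathcal N(0,\lambda_b)$ coordinates, and the LLN sends the diagonal averages to $\lambda_b$. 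The continuous mapping theorem then gives $\sum_{b\le B}\lambda_b(G_b^2-1)$.

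\textbf{Main obstacle.} The delicate step is controlling the tail $b>B$ uniformly in $J$. The tail statistic is again a degenerate U-statistic, built from the projection $x_j^{(>B)}$ onto the span of the remaining eigenvectors. Its second moment is $\frac{2J}{J-1}\E\langle x^{(>B)}_1,x^{(>B)}_2\rangle^2=\frac{2J}{J-1}\sum_{b>B}\lambda_b^2$. Since $\sum_b\lambda_b\le\kappa<\infty$, this tends to zero as $B\to\infty$ uniformly in $J$. The same bound shows that the limiting tail $\sum_{b>B}\lambda_b(G_b^2-1)$ converges in $L^2$. A standard approximation argument (Billingsley's Theorem 3.2) then yields \eqref{eq:cluster-null}. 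Finally, the eigenvalues that appear are those of $C$, the covariance of the cluster-averaged residual features. This follows directly from the construction and not from the kernel alone.
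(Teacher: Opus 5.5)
Your proof is correct. The unbiasedness argument and the non-degenerate case match the paper's: the same Hoeffding decomposition and the same remainder variance $2\E\langle x_1,x_2\rangle^2/[J(J-1)]$. Your constant $\kappa^2$ also holds, since that expectation equals $\operatorname{tr}(C^2)\le(\operatorname{tr}C)^2\le\kappa^2$.

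For the null limit you take a different route. The paper works with the uncentered sum:
\begin{itemize}
\item it writes $JU_J=\frac{J}{J-1}\bigl[\|J^{-1/2}\sum_j\zeta_j\|^2-J^{-1}\sum_j\|\zeta_j\|^2\bigr]$;
\item it proves a Hilbert-space CLT for $J^{-1/2}\sum_j\zeta_j$, using finite-dimensional projections plus the trace tail $\operatorname{tr}((I-\Pi_d)C)\to0$;
\item it applies continuous mapping to the squared norm and the strong law to the diagonal average, which converges to $\operatorname{tr}C$;
\item it reads off $W_C=\|B\|^2-\operatorname{tr}C$ with $B\sim\mathcal N(0,C)$.
\end{itemize}

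You never construct a Gaussian element of $\cH_k$. You truncate the already-centered U-statistic to finitely many eigencoordinates and use only the finite-dimensional CLT and LLN there. You then bound the remainder directly as a degenerate U-statistic with second moment $\frac{2J}{J-1}\sum_{b>B}\lambda_b^2$, uniformly in $J$, and finish with Billingsley's approximation theorem. This is the classical eigen-truncation argument for degenerate U-statistics.

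What your route gains:
\begin{itemize}
\item It is more elementary, with no infinite-dimensional tightness or continuous mapping.
\item Because the diagonal is removed throughout, the tail control depends on $\sum_b\lambda_b^2$ rather than $\sum_b\lambda_b$.
\end{itemize}

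What the paper's route gains:
\begin{itemize}
\item It yields the uncentered limit of $\|J^{-1/2}\sum_j\zeta_j\|^2$ as a by-product; Remark~\ref{rem:degeneracy} uses this for $V_J$.
\item Its $\|B\|^2-\operatorname{tr}C$ structure is exactly what the multiplier proof of Theorem~\ref{thm:cluster-bootstrap} mirrors with $\|B_J^*\|^2-\operatorname{tr}\widehat C_J$.
\end{itemize}

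Both arguments are complete here, because boundedness makes $C$ trace class.
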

\begin{proof}
Different prompt clusters are independent, hence every off-diagonal
summand has expectation $\langle\mu,\mu\rangle$. Write
$v_j=\zeta_j-\mu$. The Hoeffding decomposition
\citep{hoeffding1948} is
\[
U_J-\theta=\frac2J\sum_j\langle v_j,\mu\rangle
       +\frac1{J(J-1)}\sum_{j\ne\ell}\langle v_j,v_\ell\rangle.
\]
The second term has variance
$2\E\langle v_1,v_2\rangle^2/[J(J-1)]$; multiplied by $\sqrt J$,
it vanishes in mean square. The ordinary scalar CLT applies to the
first term when its variance is positive.

Under the null, $\mu=0$ and the exact identity is
\[
JU_J=\frac J{J-1}\left[
 \left\|J^{-1/2}\sum_j\zeta_j\right\|^2
 -\frac1J\sum_j\|\zeta_j\|^2\right].
\]
Finite-dimensional projection CLTs and the tail bound
$\E\|J^{-1/2}\sum_j(I-\Pi_d)\zeta_j\|^2
=\operatorname{tr}((I-\Pi_d)C)\to0$
give the Hilbert-space Gaussian limit with covariance $C$.
The strong law gives convergence of the subtracted average to
$\operatorname{tr}C$. Diagonalizing $C$ yields
\eqref{eq:cluster-null}. The series converges since
$\sum_b\lambda_b=\operatorname{tr}C<\infty$; in particular the
centered series converges in mean square.
\end{proof}

\begin{remark}[Degeneracy, diagonals, and sibling answers]
\label{rem:degeneracy}
The U-statistic can be negative although $\theta\ge0$. The empirical
squared mean $V_J=\|J^{-1}\sum_j\zeta_j\|^2$ instead satisfies
\[
U_J=\frac J{J-1}V_J-
             \frac{\sum_j\|\zeta_j\|^2}{J(J-1)}.
\]
Under the null, $JV_J$ has an uncentered weighted chi-squared limit;
$JU_J$ has the centered limit in \eqref{eq:cluster-null}. Agreement of
two computations of $V_J$ does not verify this diagonal correction.
Also, $\theta>0$ alone does not ensure $\sigma^2>0$: for equiprobable
binary $Z$, $Y=1$, $s=0$, and $k(z,z')=\mathbf1\{z=z'\}$,
$\theta=1/2$ but $\langle\zeta,\mu\rangle=1/2$ is constant when
each cluster contains one answer.

If there are $K\ge2$ answers per prompt, an all-row-pairs statistic also
includes sibling pairs. Let $\gamma$ be the average expected summand
over two distinct answers within one prompt. Counting ordered pairs gives
\[
\E[U_{\mathrm{rows}}]
=\theta+\frac{K-1}{JK-1}(\gamma-\theta).
\]
This bias is generally nonzero and is of the same order as the null
statistic. Excluding within-prompt pairs matches the estimator to the
independent sampling unit; grouping cross-validation folds alone does not
change which pairs enter the statistic.
\end{remark}

The construction below uses one multiplier per independent prompt, in
the spirit of wild-bootstrap tests for kernel statistics
\citep{chwialkowski2014wild}. Its validity is proved for the present
centered cluster statistic rather than inferred from a row-level test.

\begin{theorem}[A centered Gaussian multiplier test]
\label{thm:cluster-bootstrap}
Under the null and conditions of Theorem~\ref{thm:ustat}, let
$\bar\zeta=J^{-1}\sum_j\zeta_j$, $x_j=\zeta_j-\bar\zeta$, and
draw one independent $e_j\sim\mathcal N(0,1)$ per prompt. Set
\[
T_J^*=\frac1{J-1}\sum_{j\ne\ell}e_je_\ell\langle x_j,x_\ell\rangle.
\]
Its conditional law converges weakly almost surely to the law of $W_C$.
If $C\ne0$,
rejecting when $JU_J$ exceeds its conditional $(1-\alpha)$ quantile
has pointwise asymptotic size $\alpha$, for $0<\alpha<1$.
For Monte Carlo quantiles, the number of multiplier draws must also grow
to obtain this exact asymptotic statement.
\end{theorem}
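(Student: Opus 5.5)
The plan is to treat $T_J^*$ as a Gaussian chaos of order two in the multipliers. We then show that its conditional law, given the data, converges to $W_C$ almost surely. Conditionally on the clusters, define the empirical centered covariance operator $\widehat C_J=J^{-1}\sum_j x_j\otimes x_j$. Writing $S^*=J^{-1/2}\sum_j e_jx_j$, we get the exact identity
\[
T_J^*=\frac{J}{J-1}\Big[\|S^*\|^2-\frac1J\sum_j e_j^2\|x_j\|^2\Big].
\]
Given the data, $S^*$ is a centered Gaussian element of $\cH_k$ with covariance $\widehat C_J$. Hence $\|S^*\|^2\stackrel{d}{=}\sum_b\hat\lambda_b G_b^2$, where the $\hat\lambda_b$ are the eigenvalues of $\widehat C_J$. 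The subtracted term has conditional mean $\operatorname{tr}\widehat C_J$.

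\textbf{Step 1: convergence of the empirical covariance.} We show $\widehat C_J\to C$ almost surely in trace norm. The summands $\zeta_j\otimes\zeta_j$ are i.i.d.\ and bounded by $\kappa$ in trace norm. The strong law of large numbers in the separable Banach space of trace-class operators therefore gives $J^{-1}\sum_j\zeta_j\otimes\zeta_j\to\E[\zeta\otimes\zeta]$. Under the null $\mu=0$, the Hilbert-space SLLN gives $\bar\zeta\to0$, so $\bar\zeta\otimes\bar\zeta\to0$ as well. Since $\widehat C_J=J^{-1}\sum_j\zeta_j\otimes\zeta_j-\bar\zeta\otimes\bar\zeta$, the claim follows. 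Trace-norm convergence implies that the ordered eigenvalues converge ($\ell^1$ convergence by Lidskii/Mirsky-type inequalities) and that $\operatorname{tr}\widehat C_J\to\operatorname{tr}C$.

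\textbf{Step 2: conditional weak limit.} We fix a realization in the probability-one event of Step 1 and work conditionally on it. The centered chaos $\sum_b\hat\lambda_b(G_b^2-1)$ converges in $L^2$ to $\sum_b\lambda_b(G_b^2-1)$, because the $L^2$ distance is $2\sum_b(\hat\lambda_b-\lambda_b)^2\to0$. It remains to show that the difference between $T_J^*$ and this chaos vanishes in conditional probability. That difference consists of the factor $J/(J-1)\to1$ and the diagonal term $J^{-1}\sum_j(e_j^2-1)\|x_j\|^2$. The diagonal term has conditional variance $2J^{-2}\sum_j\|x_j\|^4\le 32\kappa^2/J\to0$, using $\|x_j\|\le2\sqrt\kappa$. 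Slutsky's lemma then gives conditional weak convergence to $W_C$ almost surely.

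\textbf{Step 3: size of the test.} Theorem~\ref{thm:ustat} gives $JU_J\Rightarrow W_C$. If $C\ne0$, then $W_C$ has a continuous distribution function: conditioning on all but one Gaussian with $\lambda_b>0$ leaves the law of $\lambda_bG_b^2$ plus an independent shift, which is continuous. Almost-sure conditional convergence to a continuous limit implies that the conditional $(1-\alpha)$ quantile converges almost surely to the $(1-\alpha)$ quantile $c_\alpha$ of $W_C$. Slutsky's lemma then gives $\Pr\{JU_J>\hat c_\alpha\}\to\Pr\{W_C>c_\alpha\}=\alpha$. For Monte Carlo quantiles, the empirical quantile over $B$ draws converges to the conditional quantile as $B\to\infty$, by Glivenko--Cantelli applied conditionally at a continuity point, so the same conclusion holds.

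\textbf{Main obstacle.} The delicate point is Step 1 together with eigenvalue stability. We need trace-norm convergence, not merely operator-norm convergence, because the centering $-1$ in $W_C$ involves the full trace. I would handle this through the Banach-space SLLN for Bochner-integrable trace-class elements, which is available because $\cH_k$ is separable. I would then control $\sum_b|\hat\lambda_b-\lambda_b|\le\|\widehat C_J-C\|_1$.
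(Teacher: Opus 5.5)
Your Steps 1--2 are sound. The identity for $T_J^*$ and the $32\kappa^2/J$ bound on the diagonal term match the paper's. Your technical route differs, though.
\begin{itemize}
\item You get $\widehat C_J\to C$ from the strong law in the separable Banach space of trace-class operators. You then pass to the limit by eigenvalue perturbation and an $L^2$ coupling of the centered chaos.
\item The paper applies the scalar strong law to finite-rank matrix entries and squared norms. It controls trace tails and off-diagonal blocks by hand and obtains the Gaussian limit through finite projections.
\end{itemize}
Yours is shorter, given Mourier's strong law and a Lidskii-type inequality; the paper's is more elementary. Your Step 2 uses only $\ell^2$ convergence of the ordered eigenvalues, so Hilbert--Schmidt convergence (via the Hoffman--Wielandt inequality) would already suffice. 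Trace norm is not the real obstacle.

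The genuine gap is in Step 3. Almost-sure conditional weak convergence to a limit whose distribution function $F$ is merely \emph{continuous} does not force the conditional $(1-\alpha)$ quantiles to converge. If $F$ equals $1-\alpha$ on a whole interval $[a,b]$, the approximating laws can place their quantiles alternately near $a$ and near $b$. Your conditioning argument excludes atoms of $W_C$, but not flat stretches of $F$. So the claim that the conditional quantile converges to $c_\alpha$ is unsupported as written.

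The missing ingredient, which the paper supplies, is that $F$ is strictly increasing on $(-\operatorname{tr}C,\infty)$. Fix $b$ with $\lambda_b>0$. Then $\lambda_bG_b^2$ has a density positive on $(0,\infty)$. The independent remainder $\sum_{b'\ne b}\lambda_{b'}G_{b'}^2\ge0$ lies in $[0,\eta)$ with positive probability for every $\eta>0$: make finitely many coordinates small and bound the tail by Markov's inequality, using $\sum_b\lambda_b<\infty$. Hence every open subinterval of $(-\operatorname{tr}C,\infty)$ has positive mass. Since $F(-\operatorname{tr}C)=0$ and $0<\alpha<1$, the quantile $c_\alpha$ is interior and unique, and your Slutsky step then goes through.

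Alternatively, you can keep only continuity and bypass quantile convergence. Let $c^-\le c^+$ be the smallest and largest $(1-\alpha)$ quantiles of $W_C$. The conditional quantile eventually lies in $[c^--\epsilon,c^++\epsilon]$ almost surely. So the limit inferior and superior of the rejection probability lie between $\Pr\{W_C>c^++\epsilon\}$ and $\Pr\{W_C>c^--\epsilon\}$, and both tend to $\alpha$ as $\epsilon\downarrow0$.

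For Monte Carlo quantiles, Glivenko--Cantelli at fixed $J$ does not by itself cover $J$ and the number of draws growing together. The Dvoretzky--Kiefer--Wolfowitz bound, which is uniform in the sampled law, does cover it when combined with the unique quantile.
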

\begin{proof}
Put $\widehat C_J=J^{-1}\sum_j x_j\otimes x_j$ and
$B_J^*=J^{-1/2}\sum_j e_jx_j$. Conditional on the observations,
$B_J^*$ is Gaussian with covariance $\widehat C_J$, and
\[
T_J^*=\frac J{J-1}\left[
 \|B_J^*\|^2-\operatorname{tr}\widehat C_J
 -\frac1J\sum_j(e_j^2-1)\|x_j\|^2\right].
\]
Empirical covariance operators converge to $C$ in trace norm almost
surely. To see this here, apply the strong law to finite-rank matrix
entries and to squared norms. The trace in the complementary projection
can be made arbitrarily small; off-diagonal blocks are controlled by
Cauchy--Schwarz. Subtracting the empirical mean contributes a rank-one
term converging to the corresponding population term.
Thus the conditional law of the first two terms converges weakly almost
surely to
$\|B\|^2-\operatorname{tr}C=W_C$ for $B\sim\mathcal N(0,C)$,
by the same finite-projection and trace-tail argument.
The last term has conditional variance
\[
\frac2{J^2}\sum_j\|x_j\|^4\le\frac{32\kappa^2}{J},
\]
so it vanishes. If $C\ne0$, at least one nonzero
$\lambda_bG_b^2$ has a density positive on $(0,\infty)$. The sum of
the remaining nonnegative terms has positive probability arbitrarily
near zero: make finitely many Gaussian coordinates small, then use
summability of the eigenvalues and Markov's inequality for the
independent tail. Convolution therefore gives a continuous CDF strictly
increasing on $(-\operatorname{tr}C,\infty)$, and hence unique interior
quantiles. Quantile convergence and Theorem~\ref{thm:ustat} give
asymptotic size.
If $C=0$ under the null, $\zeta_j=0$ almost surely; use strict rejection
and never reject the identically zero statistic.
\end{proof}

\begin{proposition}[A finite-sample upper bound for a fixed kernel]
\label{prop:kernel-ucb}
Under \ref{ass:kernel} and \ref{ass:audit}, let
$V_J=\|J^{-1}\sum_j\zeta_j\|^2$ and
$\theta=\|\E\zeta_j\|^2$. For $0<\alpha<1$, with probability at
least $1-\alpha$,
\begin{equation}
\theta\le\left[
 \sqrt{V_J}+\sqrt{\frac{\kappa}{J}}
        \left(1+\sqrt{2\log(1/\alpha)}\right)\right]^2.
\label{eq:kernel-ucb}
\end{equation}
This bound concerns the declared kernel only and is not an inverse bound
on oracle PRCE.
\end{proposition}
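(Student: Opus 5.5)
The plan is to turn the bound on $\theta$ into a bound on $\|\mu\|$, where $\mu=\E\zeta_j$. We then control the deviation of the empirical mean $\bar\zeta=J^{-1}\sum_j\zeta_j$ from $\mu$ in the Hilbert space $\cH_k$. Since $\sqrt{V_J}=\|\bar\zeta\|$ and $\sqrt\theta=\|\mu\|$, the triangle inequality gives $\sqrt\theta\le\sqrt{V_J}+\|\bar\zeta-\mu\|$. It therefore suffices to show that, with probability at least $1-\alpha$,
\[
\|\bar\zeta-\mu\|\le\sqrt{\kappa/J}\,\bigl(1+\sqrt{2\log(1/\alpha)}\bigr).
\]
Squaring both sides of the resulting inequality for $\sqrt\theta$ then yields \eqref{eq:kernel-ucb}, since both sides are nonnegative.

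The deviation bound has two parts, an expectation term and a concentration term. For the expectation, the $\zeta_j$ are i.i.d.\ under \ref{ass:audit}, because the score and kernel are fixed independently of the clusters. By Jensen and orthogonality of independent centered Hilbert-valued summands,
\[
\E\|\bar\zeta-\mu\|\le\bigl(\E\|\bar\zeta-\mu\|^2\bigr)^{1/2}=\bigl(\E\|\zeta_1-\mu\|^2/J\bigr)^{1/2}.
\]
Moreover $\E\|\zeta_1-\mu\|^2=\E\|\zeta_1\|^2-\|\mu\|^2\le\kappa$. This uses $\|\zeta_j\|\le\sqrt\kappa$, which follows from $|r_{ji}|\le1$, $\|\Phi(z)\|\le\sqrt\kappa$, and averaging within the cluster. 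Hence $\E\|\bar\zeta-\mu\|\le\sqrt{\kappa/J}$, which is the leading "$1$" term.

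For concentration, the function $f(C_1,\dots,C_J)=\|\bar\zeta-\mu\|$ of the independent clusters has bounded differences. Replacing one cluster changes $\bar\zeta$ by at most $\|\zeta_j-\zeta_j'\|/J\le2\sqrt\kappa/J$. McDiarmid's inequality then gives
\[
\Pr\{f-\E f\ge t\}\le\exp\bigl(-2t^2/(J\cdot4\kappa/J^2)\bigr)=\exp\bigl(-Jt^2/(2\kappa)\bigr).
\]
Setting this equal to $\alpha$ gives $t=\sqrt{2\kappa\log(1/\alpha)/J}$, which is exactly the second term. Measurability of $f$ follows from separability of $\cH_k$, since $\zeta_j$ is a Bochner-measurable random element.

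The main obstacle is only bookkeeping: matching the McDiarmid constant so that the bounded difference is $2\sqrt\kappa/J$ rather than $\sqrt\kappa/J$, which is what yields $\sqrt{2\log(1/\alpha)}$ exactly. Arbitrary within-prompt dependence and random $K_j$ cause no difficulty, because each cluster enters only through the bounded element $\zeta_j$. Finally, I would note that the bound is specific to the fixed kernel $k$ and, by Remark~\ref{rem:scope}, does not invert to a bound on oracle PRCE.
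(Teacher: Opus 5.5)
Your proposal is correct and matches the paper's proof step for step. Both use the triangle inequality $\|\mu\|\le\sqrt{V_J}+\|\bar\zeta-\mu\|$, the Jensen/independence bound $\E\|\bar\zeta-\mu\|\le\sqrt{\kappa/J}$, and McDiarmid's inequality with bounded differences $2\sqrt\kappa/J$, and then square the result.
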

\begin{proof}
Write $F=\|J^{-1}\sum_j\zeta_j-\mu\|$. Independence and centering
give $\E F\le\sqrt{\E F^2}\le\sqrt{\kappa/J}$.
Replacing one cluster changes $F$ by at most $2\sqrt\kappa/J$.
The bounded-differences inequality \citep{mcdiarmid1998concentration}
therefore gives $F\le\sqrt{\kappa/J}+
\sqrt{2\kappa\log(1/\alpha)/J}$ with probability at least
$1-\alpha$. Since $\|\mu\|\le\sqrt{V_J}+F$, squaring proves the claim.
\end{proof}

\paragraph{Interpreting an audit.}
The point-null test asks whether the selected feature moments reveal
residual error. Establishing a tolerance $\theta<\varepsilon$ instead
requires a one-sided procedure for $H_0:\theta\ge\varepsilon$, with a
prespecified kernel, threshold, and valid uncertainty control. Non-rejection
of the point null does not establish that tolerance or small oracle PRCE.
Proposition~\ref{prop:kernel-ucb} supplies one conservative alternative:
declare $\theta<\varepsilon$ only when its upper bound is below the
prespecified $\varepsilon$. Combine its error budget with that of any
simultaneous candidate comparisons when using
Proposition~\ref{prop:kernel-comparison}.
Repeated testing over time needs a multiple-testing or sequential policy;
the theorem conditions on a fixed score, kernel, and population law.

\subsection{Repair classes and relation to prior work}
\label{app:repair-context}

\paragraph{Inputs define what a repair can change.}
Suppose $Z$ contains every input used by the compared predictors and all
risks use the same outcome and population law. A global scalar map
$\tau(P)$ assigns the same output whenever $P$ agrees. A group-dependent
map $\tau_{\gamma(Z)}(P)$ additionally uses a group assignment $\gamma(Z)$.
When the group is not determined by $P$, it can distinguish answers that
no function of $P$ alone can distinguish. Each group map may be
nondecreasing without preserving order across groups. Both outputs are
scalars: the distinction is their input restriction, not output dimension.
If a candidate uses features outside the declared $Z$, augment the
information set before applying the common-target identities.

\citet[Section~4]{chen2024reconfidencing} learn a supervised tree partition
from LLM top-layer query embeddings, using a squared-loss criterion on
correctness labels, and then fit a separate label-based isotonic regressor
within each group. Their groups are learned, not restricted to supplied
demographic categories. The deployed correction needs the group assignment.
PAIC instead fits one global map of $P$ to predictions from an independently
trained correctness probe. Table~\ref{tab:repair-inputs} separates these
choices from direct feature-based prediction.

\citet{melo2026epistemic} develop partition-based estimators of grouping
loss and excess decision risk, with local confidence audits and LLM
cascades. Their work already connects errors hidden by average calibration
to downstream decisions. Our guarantees address a different set of objects:
the conditional probe bias affecting a global monotone map, paired risk
contrasts between frozen candidates, and residual moments with dependent
answers grouped by prompt.

All rows require accounting for fitting and tuning labels. PAIC's map does
not see its fitting prompts' labels, but its probe is label-trained.
The table describes the correction's inputs, not the cost of obtaining $P$,
which may already require learned features. A score-only constraint must be
declared before comparisons. Enlarging a population predictor class cannot worsen its optimal
risk, but finite fitted methods can differ in approximation and estimation
error. A richer fitted predictor beating a scalar fit does not identify $I$
(Appendix~\ref{app:learnability}).

\paragraph{Grouping loss and monotone pooling.}
For any measurable repair output $T_\tau=\tau(P)$, conditional variance
decomposition gives
\[
\E[\Var(\mstar\mid T_\tau)]
=I+\E[\Var(q(P)\mid T_\tau)].
\]
This is the squared-loss form of the recalibration identity in
\citet[Appendix~C.10]{perezlebel2023grouping}. A noninjective map can add
grouping loss; an invertible map preserves it. For the ideal monotone output
$T=\tau_{\rm iso}(P)$, Theorem~\ref{thm:monotone} establishes
$\E[q(P)\mid T]=T$, so the additional term is exactly $G$.
The result connects monotone pooling to this established information-loss
identity.

\paragraph{Probe targets and the role of the guarantee.}
Label-isotonic and PAIC optimize over the same monotone class but fit
different targets. Conditional on probe training,
$q_g(P)=\E[g(Z)\mid P]$ determines the population probe-loss minimizer.
Theorem~\ref{thm:paic} bounds the label-risk excess over
$N+I+G$ by $(\|b_n\|_2+\sqrt{u_n})^2$ and controls the fitting term
under independent calibration sampling. Vanishing score-conditional bias
is sufficient for convergence to the monotone benchmark; it neither removes
$I+G$ nor guarantees finite-sample superiority over label-isotonic or
group-dependent calibration. In particular, greater agreement with the
training probe is not evidence of such superiority.

\paragraph{Comparison and audit are separate from candidate construction.}
The labeled residual correction applies prediction-powered inference
\citep{angelopoulos2023ppi,angelopoulos2024ppiplusplus} to paired squared
losses; Theorem~\ref{thm:corrected-comparison} instantiates empirical
Bernstein bounds~\citep{maurer2009empirical} for frozen candidates and
independent prompt pools. It can compare scalar and feature-dependent
candidates without requiring any one of them to be correct. Its coverage
does not promise narrow intervals or lower label cost.
The residual embedding is an established kernel construction
\citep{widmann2019calibration,marx2023distribution}. Our
Theorems~\ref{thm:ustat} and~\ref{thm:cluster-bootstrap} specify
cross-prompt products and centered multipliers for dependent sibling
answers. Groupwise fitting uses labels to change predictions; an independent
audit uses labels to test mean-error structure in the selected predictor.
These different roles require separate evidence, not a common proxy score
treated as ground truth.


\clearpage
\section{Experimental details and additional results}
\label{app:followup}

This appendix gives the design, results and uncertainty conventions for the
experiments of Section~\ref{sec:emp}, in the order of the main text.
Appendix~\ref{app:benchmark-breadth} reports the fixed-choice study of
Section~\ref{sec:emp:breadth}.

\subsection{Data, scores and uncertainty}
\label{app:followup-design}

\paragraph{Cached answers and loss.}
We evaluate confidence scores for eight sampled answers per prompt from
Llama-3.2-3B-Instruct and Qwen2.5-3B-Instruct on GSM8K ($7{,}353$ prompts per
model), MATH ($5{,}729$) and a mixed-domain collection of $10{,}000$ questions.
GSM8K and MATH labels are binary final-answer matches from a fixed parser; 10K
labels are automated grades in $[0,1]$. Risk averages squared loss within each
prompt and then equally across prompts. Each labeled prompt therefore supplies
eight answer labels and forms one sampling unit. The loss is Brier loss for
binary outcomes and mean squared error for graded outcomes.

\paragraph{Three score regimes.}
The \emph{raw likelihood score} is $P=\exp(-\text{mean answer-token NLL})$,
where NLL denotes negative log-likelihood. Its probes use a fixed,
label-independent 64-coordinate random-sign projection of the final answer
state plus eight likelihood summaries, standardized on source fitting data.
A \emph{learned GSM8K-trained scorer}, with 32 learned coordinates, is
transferred to MATH in Appendix~\ref{app:deployment-transfer}.
The \emph{archived learned score} is a previously trained and
temperature-recalibrated supervised score on GSM8K and 10K. It tests the
additional benefit of scalar repair after supervised calibration. These regimes
use different fitted functions and should be interpreted separately.

\paragraph{Data separation.}
Fitting, unlabeled adaptation, labeled comparison, discovery, audit and test
roles are prompt-disjoint; all answers from a prompt stay together. Fitted
candidates, maps and thresholds were saved before evaluation outcomes were
joined. The cached answer pools had already been used in earlier analyses,
so these evaluations are retrospective.

\paragraph{Uncertainty.}
The three refits, repeated allocations and seed/fold blocks overlap; their
means and ranges are descriptive. Unless stated otherwise, paired
prompt-bootstrap intervals are pointwise and conditional on the fitted
functions. The archived-score analyses cover four dataset/model cells, three
seeds, five folds and three nested fitting budgets, giving 180 settings.
Each block contains 147--200 comparison prompts, 73--100 discovery prompts
and 75--100 audit prompts.

\subsection{Controlled PAIC fitting with equally accurate probes}
\label{app:controlled-mechanism}

We test why probes with the same mean squared error can produce different
scalar repairs. The key distinction is whether their errors cancel within
each score level or leave a persistent bias. Using three constructed probes
with known population risks, we separate score-conditional bias from
finite-sample fitting error in Theorem~\ref{thm:paic-main}. This controlled
study uses no learned probes or LLM predictions.

\paragraph{Population and targets.}
The four states $(S,H)\in\{0.35,0.65\}\times\{-1,1\}$ are equally likely.
The variable $S$ determines the score level, while $H$ distinguishes two groups
at that level. Their correctness probability is $m=S+0.10H$, and
$Y\mid S,H\sim\operatorname{Bernoulli}(m)$. The reported score
$P=S+0.02$ assigns the same confidence to both groups. We evaluate outcome
risks analytically, without sampling outcomes. Here $N=0.2175$, $I=0.01$,
and $G=0$. The optimal scalar and monotone map is $P\mapsto S$, with
risk $N+I=0.2275$; the unchanged score has risk $0.2279$.
The three probes are
\[
g_-=m-0.08H,\qquad g_{\rm offset}=m+0.08,\qquad g_+=m+0.08H.
\]
All lie in $[0,1]$ and have the same excess outcome MSE
$\E[(g-m)^2]=0.0064$, hence the same outcome MSE $0.2239$.
Their conditional targets $q_g(P)=\E[g\mid P]$ are respectively
$S$, $S+0.08$, and $S$. These targets are already nondecreasing.
Thus the population PAIC excess risks are $0$, $0.0064$, and $0$:
the two hidden-state errors average away at each score, whereas the
constant offset remains.

\paragraph{Frozen finite-sample procedure.}
For each calibration count $C\in\{20,50,100,500,2000\}$, we generate
2,000 independent samples. Each sample is a multinomial count vector
over the four equiprobable states, shared by all three probes.
Different values of $C$ use independent samples. A separate 50-repetition
pilot uses distinct random streams and does not tune this design.
We fit weighted isotonic regression at $P\in\{0.37,0.67\}$.
If both scores occur, their mean probe targets are retained unless they
cross, in which case they are pooled using the observed counts.
If one score is absent, both fitted values equal the observed group's
mean target. This extension rule is fixed before sampling. Integer
cross-products of the targets, expressed in hundredths, distinguish
strict crossings from exact ties.

For a fitted map $f$, all expectations below are over the known population,
conditional on its calibration sample. With $b\in\{0,0.08,0\}$ for the
three probes, we calculate
\[
\begin{gathered}
L(f)-(N+I)=\E[(f(P)-S)^2],\qquad
u=\E[(f(P)-(S+b))^2],\\
L(f)-(N+I)\le (|b|+\sqrt{u})^2.
\end{gathered}
\]
Here $u$ is the population probe-loss gap above its monotone minimum,
not empirical fitting loss. Exact evaluation removes outcome-sampling
noise; the reported Monte Carlo uncertainty comes only from calibration
samples.

\begin{table}[t]
\centering
\small
\setlength{\tabcolsep}{5pt}
\caption{PAIC excess risk above the scalar optimum $0.2275$.
Entries are mean (Monte Carlo standard error), \emph{both multiplied by
$10^3$}, over 2,000 repetitions. Standard errors are the empirical
standard deviation divided by $\sqrt{2000}$ and are pointwise, not
simultaneous intervals. The unchanged score's excess is $0.4$ in these units.}
\label{tab:controlled-paic-mechanism}
\begin{tabular}{rccc}
\toprule
$C$ & $g_-$ & $g_{\rm offset}$ & $g_+$ \\
\midrule
20   & $0.04347\;(0.00095)$   & $7.6023\;(0.0874)$ & $3.5208\;(0.0767)$ \\
50   & $0.01649\;(0.00036)$   & $6.8077\;(0.0535)$ & $1.3354\;(0.0294)$ \\
100  & $0.00791\;(0.00017)$   & $6.6049\;(0.0366)$ & $0.64096\;(0.01358)$ \\
500  & $0.001527\;(0.000035)$ & $6.4237\;(0.0157)$ & $0.12366\;(0.00287)$ \\
2000 & $0.000390\;(0.000009)$ & $6.4011\;(0.0079)$ & $0.03156\;(0.00073)$ \\
\midrule
Population limit & $0$ & $6.4$ & $0$ \\
\bottomrule
\end{tabular}
\end{table}

\paragraph{Results and scope.}
Table~\ref{tab:controlled-paic-mechanism} separates fitting variation from
the persistent offset bias. At $C=2000$, mean excess risks are approximately
$3.90\times10^{-7}$, $0.0064011$, and $3.16\times10^{-5}$.
Equal probe MSE does not imply equal finite-sample variation: the
within-score target amplitudes are $0.02$, $0.10$, and $0.18$.
More calibration samples bring the offset probe's repair toward its
biased population target, not toward the scalar optimum.

All 30,000 fitted maps satisfy the bound to numerical tolerance $10^{-12}$. For the two zero-offset probes, the bound is an algebraic equality. Appendix B.5 examines the same mechanism in learned probes.


\subsection{Raw-score repair under distribution shift}
\label{app:followup-shifts}
\label{app:followup-shift}

The shift study crosses two models, three shifts, three refits, source fitting
budgets $F\in\{100,500,1000\}$ and unlabeled adaptation budgets
$U\in\{100,500,1000\}$. Difficulty transfer uses MATH levels 1--2 as source and
level 3 as target; task transfer uses GSM8K as source and MATH as target; subject
transfer uses 10K general knowledge and history as source and science as target,
excluding mathematics. Test sets have 598, 3,085 and 1,300 prompts per refit.
The primary comparison fixes $F=U=500$.

Platt and beta minimize outcome-targeted cross-entropy, with soft targets for
grades; beta uses $\operatorname{sigmoid}(a\log P-b\log(1-P)+c)$ with
nonnegative bounded $a,b$ \citep{kull2017beta}. The flexible scalar model uses
60 histogram-boosting iterations, eight leaves, minimum leaf size 32 and unit
$L_2$ regularization. Ridge uses penalty 10; the tree probe, prespecified as
primary, uses 48 ExtraTrees with maximum depth 10, minimum leaf size 32 and
feature fraction .7. The group method fits an at-most-eight-leaf feature tree on
one source half and within-leaf isotonic maps on the other, following
\citet{chen2024reconfidencing} with projected answer states instead of their
query embeddings. Scalar methods and PAIC serve only $P$; direct probes need the
72 probe inputs. The target-isotonic diagnostic uses labeled target prompts and
no source labels, so it is a yardstick rather than an equally supervised
competitor.

\begin{table}[t]
\centering\footnotesize\setlength{\tabcolsep}{1.7pt}
\caption{Primary shift risks at $F=U=500$, averaged over three overlapping
refits (lower is better). ``Group'' is the adapted group calibrator of
\citet{chen2024reconfidencing}, which serves $P$ and group features; direct
trees serve the probe's features. ``Target'' is the isotonic diagnostic with
250 target labels and no source labels; every other fitted method uses 500
source labels, and PAIC also uses 500 unlabeled target prompts.}
\label{tab:followup-shifts}
\begin{tabular}{llrrrrrrrrrr}
\toprule
Shift & Model & Original & \shortstack{Source\\iso.} & Platt & Beta & \shortstack{Non-\\mono.} & \shortstack{PAIC\\ridge} & \shortstack{PAIC\\trees} & Group & \shortstack{Direct\\trees} & Target\\
\midrule
Difficulty & Llama & .296066 & .251714 & .251608 & .251833 & .253394 & .237544 & .238951 & .249526 & .193440 & .236763\\
 & Qwen & .229475 & .210177 & .209458 & .209722 & .211439 & .194298 & .196389 & .206683 & .138683 & .194654\\
Task & Llama & .247051 & .238183 & .237836 & .236746 & .238634 & .234934 & .218722 & .246771 & .199584 & .219169\\
 & Qwen & .175655 & .185457 & .185327 & .182487 & .182422 & .166735 & .165191 & .184328 & .148355 & .159922\\
Subject & Llama & .325099 & .099458 & .103151 & .102349 & .099871 & .083310 & .083183 & .099059 & .054461 & .082144\\
 & Qwen & .253062 & .122240 & .129469 & .128146 & .120107 & .117090 & .116237 & .125438 & .074457 & .112542\\
\bottomrule
\end{tabular}
\end{table}

With either probe, PAIC has lower mean risk than every source-fitted scalar map
and the unchanged score in all six cells (Table~\ref{tab:followup-shifts}). Tree PAIC improves on source-fitted beta in
all 18 refits, with all 18 pointwise paired intervals below zero; ridge PAIC
does so in 17. Against the flexible scalar map, Qwen subject transfer is less
decisive: one of three ridge intervals and two of three tree intervals exclude
zero. Relative to source label-isotonic, tree and ridge PAIC lower mean risk by
$4.9$--$16.4\%$ and $1.4$--$16.2\%$ and close $62$--$102\%$ and $17$--$102\%$ of
the gap to the target-isotonic yardstick (median $87\%$ over both probes and six
cells). Direct probes have lower mean risk in all six cells. PAIC improves the deployed map of P; retaining the probe inputs enables further risk reduction. The group
calibrator, the closest published comparator, improves on source
label-isotonic in four of six cells but is above PAIC in all six and above the
unchanged score under Qwen task transfer.


\paragraph{Learning curves and boundaries.}
With $U=500$, increasing $F$ from 100 to 1,000 lowers difficulty-shift
ridge-PAIC risk from .241287 to .237034 for Llama and from .194314 to .193718
for Qwen. Under Llama task transfer, ridge PAIC worsens from .234934 at $F=500$
to .249177 at $F=1000$, above the unchanged score (.247051), while tree PAIC
stays stable (.218722 to .218441); probe and variant choice is therefore left to
the labeled adoption test of Appendix~\ref{app:followup-selection}. More
unlabeled target data changes risk less: Qwen task-transfer ridge risks are
.165542/.166735/.166094 at $U=100/500/1000$. Within a single domain, at
$F=U=500$, ridge PAIC and label-isotonic differ by less than .0004 in every
GSM8K, MATH and 10K cell mean, as expected when source and target share one
score--correctness curve. On the archived learned score, every scalar map
changes risk by less than $10^{-3}$ (Table~\ref{tab:followup-original}); PAIC
improves on label-isotonic by .000108--.000401 in the four full-budget means.

\begin{table}[t]
\centering\small\setlength{\tabcolsep}{4pt}
\caption{Archived learned score at the full downstream fitting budget.
Entries average 15 overlapping seed/fold blocks per cell.}
\label{tab:followup-original}
\begin{tabular}{llrrrrrr}
\toprule
Dataset & Model & Original & \shortstack{Label\\iso.} & \shortstack{Flexible\\scalar} & Platt & PAIC & Direct $g$\\
\midrule
GSM8K & Llama & .104379 & .105213 & .104840 & .104688 & .104812 & .106254\\
 & Qwen & .085551 & .085418 & .085300 & .085266 & .085281 & .086493\\
10K & Llama & .049106 & .048715 & .048548 & .048779 & .048607 & .048427\\
 & Qwen & .054499 & .054428 & .054160 & .054534 & .054235 & .054067\\
\bottomrule
\end{tabular}
\end{table}

\subsection{Equally informed adaptation}
\label{app:equal-info}

\paragraph{Design.}
The comparison reuses the six model--shift settings and three overlapping refits
above. Every adaptation method is allowed 500 labeled source prompts, 500
unlabeled target prompts and an independent pool of 500 source prompts whose
labels are masked; each prompt keeps its eight siblings with equal weight. The
tree and ridge teachers are reused; trees are primary and ridge is a
prespecified sensitivity. Target-probe Platt and beta fit exactly the target
prompts and teacher predictions used by PAIC, minimizing soft cross-entropy.
Source-distilled isotonic fits the same teacher's predictions on the masked
source pool, isolating the value of target adaptation. Importance-weighted (IW)
isotonic fits the 500 source outcomes with prompt-level density ratios estimated
by five-fold domain classification \citep{park2020calibrated}; the classifier
receives the prompt means of the 72 probe inputs and the mean original
confidence, but no outcomes or task labels. Logistic regularization is fixed at
$C=0.1$, odds are clipped to $[0.1,10]$, and one weight applies to all siblings
of a prompt. The group calibrator uses no unlabeled target prompts and serves
group features, so its risks appear in Table~\ref{tab:followup-shifts}.

\begin{table}[t]
\centering\footnotesize\setlength{\tabcolsep}{3.4pt}
\caption{Equally informed adaptation: target risks averaged over three
overlapping refits (lower is better). CE and SQ denote cross-entropy and
squared-loss fitting of the target-probe maps; the SQ maps are a posthoc
sensitivity with its own contrast family.}
\label{tab:equal-info-risks}
\begin{tabular}{llrrrrrrr}
\toprule
 & & & \multicolumn{2}{c}{Target-probe Platt} & \multicolumn{2}{c}{Target-probe beta} & \shortstack{Source-} & \\
\cmidrule(lr){4-5}\cmidrule(lr){6-7}
Shift & Model & PAIC & CE & SQ & CE & SQ & \shortstack{distilled} & IW iso.\\
\midrule
Difficulty & Llama & .238951 & .240000 & .240173 & .239466 & .239557 & .251711 & .237170\\
 & Qwen & .196389 & .198206 & .198034 & .196995 & .196972 & .207772 & .197005\\
Task & Llama & .218722 & .219675 & .219552 & .219042 & .218932 & .236728 & .235725\\
 & Qwen & .165191 & .167125 & .166397 & .165519 & .165316 & .180906 & .184300\\
Subject & Llama & .083183 & .084982 & .084411 & .084557 & .084132 & .099035 & .092045\\
 & Qwen & .116237 & .118592 & .118227 & .117870 & .117605 & .123411 & .124173\\
\bottomrule
\end{tabular}
\end{table}

\paragraph{Paired contrasts.}
The 72 prespecified primary contrasts compare tree-PAIC with the four
comparators in each of the 18 panels. Losses average siblings within a prompt,
and uncertainty uses 1,999 prompt bootstraps with pointwise intervals and
approximate normal-Bonferroni intervals across all 72 contrasts
(Table~\ref{tab:equal-info-contrasts}, Figure~\ref{fig:equal-info-contrasts}).
Within-panel four-contrast max-$t$ intervals favor PAIC in 53 of 72 contrasts
and importance weighting in one; no contrast favors a comparator under the
global adjustment. PAIC's estimate is higher in five contrasts: against beta by
.000046 (Qwen task) and .000316 (Llama difficulty), and against IW by .003894
and .001453 (Llama difficulty) and .000373 (Qwen difficulty), each in one refit.
Only the .003894 difference has a pointwise interval excluding zero,
$[.001080,.006801]$. Direct tree predictors remain more accurate than PAIC in all
six settings.

\begin{table}[t]
\centering\small\setlength{\tabcolsep}{5pt}
\caption{Counts of the 18 refit-level contrasts per comparator in which PAIC
has the lower estimate, and in which pointwise or globally adjusted intervals
favor PAIC. The squared-loss rows form a separate 36-contrast family.}
\label{tab:equal-info-contrasts}
\begin{tabular}{lccc}
\toprule
Comparator & \shortstack{Lower PAIC\\estimate} & \shortstack{Pointwise interval\\favors PAIC} & \shortstack{Global interval\\favors PAIC}\\
\midrule
Target-probe Platt (CE) & 18 & 15 & 10\\
Target-probe beta (CE) & 16 & 10 & 8\\
Source-distilled isotonic & 18 & 18 & 18\\
Importance-weighted isotonic & 15 & 12 & 11\\
Total & 67/72 & 55/72 & 47/72\\
\midrule
Target-probe Platt (SQ) & 18 & 13 & 9\\
Target-probe beta (SQ) & 15 & 9 & 7\\
\bottomrule
\end{tabular}
\end{table}

\begin{figure}[t]
\centering
\includegraphics[width=\linewidth]{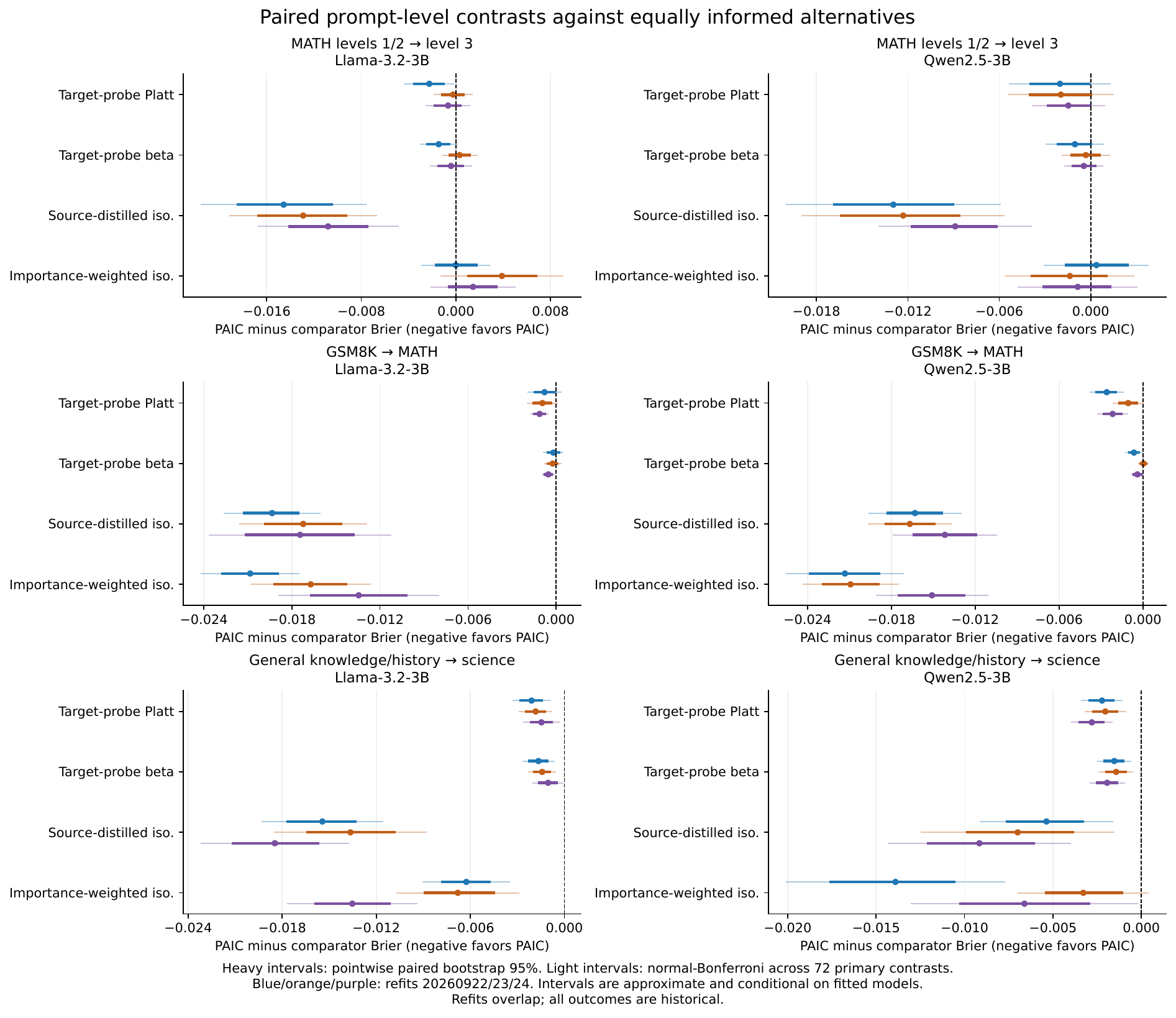}
\caption{\textbf{Paired contrasts against equally informed alternatives.}
PAIC minus comparator Brier risk for each refit (negative favors PAIC). Heavy
bars are pointwise paired-bootstrap $95\%$ intervals; light bars are
normal-Bonferroni intervals across the 72 primary contrasts. Intervals are
conditional on the fitted models.}
\label{fig:equal-info-contrasts}
\end{figure}

\paragraph{Fitting objective.}
The cross-entropy maps and isotonic PAIC minimize different objectives. A
sensitivity analysis fits 36 further Platt and beta maps that minimize squared
error to the same target teacher values, with the same $10^{-6}$ parameter
penalty and bounds and two fixed starts chosen by adaptation loss alone. Every
selected map lowers adaptation teacher error relative to its cross-entropy
version, and on test data squared-loss beta improves on cross-entropy beta in 15
of 18 refits (Platt in 14). PAIC still has the lower estimate against
squared-loss Platt in 18 of 18 refits and against squared-loss beta in 15 of 18,
with mean gains over the latter of .000125--.001368 per cell. The objective thus
explains part of the gap to parametric maps; the remainder is a modest mapping
benefit that holds in most refits.

\paragraph{Overlap and importance weighting.}
Natural-shift overlap differs sharply (Table~\ref{tab:equal-info-overlap}).
Clipping inflates the effective sample size (ESS) by equalizing tiny weights but
does not restore missing support. Task and subject shifts have nearly separable source and target inputs. We therefore also evaluate randomized GSM8K shifts with known overlap. Estimated and oracle importance weighting give similar risks in these controls, and all target-aware maps have similar mean risks (Table~\ref{tab:equal-info-overlap}). Six
randomized GSM8K controls therefore assign each prompt to source or target with
known probability $e(X)=0.2+0.6\,\operatorname{sigmoid}(z)$, where $z$
standardizes the prompt's mean original confidence; this provides overlap and
preserves the conditional outcome law by design. In these controls all
target-aware maps nearly tie, PAIC has the lowest mean risk, and estimated and
oracle ($e/(1-e)$) weighting agree.

\begin{table}[t]
\centering\small\setlength{\tabcolsep}{4pt}
\caption{Top: overlap diagnostics for importance weighting, averaged over
three refits; ESS counts source prompts out of 500. Bottom: target risks in
the randomized-overlap GSM8K controls.}
\label{tab:equal-info-overlap}
\begin{tabular}{llcccc}
\toprule
Shift & Model & Domain AUC & Raw ESS & Clipped ESS & Lower-clipped fraction\\
\midrule
Difficulty & Llama & .643 & 210.0 & 225.3 & .026\\
 & Qwen & .672 & 195.3 & 229.7 & .023\\
Task & Llama & .999 & 46.0 & 347.8 & .956\\
 & Qwen & .998 & 26.8 & 137.2 & .902\\
Subject & Llama & .994 & 24.6 & 56.3 & .808\\
 & Qwen & .993 & 7.9 & 43.3 & .798\\
Randomized GSM8K & Llama & .586 & 275.8 & 280.9 & .007\\
 & Qwen & .585 & 300.1 & 304.9 & .019\\
\bottomrule
\end{tabular}

\medskip
\begin{tabular}{lcccccc}
\toprule
Model & Source iso. & Estimated IW & Oracle IW & PAIC & Target-probe beta & Source-distilled\\
\midrule
Llama & .144594 & .144691 & .144531 & .144009 & .144090 & .144074\\
Qwen & .107361 & .107121 & .107149 & .106985 & .107043 & .106991\\
\bottomrule
\end{tabular}
\end{table}

\subsection{Conditional bias of learned probes}
\label{app:followup-bias}

The raw-score diagnostic compares ridge probes with 16 or 72 inputs and
ExtraTrees, fitted on identical source labels.
Bounded perturbations
separate score-dependent bias from cancelling error: independent-sign
perturbations raise probe MSE by .006470 on average but PAIC risk by only
.000029, whereas score-dependent perturbations raise MSE by .000459 and PAIC
risk by .001161.

On the archived learned score, four probes (ridge penalties .1 and 10; 32
shallow and 64 deeper ExtraTrees) and nested unlabeled calibration prefixes give
720 probe fits and 2,880 maps. Reversals persist: on GSM8K Qwen (seed 42, fold 0,
quarter budget), shallow trees have lower direct risk than ridge (.090788 versus
.094128) but a worse PAIC map (.089870 versus .087777; paired interval for the
gap $[.000632,.003630]$).F
Coarse bins approximate,
rather than identify, the conditional-bias norm of Theorem~\ref{thm:paic-main}:
the reversals support the mechanism, while reliable target-label-free probe
selection remains open, and the labeled adoption test is the safeguard.

\subsection{A learned score transferred to harder MATH problems}
\label{app:deployment-transfer}
\label{app:target-adaptation}

This study uses a learned scorer rather than the raw likelihood score. Each
model's final scorer, trained on GSM8K, is applied to cached MATH answers with
eight answers per problem \citep{hendrycksmath2021}; levels 1--2 supply 3,110
source problems and level 3 supplies the target. Difficulty is observational:
the levels also differ in subject mix and answer length. Labels are the
original parser matches, replayed exactly for all $91{,}664$ answers. A fixed
hash splits the source problems into probe fitting $F$ (1,555), scalar-map
calibration $C$ (777) and source evaluation (778). The probe takes the scorer's
32 learned coordinates and $P$ (a 256-component Nystr\"om ridge model).
Label-isotonic is fitted on $F$ outcomes and source PAIC on $C$ with the
probe's predictions, so both use the same 12,440 source labels; the direct probe
serves the richer inputs.

\paragraph{Transfer.}
Both scalar repairs lower squared loss relative to $P$ on source and on target,
and all eight corresponding Bonferroni-adjusted ($99.6875\%$) intervals exclude
zero. On source, the PAIC--isotonic difference is unresolved; on all 2,619
level-3 problems, label-isotonic is lower by $0.00087$ for Llama and $0.00094$
for Qwen, both intervals excluding zero. The repairs' gain over $P$ shrinks from source to
target for Llama ($0.09391$ to $0.05793$ for PAIC) and grows for Qwen
($0.00466$ to $0.01242$).

\paragraph{Adaptation with unlabeled target answers.}
Keeping the scorer and probe fixed, target PAIC refits only the monotone map,
on 777 level-3 problems whose labels are not used; the remaining 1,842 level-3
problems form a common evaluation set (Table~\ref{tab:target-adaptation-levels}).
For Llama, target PAIC lowers risk by $0.00633$ relative to source PAIC and by
$0.00552$ relative to source label-isotonic (Bonferroni $98.75\%$ intervals
$[-0.00792,-0.00479]$ and $[-0.00689,-0.00418]$). The gain is not uniform
across subjects: Prealgebra risk rises by $0.00456$ (pointwise interval
$[0.00194,0.00715]$), while Intermediate Algebra and Precalculus improve most.
For Qwen, the change relative to source PAIC is unresolved ($-0.00008$,
interval $[-0.00026,0.00010]$), and target PAIC remains $0.00083$ above source
label-isotonic ($[0.00019,0.00149]$). In the highest score bin, the fixed probe
overpredicts correctness by $0.068$ for Llama and $0.049$ for Qwen; such coarse
bins are consistent with score-conditional bias but neither estimate $b_n$ nor
isolate the cause of any risk difference.

\paragraph{Review decisions.}
With perfect review at cost $c=0.10$ (accept when the score is at least
$0.90$), both target PAIC maps review every answer and so match the
always-review loss $0.10$. For Qwen, acceptance falls from $73.00\%$ under
source PAIC to zero because the largest adapted score, $0.899516$, lies just
below the threshold, whereas source PAIC reaches $0.900015$. A small change in
squared risk near a threshold can thus change many decisions.

\begin{table}[!ht]
\centering
\caption{A learned GSM8K-trained score on MATH level 3: squared risk and
squared disagreement with the fixed probe on the common 1,842-prompt evaluation
set (eight answers per prompt), with pointwise 95\% intervals.}
\label{tab:target-adaptation-levels}
\small
\setlength{\tabcolsep}{3pt}
\begin{tabular}{llrr}
\toprule
Model & Candidate & Squared risk & Probe disagreement \\
\midrule
Llama & Unchanged score & $0.30109\;[0.29077, 0.31163]$ & $0.10114\;[0.09632, 0.10616]$ \\
Llama & Source label-isotonic & $0.23767\;[0.23206, 0.24340]$ & $0.01821\;[0.01730, 0.01913]$ \\
Llama & Source PAIC & $0.23848\;[0.23284, 0.24424]$ & $0.01858\;[0.01764, 0.01954]$ \\
Llama & Target PAIC & $0.23215\;[0.22695, 0.23740]$ & $0.01725\;[0.01636, 0.01816]$ \\
Llama & Direct probe & $0.21456\;[0.20900, 0.22012]$ & $0.00000\;[0.00000, 0.00000]$ \\
\midrule
Qwen & Unchanged score & $0.16308\;[0.15296, 0.17326]$ & $0.01505\;[0.01397, 0.01618]$ \\
Qwen & Source label-isotonic & $0.15026\;[0.14161, 0.15896]$ & $0.00249\;[0.00222, 0.00279]$ \\
Qwen & Source PAIC & $0.15117\;[0.14249, 0.15991]$ & $0.00191\;[0.00167, 0.00218]$ \\
Qwen & Target PAIC & $0.15109\;[0.14239, 0.15983]$ & $0.00191\;[0.00166, 0.00218]$ \\
Qwen & Direct probe & $0.15123\;[0.14253, 0.15993]$ & $0.00000\;[0.00000, 0.00000]$ \\
\bottomrule
\end{tabular}
\par\smallskip
\begin{minipage}{0.98\linewidth}\footnotesize The direct probe serves richer inputs. Source fits are fixed; each target PAIC map is fitted on 777 separate level-3 prompts without their labels.\end{minipage}
\end{table}

\subsection{Controlled presentation changes}
\label{app:controlled-presentation}

This study asks whether an unlabeled target projection improves a scalar
confidence when a problem's presentation changes but its answer and label do
not. Constructed problems track integer counts of red tokens through additions,
removals and transfers among crates. Eight strata cross 4/8/12/16 operations
with 4/6 crates, and each structural family has two numerical instances. Entity
renaming, fact reordering and irrelevant context supply three known
transformations. Llama-3.2-3B-Instruct and Qwen2.5-3B-Instruct each answer the
canonical problem once, and every presentation reuses that answer, so only the
confidence changes. The design tests held-out structural families under known
transformations.

\paragraph{Score, teachers and supervision.}
Here $P$ is the model's probability of the Yes verification sequence,
normalized over Yes and No for the question and fixed answer; it differs from
both the answer-token likelihood and the fixed-choice score of
Appendix~\ref{app:benchmark-breadth}. The rich teacher uses three prefix
hidden-state blocks (last, middle and quarter) plus $P$; its matched score-only
teacher uses $P$ alone. Both are Nystr\"om kernel ridge models with 256
components, ridge penalty 1 and unit kernel scale, fitted with five family
folds. Per model, the family-disjoint roles are fitting $F$ (1,200 families;
2,400 graded answers), canonical projection $C$ (600 families), target
projection $U$ (600 families, one transformed view each, balanced within
strata), and evaluation $T$ (1,800 families; 3,600 graded answers). Canonical
fitting uses the 2,400 labeled rows. Augmented fitting, denoted
$\mathrm{aug}$, uses the three transformed views of the same answers: 7,200
rows with the same 2,400 labels. Thus $\mathrm{ISO}_{\mathrm{aug}}$ fits
label-isotonic to transformed scores; target PAIC projects the canonical rich
teacher on $U$; $\mathrm{PAIC}_{\mathrm{aug}}$ projects a transformed-trained
rich teacher; and $\mathrm{PAIC}_{P,\mathrm{aug}}$ projects its score-only
counterpart. Source PAIC projects on canonical $C$.

\paragraph{Evaluation.}
Every evaluation answer has a binary solver-based grade; answers that cannot
be parsed count as incorrect (49 of 3,600 for Llama, including 37 truncated,
and 16 for Qwen, including 10 truncated). Primary risk averages the two
instances and three transformed presentations within each family and applies
fixed stratum weights: $0.05$ for each four-operation stratum and $0.15$ for
each other stratum. Families, not presentation rows, are the sampling units.
The six primary contrasts in Figure~\ref{fig:controlled-presentation} use
approximate stratified finite-population Welch-$t$ intervals with
$\alpha=0.05/6$ each, conditional on the fits and grades. All six point
estimates favor the comparator; four intervals exclude zero and the two Qwen
augmented comparisons are unresolved (Table~\ref{tab:controlled-presentation-risk}).
In point estimates, transformed-view fitting helps the direct rich teacher
most, while no projected map recovers that gain.

\begin{table}[!ht]
\centering\footnotesize
\caption{Controlled presentation changes: target Brier risk for all twelve
candidates, averaging the three transformed views and two instances within
each family with fixed stratum weights. All fitted candidates share 2,400
unique fitting labels per model; the six primary paired contrasts are in
Figure~\ref{fig:controlled-presentation}.}
\label{tab:controlled-presentation-risk}
\begin{tabular}{lrr}
\toprule
Candidate & Llama 3.2 3B & Qwen 2.5 3B \\
\midrule
Original $P$ & 0.295817 & 0.377312 \\
Canonical label-isotonic & 0.247903 & 0.241805 \\
Source PAIC & 0.246479 & 0.244422 \\
Target PAIC & 0.250821 & 0.247072 \\
Direct rich teacher & 0.243782 & 0.231066 \\
Direct score-only teacher & 0.244477 & 0.242851 \\
Target score-only PAIC & 0.245427 & 0.242856 \\
$\mathrm{ISO}_{\mathrm{aug}}$ & 0.245572 & 0.241482 \\
Direct rich teacher, transformed fitting & 0.235877 & 0.223031 \\
Direct score-only teacher, transformed fitting & 0.244191 & 0.242760 \\
$\mathrm{PAIC}_{\mathrm{aug}}$ & 0.247091 & 0.242958 \\
$\mathrm{PAIC}_{P,\mathrm{aug}}$ & 0.245234 & 0.242692 \\
\bottomrule
\end{tabular}
\end{table}

\subsection{An offline verifier as teacher}
\label{app:verifier-projection}
\label{app:teacher}

Can a score-only map keep part of the benefit of an extra verifier pass? For
Qwen2.5-3B-Instruct under the MATH difficulty shift, a fixed hash selects one
cached answer per prompt: 500 labeled level-1/2 source prompts, 500 unlabeled
level-3 adaptation prompts and 598 level-3 test prompts. A second pass of the
same checkpoint scores \texttt{True} against \texttt{False} for the question and
fixed answer, without a reference answer. A positive Platt map fitted to the 500
source grades calibrates this verifier, and the calibrated verifier is the
teacher. PAIC fits a nondecreasing map of the likelihood score to the teacher on
the adaptation prompts. Target Platt and monotone beta maps receive the same
inputs and teacher values, fitted by cross-entropy (CE) or Brier loss; source
isotonic and Platt use the source grades. Supervision is thus shared, while the
information and computation needed at serving differ.

PAIC lowers squared risk from $0.21274$ to $0.19718$ while serving only the
original score, and it improves on both target Platt variants
(Table~\ref{tab:verifier-projection-risk}). Comparisons with source isotonic,
source Platt, the raw verifier and both target beta variants are unresolved.
The calibrated verifier reaches $0.15506$: the projection keeps part of the
teacher's benefit and leaves a gap of $0.04212$ ($[0.02548,0.05914]$).

\begin{table}[!ht]
\centering
\caption{Offline verifier as teacher: squared risk and paired differences on
the 598 test prompts. $\Delta=L(\mathrm{PAIC})-L(\mathrm{candidate})$; negative
values favor PAIC. Intervals are pointwise 95\% paired prompt-bootstrap
intervals (1,999 draws), conditional on the fitted maps.}
\label{tab:verifier-projection-risk}
\footnotesize
\setlength{\tabcolsep}{4pt}
\begin{tabular}{lrrl}
\toprule
Candidate & MSE & $\Delta$ & 95\% interval for $\Delta$ \\
\midrule
Original score & 0.21274 & $-0.01556$ & $[-0.02088,\,-0.01031]$ \\
Source isotonic & 0.19748 & $-0.00030$ & $[-0.00863,\,0.00767]$ \\
Source Platt & 0.19622 & $0.00096$ & $[-0.00351,\,0.00574]$ \\
Raw verifier & 0.19672 & $0.00046$ & $[-0.02123,\,0.02271]$ \\
Calibrated verifier & 0.15506 & $0.04212$ & $[0.02548,\,0.05914]$ \\
PAIC & 0.19718 & $0.00000$ & $[0.00000,\,0.00000]$ \\
Target Platt (CE) & 0.20071 & $-0.00353$ & $[-0.00640,\,-0.00086]$ \\
Target beta (CE) & 0.19853 & $-0.00135$ & $[-0.00339,\,0.00059]$ \\
Target Platt (Brier) & 0.20020 & $-0.00302$ & $[-0.00577,\,-0.00048]$ \\
Target beta (Brier) & 0.19817 & $-0.00099$ & $[-0.00293,\,0.00086]$ \\
\bottomrule
\end{tabular}
\end{table}

\paragraph{Review decisions and cost.}
Under the perfect-review policy, PAIC accepts only 6 of 598 answers at $c=0.10$
(loss $0.10401$, just above reviewing everything), and all four target
Platt/beta variants review every answer; at $c=0.25$ and $0.50$ the calibrated
verifier has the lowest loss (Table~\ref{tab:verifier-projection-policies}).
On a separate sample of 16 newly generated answers, the extra verifier pass took
$0.0401$ seconds against $11.203$ seconds for generation and original
confidence, about $0.36\%$ of the recorded time. The separate timing experiment measures the verifier’s incremental inference cost for this fixed teacher.

\begin{table}[!ht]
\centering
\caption{Offline verifier as teacher: review policies at three review costs.
Each pair reports mean decision loss and automated acceptance ($A$, percent)
under perfect review; scoring latency is not included.}
\label{tab:verifier-projection-policies}
\footnotesize
\setlength{\tabcolsep}{3.5pt}
\begin{tabular}{lrrrrrr}
\toprule
& \multicolumn{2}{c}{$c=0.10$} & \multicolumn{2}{c}{$c=0.25$} & \multicolumn{2}{c}{$c=0.50$} \\
Candidate & Loss & $A$ & Loss & $A$ & Loss & $A$ \\
\midrule
Original score & 0.14732 & 58.03 & 0.25962 & 97.83 & 0.26756 & 100.00 \\
Source isotonic & 0.14448 & 49.16 & 0.24582 & 95.32 & 0.26505 & 97.83 \\
Source Platt & 0.12659 & 33.61 & 0.22952 & 88.46 & 0.26672 & 99.50 \\
Raw verifier & 0.18846 & 88.80 & 0.20778 & 89.80 & 0.23495 & 90.47 \\
Calibrated verifier & 0.11488 & 55.35 & 0.19064 & 84.62 & 0.23328 & 89.80 \\
PAIC & 0.10401 & 1.00 & 0.24540 & 95.48 & 0.26672 & 99.83 \\
Target Platt (CE) & 0.10000 & 0.00 & 0.26756 & 100.00 & 0.26756 & 100.00 \\
Target beta (CE) & 0.10000 & 0.00 & 0.25962 & 97.83 & 0.26756 & 100.00 \\
Target Platt (Brier) & 0.10000 & 0.00 & 0.26756 & 100.00 & 0.26756 & 100.00 \\
Target beta (Brier) & 0.10000 & 0.00 & 0.25836 & 97.66 & 0.26756 & 100.00 \\
Review all & 0.10000 & 0.00 & 0.25000 & 0.00 & 0.50000 & 0.00 \\
Accept all & 0.26756 & 100.00 & 0.26756 & 100.00 & 0.26756 & 100.00 \\
\bottomrule
\end{tabular}
\end{table}

\subsection{Controlled tests of residual-corrected comparisons}
\label{app:controlled-comparisons}

These known-population controls evaluate the comparison procedure of
Appendix~\ref{app:corrected-comparisons}; they contain no fitted repairs or LLM
predictions. Independent prompts have uniformly distributed
$(S,H)\in\{0.35,0.65\}\times\{-1,1\}$, conditional mean $m=S+dH$ and starting
score $P=S+c$, with $d=0.1$ and $c=0.02$ by default. The six candidates are $P$,
$S$, $S+0.1H$, $1/2$, $S-0.1H$ and $g$, and the probe is one of $m$, $m+0.08$,
$S$ or $1-m$. The 28 settings comprise 12 singleton binary controls (four
probes, $J_L\in\{100,500,2000\}$, $J_U=4J_L$), four balanced-pool controls
($J_L=J_U=500$), four no-useful-repair controls ($d=c=0$), six sibling controls
($K=8$ with $\rho\in\{0,0.8\}$, or $K=2$ versus $16$ by subgroup) and two graded
controls; each has 2,000 repetitions. We compare paired labels, the uncorrected
proxy comparison and correction with $\lambda=1$, using the global
empirical-Bernstein ranges over the 15 pairs at $\alpha=0.05$.

\paragraph{Correction removes proxy bias.}
Outside the no-useful-repair controls, the true contrast is $L(P)-L(S)=0.0004$,
favoring $S$. For the offset binary control with $(J_L,J_U)=(500,2000)$, mean
estimates over 2,000 repetitions are $0.000412584$ for paired labels,
$-0.002805102$ for the proxy and $0.000397034$ after correction (Monte Carlo
standard errors $0.000019079$, $0.000002000$ and $0.000018762$): the proxy
precisely estimates the wrong target, $-0.0028$. Corrected-to-labeled estimator
SD ratios are $0.9801$, $1.0000$ and $1.3044$ for the binary oracle, blind and
reversed probes and $0.7979$ for the graded oracle, so precision gains depend on
the probe.

\paragraph{Two-sided bounds cover but do not decide.}
Both empirical-Bernstein procedures have $100\%$ observed simultaneous coverage
and retain $P$ in every repetition. At $(J_L,J_U)=(2000,8000)$ the range-only
half-widths are about $0.01655$ for paired labels and $0.04088$ for correction,
exceeding the largest true improvement, $0.0104$. A normal benchmark with
Bonferroni critical values has observed simultaneous coverage $0.971$--$0.991$
for paired labels and $0.9725$--$0.991$ for correction across the 28 settings,
with asymptotic justification only. This motivates the one-sided adoption test
of Proposition~\ref{prop:betting-main}.

\subsection{Adoption tests with matched finite inference}
\label{app:followup-selection}

\paragraph{Procedures.}
For non-original candidate $h$ and prompt $j$, $D_{jh}$ is the prompt-averaged
squared-loss difference from $P$. A certificate $\delta_h\ge\sup|h-P|$ gives the
range $[-B_h,B_h]$ with $B_h=\delta_h(2-\delta_h)$; observed maxima are not
certificates. For a scalar map, $\sup_p|\hat\tau(p)-p|$ is computed exactly from
the fitted map, while a feature-based candidate that admits no tighter bound
takes the certificate implied by its output range, which widens $B_h$. Both
primary tests use the one-sided null that $h$ does not lower expected squared
loss, the same certified ranges, the same family of $H$ candidates, the same
fixed comparison budget and familywise $\alpha=0.05$. The one-sided
empirical-Bernstein (EB) test \citep{maurer2009empirical} certifies $h$ when
\[
\overline D_h+\sqrt{\frac{2s_h^2\log(2H/\alpha)}{n}}
+\frac{7(2B_h)\log(2H/\alpha)}{3(n-1)}<0,
\]
and betting certifies $h$ when the e-value of Proposition~\ref{prop:betting},
with $\mathcal S=\{.05,.1,.25,.5,.9\}$, reaches $H/\alpha$. Both adopt the
certified candidate with the lowest paired mean and otherwise retain $P$.
Assisted variants add a frozen probe's contrasts on independent unlabeled
prompts (Appendix~\ref{app:betting}), with the weight grid
$\{0,.25,.5,.75,1\}$ and candidate-by-weight Bonferroni correction. Two-sided EB
intervals, which use $\log(4H/\alpha)$, and same-sample PPI++
\citep{angelopoulos2024ppiplusplus}, with weights clipped to $[0,1]$, serve as
references; PPI++ is asymptotic. Guarantees apply to a prespecified budget and
procedure, not to choices made after inspecting the sweeps.

\paragraph{Known-risk simulations.}
The matrix crosses six finite-support populations (binary or graded null,
constant gain and heterogeneous gain), oracle or reversed probes,
$n\in\{25,50,100,250,500,1000,2000\}$ labeled prompts, and $r\in\{0,1,4\}$
unlabeled prompts per label. Each of the 252 cells has 2,000 datasets and the
same five candidates. A prompt state is uniform over four conditional means
$\mu=(.2,.4,.6,.8)$ with $z=(1,-1,1,-1)$. The original score is $P=\mu+o$, where
$o=.04$ under the null, $.16$ under constant gain, and $(.02,.02,.18,.18)$ under
heterogeneous gain; the alternatives are $\mu-.04$, $\mu+.04z$, $\mu-.04z$ and
$\mu+o+.005$. Binary siblings are Bernoulli($\mu$); graded siblings are
$\mu\pm.04$ with equal probability. With probability .8 all eight siblings share
one draw. The null therefore has three risk-tied repairs and one harmful repair,
and adopting a tie counts as a false improvement. Across 168,000 null decisions
per method, paired betting certifies a tie 50 times and assisted betting 26
times; neither ever selects a harmful candidate, the largest cell rate is $.2\%$,
and the EB tests make no null certification. Asymptotic PPI++ made 3,120 false
adoptions, 41 of them harmful (largest cell rate $3.25\%$).

\begin{table}[t]
\centering\small\setlength{\tabcolsep}{4pt}
\caption{First tested label count at which beneficial adoption has Wilson
lower bound at least $80\%$ and false improvement Wilson upper bound at most
$5\%$ (oracle probe, $r=4$; ``--'': no crossing through 2,000). The middle four
columns share one-sided hypotheses, certified ranges, candidate family, and
error budget; PPI++ is asymptotic.}
\label{tab:followup-selection}
\begin{tabular}{lcccccc}
\toprule
Population & \shortstack{Two-sided\\EB} & \shortstack{One-sided\\EB} & Betting & \shortstack{Assisted\\EB} & \shortstack{Assisted\\betting} & PPI++\\
\midrule
Binary constant gain & 2,000 & 2,000 & 1,000 & 2,000 & 1,000 & 500\\
Graded constant gain & 500 & 500 & 250 & 1,000 & 250 & 25\\
Binary heterogeneous gain & -- & 2,000 & 1,000 & -- & 1,000 & 1,000\\
Graded heterogeneous gain & 1,000 & 1,000 & 250 & 2,000 & 250 & 25\\
\bottomrule
\end{tabular}
\end{table}

Matched to the same hypotheses, betting reaches the criterion with half the
labels in three populations and a quarter in the fourth
(Table~\ref{tab:followup-selection}, Figure~\ref{fig:matched-power}). Paired
betting and one-sided EB do not use the probe, so their crossings are the same
for oracle and reversed probes and for every unlabeled ratio: across all 24
alternative settings, betting crosses at 250--1,000 labels and one-sided EB at
500--2,000. The two-sided intervals overstate the binary heterogeneous gap,
since the matched one-sided EB test crosses at 2,000. Proxy assistance does not
improve betting's crossings; constant-gain oracle contrasts have no variation for
a proxy to remove.


\paragraph{Cached LLM panels.}
The replay uses all feasible allocations from 30 raw-score panels (16 shared
candidates, including five PAIC maps, two of them controlled perturbations) and
180 archived-score panels (eight shared candidates), with learned or
reversed auxiliary probes; candidates and allocations are unchanged and nothing
is refitted. False adoption is classified by a separate test set
(Table~\ref{tab:matched-real}). At 250 comparison prompts, beneficial adoption is
$16.81\%$ for one-sided EB versus $66.73\%$ for betting, and $37.82\%$ versus
$79.28\%$ at 500; at least one PAIC map is certified in $14.40\%$ versus
$50.10\%$ and $32.73\%$ versus $67.55\%$ of allocations. Feasible panels and
unlabeled ratios differ with budget, so these pooled percentages are not a
common-population learning curve. Richer direct predictors account for
$99.29\%$ of betting's adoptions. Counting the same 500 source fitting labels, 14
of 36 panel/ratio combinations in which both tests reach $80\%$ beneficial
adoption show at least twofold fewer total labels for betting. On the archived
learned score, all five procedures retain the original in all 117,000
learned-probe allocations per method: its comparison pools hold 25--100 labels
per block, and its scalar-map gains are below $10^{-3}$.

\begin{table}[t]
\centering\small\setlength{\tabcolsep}{4pt}
\caption{Matched procedures on 47,400 raw-score allocations per method
(learned probes, 16 candidates). ``PAIC certified'' counts allocations in which
at least one PAIC map is certified; false adoption is estimated on a separate
test set.}
\label{tab:matched-real}
\begin{tabular}{lrrrrr}
\toprule
Procedure & \shortstack{Beneficial\\adoption} & \shortstack{PAIC\\certified} & \shortstack{PAIC\\selected} & \shortstack{Richer\\selected} & \shortstack{False\\adoption}\\
\midrule
One-sided EB & 7,155 (15.09\%) & 5,794 (12.22\%) & 47 & 7,108 & 0\\
Betting & 20,356 (42.95\%) & 15,635 (32.99\%) & 125 & 20,212 & 0\\
Assisted EB & 6,610 (13.95\%) & 5,440 (11.48\%) & 35 & 6,575 & 0\\
Assisted betting & 19,012 (40.11\%) & 14,598 (30.80\%) & 183 & 18,815 & 0\\
Two-sided EB & 6,358 (13.41\%) & 5,207 (10.99\%) & 130 & 6,228 & 0\\
\bottomrule
\end{tabular}
\end{table}

\subsection{Controlled residual audits: what each test can detect}
\label{app:controlled-audit}

Opposite subgroup errors can cancel at each score level. We ask which
audit features can detect these errors under dependent answers and unequal
answer counts. Scores are fixed, subgroup information is
observed, and conditional means are known. A rerun of all $37$
configurations reproduces every rejection count
(Appendix~\ref{app:followup-audit}).


\paragraph{Population, controls and label budgets.}

Each independent prompt draws base rate $S\in\{0.35,0.65\}$ and
subgroup $H\in\{-1,1\}$ independently and uniformly. Binary outcomes
have conditional mean $S+\delta H$. The reported score is either $s=S$ or
the fixed invertible distortion
$s=f(S)=\operatorname{sigmoid}(\operatorname{logit}(S)/0.7)$.
The $30$ core configurations cross both scores with
$\delta\in\{0,0.05,0.10,0.20,0.35\}$ and
$J\in\{100,500,2000\}$, using one answer per prompt.
With $s=S$, positive $\delta$ introduces hidden subgroup error while
preserving scalar calibration; with $s=f(S)$, $\delta=0$ introduces
scalar error alone. Positive $\delta$ and distortion combine both.

Seven additional configurations hold $J=500$ and $s=S$. Four use
$K=8$ answers per prompt, $\rho\in\{0,0.8\}$ and $\delta\in\{0,0.10\}$.
Here $\rho$ is the probability of one shared outcome for all answers
to a prompt; otherwise outcomes are independent conditional on $S,H$.
Two configurations use the same mechanism with $\rho=0.8$,
$\delta\in\{0,0.10\}$, and $K=2$ for $H=-1$ versus $K=16$ for
$H=1$. The final, calibrated graded control uses $K=8$, $\rho=0.8$
and $Y=S+\sigma(H)\epsilon$, where $\epsilon$ is a fair sign,
$\sigma(-1)=0.05$ and $\sigma(1)=0.15$. Its values already lie in
$[0.2,0.8]$, so the implementation's clipping leaves the declared mean
unchanged. The shared-outcome mechanism also applies to this control.

All methods receive the same $J$ labeled audit prompts. In each repetition,
the learned auditor also uses $500$ independent, single-answer training
prompts from the same outcome model. It fits residuals
$Y-s$ by ridge regression on the four joint-state indicators, with
summed squared loss, penalty $1$ and no intercept. Its predictions
$h(S,H)$ are clipped to $[-1,1]$ and frozen before auditing. These
additional training labels are counted separately.

\paragraph{A common statistic, four different null hypotheses.}

Each test asks whether mean residuals align with its features, defining
its own null (Table~\ref{tab:study2-nulls}). The one-hot vectors
$e_{(S,H)}$ and $e_S$ indicate joint-state and score membership.
All kernels use
$k(z,z')=\langle\Phi(z),\Phi(z')\rangle$, without a bandwidth search.
The feature, subgroup and learned tests observe $S,H$ directly, without learning
subgroup representations from LLM outputs.

\begin{table}[!ht]
\centering\small
\begin{tabular}{@{}p{0.18\linewidth}p{0.22\linewidth}p{0.52\linewidth}@{}}
\toprule
Test & Feature map $\Phi$ & Null hypothesis under the prompt-weighted law \\
\midrule
Feature & $e_{(S,H)}$ & $\E[Y-s\mid S,H]=0$ in every joint state. \\
Score-only & $e_S$ & $\E[Y-s\mid S]=0$ at each score level. \\
Subgroup & $H$ & $\E[(Y-s)H]=0$. \\
Learned auditor & $h(S,H)$ & $\E[(Y-s)h(S,H)]=0$, conditional on the fitted $h$. \\
\bottomrule
\end{tabular}
\caption{\textbf{An audit's null depends on its features.}
Hidden-only error satisfies the score-only null; scalar-only error
satisfies the subgroup null. Their non-rejection in these cases is not
failure to detect an alternative to their own null.}
\label{tab:study2-nulls}
\end{table}

For every map, the residual contribution first averages within a prompt:
\[
 a_j=\frac1{K_j}\sum_{i=1}^{K_j}(Y_{ji}-s_{ji})\Phi(Z_{ji}),
 \qquad
 U_J=\frac{\|\sum_j a_j\|^2-\sum_j\|a_j\|^2}{J(J-1)}.
\]
This weights prompts equally even when $K$ depends on $H$ and excludes
all within-prompt pairs. The observed $U_J$ is uncentered and may be
negative. For calibration, $x_j=a_j-\bar a$ and independent Gaussian
prompt multipliers give
\[
 T_b^*=\frac{\|\sum_j e_{bj}x_j\|^2-
                     \sum_j e_{bj}^2\|x_j\|^2}{J-1},
 \qquad
 p=\frac{1+\sum_{b=1}^{1999}\mathbf1\{T_b^*\ge JU_J\}}{2000}.
\]
Methods share multiplier draws within each repetition;
each rejects at $p\le0.05$. Theorems~\ref{thm:ustat}
and~\ref{thm:cluster-bootstrap} establish the population target and
asymptotic justification. With fixed $J$ and $1{,}999$ multiplier draws,
finite-sample rejection rates remain an empirical question.

Each configuration has $2{,}000$ repetitions. A fixed master seed, with
configuration and repetition indices, determines separate simulation and
multiplier streams. Separate child streams generate
audit and auditor-training data. Reported $95\%$ Wilson intervals
describe each marginal Monte Carlo rejection rate. They are pointwise,
not simultaneous intervals or paired intervals for method differences.

\paragraph{Results and interpretation.}

Feature choice determines which residual moments are tested. Hidden subgroup
errors cancel in the score-only moment; scalar-only errors cancel in
the subgroup moment. Each satisfies the corresponding test's null.

Across the seven fully calibrated configurations, feature-test rejection
ranges from $0.0485$ to $0.0590$; each pointwise interval contains
$0.05$. This includes the stated sibling, unequal-$K$ and graded controls,
without establishing validity outside them. The graded-null learned
auditor rejects $126/2000=0.0630$ (pointwise Wilson interval
$[0.05317,0.07451]$); with a new seed it rejects $4.70\%$
(Appendix~\ref{app:followup-audit}). Across the $46$ method-specific null
cells, $45$ pointwise intervals contain $0.05$; this count is neither
a simultaneous validity statement nor an equivalence test.

Under hidden-only error at $\delta=0.05$, feature-test rejection
increases from $0.1265$ to $0.4445$ to $0.9790$ as $J$ increases from
$100$ to $500$ to $2{,}000$. For scalar-only error at $J=500$, the
score-only test has higher observed rejection: $0.6770$, versus
$0.5575$ for the feature test.
These descriptive differences support choosing features for the errors
of interest.
Table~\ref{tab:study2-all} retains every configuration.

\paragraph{Checks and scope.}
Deterministic fixtures check the cross-prompt formula and centered
multiplier arithmetic against explicit pair sums. Audit-label budgets are $J$ prompts per repetition; the auditor's
additional $500$ training labels are counted separately. These known-state
simulations do not validate an LLM representation or a real-data repair, and
non-rejection concerns the specified moment only.

\begingroup
\footnotesize
\setlength{\tabcolsep}{3pt}
\renewcommand{\arraystretch}{1.0}
\begin{longtable}{@{}lcccc@{}}
\caption{All 37 controlled audit configurations: rejection rates from 2,000 repetitions per configuration at nominal level 0.05. Pointwise 95\% Wilson half-widths are about 0.01 near 0.05 and at most 0.022. A superscript $\circ$ marks a method-specific population null by construction; for the learned auditor this applies under full calibration.}
\label{tab:study2-all}\\
\toprule
Configuration & Feature & Score only & Subgroup & Auditor \\
\midrule
\endfirsthead
\multicolumn{5}{l}{\textit{Table \thetable{} continued. $\circ$: method-specific population null.}}\\
\toprule
Configuration & Feature & Score only & Subgroup & Auditor \\
\midrule
\endhead
\midrule
\multicolumn{5}{r}{\textit{Continued on next page}}\\
\endfoot
\bottomrule
\endlastfoot
\multicolumn{5}{l}{\textbf{Identity score $s=S$, binary outcomes, $K=1$; configuration: $(J,\delta)$}}\\
$(100,\ 0)$ & 0.0590$^\circ$ & 0.0540$^\circ$ & 0.0570$^\circ$ & 0.0510$^\circ$ \\
$(100,\ 0.05)$ & 0.1265 & 0.0490$^\circ$ & 0.1940 & 0.1475 \\
$(100,\ 0.1)$ & 0.3625 & 0.0510$^\circ$ & 0.5950 & 0.5500 \\
$(100,\ 0.2)$ & 0.9475 & 0.0505$^\circ$ & 0.9930 & 0.9915 \\
$(100,\ 0.35)$ & 1.0000 & 0.0515$^\circ$ & 1.0000 & 1.0000 \\
$(500,\ 0)$ & 0.0510$^\circ$ & 0.0460$^\circ$ & 0.0500$^\circ$ & 0.0490$^\circ$ \\
$(500,\ 0.05)$ & 0.4445 & 0.0540$^\circ$ & 0.6540 & 0.4680 \\
$(500,\ 0.1)$ & 0.9735 & 0.0455$^\circ$ & 0.9975 & 0.9925 \\
$(500,\ 0.2)$ & 1.0000 & 0.0430$^\circ$ & 1.0000 & 1.0000 \\
$(500,\ 0.35)$ & 1.0000 & 0.0520$^\circ$ & 1.0000 & 1.0000 \\
$(2,000,\ 0)$ & 0.0545$^\circ$ & 0.0445$^\circ$ & 0.0480$^\circ$ & 0.0435$^\circ$ \\
$(2,000,\ 0.05)$ & 0.9790 & 0.0580$^\circ$ & 0.9980 & 0.8880 \\
$(2,000,\ 0.1)$ & 1.0000 & 0.0495$^\circ$ & 1.0000 & 1.0000 \\
$(2,000,\ 0.2)$ & 1.0000 & 0.0525$^\circ$ & 1.0000 & 1.0000 \\
$(2,000,\ 0.35)$ & 1.0000 & 0.0495$^\circ$ & 1.0000 & 1.0000 \\
\midrule
\multicolumn{5}{l}{\textbf{Distorted score $s=f(S)$, binary outcomes, $K=1$; configuration: $(J,\delta)$}}\\
$(100,\ 0)$ & 0.1230 & 0.1720 & 0.0570$^\circ$ & 0.1645 \\
$(100,\ 0.05)$ & 0.1900 & 0.1665 & 0.1945 & 0.2625 \\
$(100,\ 0.1)$ & 0.4550 & 0.1795 & 0.5880 & 0.6045 \\
$(100,\ 0.2)$ & 0.9595 & 0.1600 & 0.9930 & 0.9880 \\
$(100,\ 0.35)$ & 1.0000 & 0.1535 & 1.0000 & 1.0000 \\
$(500,\ 0)$ & 0.5575 & 0.6770 & 0.0535$^\circ$ & 0.6140 \\
$(500,\ 0.05)$ & 0.8425 & 0.6990 & 0.6645 & 0.8820 \\
$(500,\ 0.1)$ & 0.9960 & 0.6705 & 0.9980 & 0.9990 \\
$(500,\ 0.2)$ & 1.0000 & 0.6665 & 1.0000 & 1.0000 \\
$(500,\ 0.35)$ & 1.0000 & 0.6685 & 1.0000 & 1.0000 \\
$(2,000,\ 0)$ & 0.9980 & 0.9995 & 0.0525$^\circ$ & 0.9620 \\
$(2,000,\ 0.05)$ & 1.0000 & 0.9995 & 0.9950 & 0.9990 \\
$(2,000,\ 0.1)$ & 1.0000 & 0.9990 & 1.0000 & 1.0000 \\
$(2,000,\ 0.2)$ & 1.0000 & 0.9995 & 1.0000 & 1.0000 \\
$(2,000,\ 0.35)$ & 1.0000 & 0.9990 & 1.0000 & 1.0000 \\
\midrule
\multicolumn{5}{l}{\textbf{Sibling and graded controls: $J=500$, $s=S$}}\\
$K=8,\ \rho=0$, $\delta=0$ & 0.0485$^\circ$ & 0.0535$^\circ$ & 0.0500$^\circ$ & 0.0525$^\circ$ \\
$K=8,\ \rho=0$, $\delta=0.1$ & 1.0000 & 0.0485$^\circ$ & 1.0000 & 1.0000 \\
$K=8,\ \rho=0.8$, $\delta=0$ & 0.0500$^\circ$ & 0.0485$^\circ$ & 0.0570$^\circ$ & 0.0490$^\circ$ \\
$K=8,\ \rho=0.8$, $\delta=0.1$ & 0.9940 & 0.0520$^\circ$ & 1.0000 & 0.9965 \\
$K=2/16,\ \rho=0.8$, $\delta=0$ & 0.0505$^\circ$ & 0.0490$^\circ$ & 0.0465$^\circ$ & 0.0525$^\circ$ \\
$K=2/16,\ \rho=0.8$, $\delta=0.1$ & 0.9915 & 0.0475$^\circ$ & 0.9995 & 0.9980 \\
$K=8,\ \rho=0.8$, $\delta=0,\ \mathrm{graded}$ & 0.0545$^\circ$ & 0.0500$^\circ$ & 0.0580$^\circ$ & 0.0630$^\circ$ \\
\end{longtable}
\endgroup

\subsection{Audit reruns and real-data audits}
\label{app:followup-audit}

\paragraph{Reruns of the controlled audits.}
Reruns of all 37 configurations of Appendix~\ref{app:controlled-audit} and nine
diagnostics reproduce all 148 original rejection counts. The graded-null learned
auditor rejects at $6.30\%$; another seed gives $4.70\%$, and the excess does not
survive the declared Holm family. A binary 100-prompt Gaussian cell rejects
$7.25\%$ and does survive that correction; replaying the identical observations
against an exact conditional binomial reference gives $6.60\%$, so that
realization is rejection-rich even for a valid exact test. A 48-cell,
96,000-trial study varies sample size, outcome type, witness rank and
multiplier count; at 100 prompts, binary Gaussian rates with 1,999 multipliers
are $4.80/5.25\%$ across two seeds versus $4.25/4.30\%$ for the exact reference,
and graded rates are $5.95/6.20\%$ versus $5.25/5.25\%$ for symmetric sign flips.
No excess survives the 280-comparison Holm family, which is not a proof of
uniform validity; exact or sign-flip references are the safer choice for small
audits when the null permits.

\paragraph{A paired diagnostic of the graded-null cell.}
\label{app:paired-auditor-diagnostic}
We ran 18,000 further repetitions of the graded-null cell with 500 audit
prompts, eight answers per prompt, sibling dependence $\rho=.8$ and 1,999
reference draws. Each repetition trains and freezes its witness before auditing.
Three rejection indicators share the same outcomes: $G_L$ uses the Gaussian
procedure with residuals computed as in the controlled study (explicit outcome
construction, averaging and score subtraction); $G_I$ uses the Gaussian
procedure with exact, sign-symmetric residuals; and $S_I$ uses exact residuals
with a sign-flip reference. The two Gaussian arms share multiplier draws. The prespecified
contrasts are $\theta_G=\mathbb E[G_I-S_I]$ and $\theta_F=\mathbb E[G_L-G_I]$.
For each, we bound its two discordance probabilities separately and subtract
the bounds; eight one-sided exact Clopper--Pearson tails at $.00625$ give
simultaneous coverage at least $95\%$ for both intervals.

\begin{center}
\small
\begin{tabular}{@{}lccc@{}}
\toprule
Contrast & Left/right-only rejections & Estimate & Simultaneous interval \\
\midrule
$\theta_G$ & $73/54$ & $0.1056$ & $[-0.1210,\,0.3319]$ \\
$\theta_F$ & $0/0$ & $0$ & $[-0.0282,\,0.0282]$ \\
\bottomrule
\end{tabular}
\end{center}
Estimates and intervals are in percentage points. The bounds exclude a
$0.5$-point Gaussian excess, and the floating-point residual computation changes
no decision in this cell. Marginal rejections are $959/18{,}000$ for each
Gaussian arm and $940/18{,}000$ for $S_I$, descriptive rather than a test of
absolute size. The diagnostic does not identify the cause of the earlier
126/2,000 rate, and it concerns this cell only.

\paragraph{Audits of repaired LLM scores.}
The raw-score audit fixes discovery-trained kernels, ridge witnesses, score bins
and metadata groups before evaluating independent audit prompts. For ridge
PAIC, approximate feature tests reject in 30 of 30 panels, score tests in 7 and
metadata tests in 19; conservative bounded tests reject none. A partition audit
adapts the local grouping-loss approach of \citet{melo2026epistemic}: discovery
fits honest trees, optionally after spending its first $20\%$ on Platt
calibration; debiased leaf energy subtracts within-leaf sampling variance; and
only finite-rejecting leaves receive an audit-estimated residual offset, fixed
before test evaluation. Across 300 raw-score plans, seven corrections in five
panels trigger; all seven lower test point risk, by .02474--.05914, with
favorable pointwise normal intervals, though none passes the conservative
finite-sample certificate. On the archived learned score, 1,440 plans across 180
panels with 75--100 audit prompts each produce no finite rejection;
non-rejection does not establish sufficiency.

\paragraph{A discovered family}
All seven triggering leaves depend on answer length. One Qwen task-transfer leaf
contains 510 test answers from 179 prompts, with mean raw confidence .91565,
correctness .01176 and mean length 1,023.93 tokens. Inspecting raw states, all 20
sampled long answers hit the 1,024-token cap without an end-of-sequence token or
closed boxed answer, whereas all 20 shorter siblings terminate with a closed box.

\paragraph{A mechanical screen.}
The learned rule flags answers longer than 1,018 tokens; a mechanical rule flags
exactly 1,024 tokens. The two groups nearly coincide (Table~\ref{tab:screen}). On
the 572-prompt, one-answer audit sample, the mechanical rule selects 32 prompts
and the learned rule 33, with Platt-adjusted two-sided Hoeffding $p=.01398$ and
$.01049$. Mechanical membership needs no labels; testing its residual bias uses
114 discovery labels for Platt calibration and 572 audit labels, and learning the
tree uses 458 further discovery labels. The audit surfaced an unanticipated
failure family; once found, a length rule describes it nearly exactly.

\begin{table}[t]
\centering\small\setlength{\tabcolsep}{4pt}
\caption{Learned audit rule (length $>1{,}018$ tokens) versus a mechanical
screen (exactly 1,024 tokens) over all eight siblings per prompt.}
\label{tab:screen}
\begin{tabular}{lrrrrrr}
\toprule
Partition & Prompts & \shortstack{Learned\\answers} & \shortstack{Mechanical\\answers} & Jaccard & \shortstack{Learned-group\\error} & \shortstack{Mechanical-\\group error}\\
\midrule
Discovery & 572 & 200 & 196 & 98.00\% & 97.00\% & 98.98\%\\
Audit & 572 & 243 & 238 & 97.94\% & 99.59\% & 100.00\%\\
Final evaluation & 1,435 & 510 & 498 & 97.65\% & 98.82\% & 99.20\%\\
\bottomrule
\end{tabular}
\end{table}

\subsection{Decisions and incremental scoring cost}
\label{app:followup-policies}
\label{app:followup-policy}

At prespecified review costs $c\in\{.05,.10,.20,.30,.40,.50\}$, accepting incurs
$1-Y$ and reviewing incurs $c$, assuming perfect review, and a value policy
accepts at score $1-c$. A strictly monotone remap preserves ranking and isotonic
pooling can add ties, so these experiments assess threshold choices and their
transfer rather than risk--coverage curves. Baselines fit original-score
thresholds directly on the same labels, fixed before test outcomes are
evaluated. On the archived learned score, 21,600 policies over 180 settings give
useful acceptance without a consistent advantage over directly fitted
thresholds (Table~\ref{tab:followup-policy}). For GSM8K Qwen at $c=.20$, PAIC
accepts $76.87\%$ with $5.10\%$ accepted error and loss .085398, versus .086033
for the directly fitted threshold; at $c=.10$ its loss .055525 exceeds that
baseline's .055109. At the full fitting budget, when both policies also use
comparison labels, the comparison-fitted PAIC threshold has higher mean loss in
23 of 24 cell--cost settings than the original-score threshold fitted on $F$
plus comparison labels. At $c=.05$, Llama 10K PAIC accepts $.56\%$ of answers on
average, with loss .050141, slightly worse than reviewing everything. Matching
downstream label budgets matters: recalibration need not improve these
decisions.

\begin{table}[t]
\centering\small\setlength{\tabcolsep}{4pt}
\caption{Archived learned score, full fitting budget: mean PAIC value-policy
loss minus the original-score threshold fitted directly on the same $F$ labels.
Negative favors PAIC. Each entry averages 15 overlapping blocks and is
descriptive, not an independent-refit interval.}
\label{tab:followup-policy}
\begin{tabular}{llrrrrrr}
\toprule
Dataset & Model & $c=.05$ & $.10$ & $.20$ & $.30$ & $.40$ & $.50$\\
\midrule
GSM8K & Llama & $-.000539$ & $-.000410$ & $+.000367$ & $-.000301$ & $+.000067$ & $-.000320$\\
 & Qwen & $+.000101$ & $+.000415$ & $-.000634$ & $-.000034$ & $+.000359$ & $-.000267$\\
10K & Llama & $+.000100$ & $-.000110$ & $-.000308$ & $-.000318$ & $-.000056$ & $-.000207$\\
 & Qwen & $+.000027$ & $+.000120$ & $-.000282$ & $+.000055$ & $-.000179$ & $+.000308$\\
\bottomrule
\end{tabular}
\end{table}

On cached inputs, with one thread, five warmups, twenty repetitions and
randomized order, median batch-one latency is 14.5/8.7 microseconds for
Llama/Qwen PAIC versus 263.1/142.6 for direct ridge and 3,616.0/1,863.9 for direct
trees; at batch 256, PAIC takes .137/.113 microseconds per answer. These are
incremental costs on cached inputs, excluding generation and state extraction.

\clearpage
\section{Fixed-choice confidence on BoolQ and ARC}
\label{app:benchmark-breadth}

\paragraph{Question and design.}
The study asks whether hidden states help predict correctness beyond the full
choice scores, and whether that information survives when deployment is
restricted to a monotone function of one scalar score. Each model selects one
option once per question, and every candidate predicts whether that fixed
selection is correct. The design crosses four checkpoints, Llama 3.2 3B
Instruct \citep{meta2024llama32}, Qwen 2.5 3B and 7B Instruct
\citep{qwen2025technical} and Mistral 7B Instruct v0.3
\citep{mistral2024instruct03}, with two benchmarks. The same questions occur
across models, so the eight cells are not independent replications, and
pretraining exposure to these public benchmarks is unknown.

\paragraph{Tasks and data separation.}
BoolQ \citep{clark2019boolq} supplies passage-based yes/no questions; its public
validation split is the evaluation population $T$, and its training data supply
development $D$, fitting $F$ and two unlabeled projection sets $C$ and $U$ from
the same population. ARC \citep{clark2018arc} supplies a difficulty shift: Easy
training supplies $D/F$, Easy validation $C$ and Easy test a source evaluation
set $S$; Challenge training and validation supply unlabeled target inputs $U$,
and Challenge test is $T$ (Table~\ref{tab:breadth-populations}). Groups of
questions linked by normalized exact matches of questions, passages or IDs are
kept within one role, and all predictions were saved before evaluation labels
were joined.

\begin{table}[H]
\centering\small
\caption{Question counts after outcome-blind overlap filtering, shared by all
models. BoolQ has no separate source-evaluation role.}
\label{tab:breadth-populations}
\begin{tabular}{lrrrrrr}
\toprule
Benchmark & $D$ & $F$ & $C$ & $U$ & $S$ & $T$ \\
\midrule
BoolQ & 400 & 4,393 & 1,597 & 1,598 & --- & 3,270 \\
ARC & 112 & 2,126 & 570 & 1,410 & 2,375 & 1,172 \\
\bottomrule
\end{tabular}
\end{table}

\paragraph{Scores and candidates.}
Each allowed option letter is one prefix-stable token with next-token logit
$\ell_j$; the choice probabilities are $q_j=\exp(\ell_j)/\sum_{k}\exp(\ell_k)$,
the model selects the first maximizer $\widehat j$, and the scalar confidence is
$P=q_{\widehat j}$, conditional on the allowed choices. Every supervised
candidate uses the same $F$ questions, and five label-blind folds that keep
linked questions together select hyperparameters by out-of-fold Brier loss. The
seven direct candidates are raw $P$, label-isotonic, monotone Platt, a flexible
$P$-only predictor, ridge and tree predictors of the 24 choice-score features,
and a rich ridge predictor that adds every coordinate of the middle and final
hidden states. Six further candidates project three teachers (the selected
scalar, score and rich predictors) onto nondecreasing functions of $P$ by
isotonic regression on $C$ or $U$, without their labels. Supervised labels are
therefore matched, while PAIC additionally uses unlabeled inputs.

\paragraph{Primary comparisons.}
At the full fitting budget on $T$, each of the eight cells contributes four
Brier-risk differences (first minus second; negative favors the first): rich
direct minus the selected score predictor; rich-PAIC-$U$ minus the selected
scalar predictor; rich-PAIC-$U$ minus score-PAIC-$U$; and rich-PAIC-$U$ minus
rich direct. All 32 effects form one family of approximate simultaneous $95\%$
intervals from 19,999 dependency-component bootstrap draws, conditional on the
fits and fitting-based selections (Figure~\ref{fig:breadth-primary}). Failure to
resolve a difference does not establish equivalence.

\paragraph{Results.}
Hidden states add resolved accuracy over the full choice scores in two of eight
cells: rich ridge minus the selected score control is $-0.00803$ for BoolQ
Qwen3B (joint interval $[-0.01243,-0.00362]$) and $-0.01131$ for BoolQ
Mistral7B ($[-0.01702,-0.00559]$); the other six differences are unresolved.
Rich-PAIC-$U$ versus the selected scalar control is unresolved in all eight cells (point differences
negative in five), and so is rich-PAIC-$U$ versus score-PAIC-$U$. Serving the
rich predictor directly is more accurate than projecting it for each BoolQ model
(differences $0.0121$--$0.0169$) and for ARC Qwen3B ($0.0101$); the other three
ARC differences are unresolved. BoolQ's projection sets share its fitting
population, and ARC's $C$ and $U$ differ in both population and size, so this
design does not isolate the effect of adaptation.

\paragraph{Decisions.}
For costs $c\in\{.05,.10,.20,.30,.50\}$, a fixed rule accepts when $s\geq1-c$ and
otherwise requests perfect review. On BoolQ, every candidate beats both
reference policies at $c=.20$, but the pattern changes at other costs: at
$c=.50$, rich-PAIC-$U$ equals the accept-all loss for Llama3B and Qwen7B and
exceeds it for Qwen3B and Mistral7B. On ARC Mistral7B at $c=.05$, rich-PAIC-$U$
accepts nothing. The rich direct predictor has its own limits: on ARC Llama3B
its decision loss exceeds that of the selected score-ridge control at every
cost, although their Brier contrast is unresolved.

\subsection{A larger checkpoint}
\label{app:e13-larger-model}

We repeat the protocol for Qwen2.5-32B-Instruct, fitting the same 13 candidates
at 25\%, 50\% and 100\% of $F$ and forming a separate family of eight
simultaneous intervals for the four questions on each benchmark. The selected
full-score control is score trees on BoolQ and score ridge on ARC; isotonic is
the selected scalar control on both. Six of eight intervals cross zero
(Table~\ref{tab:e13-primary}): the richer direct predictor resolves no gain
over the full-score control on either benchmark, and rich-PAIC-$U$ resolves no
gain over the scalar control or score-PAIC-$U$. The only resolved contrasts
compare rich-PAIC-$U$ with its direct rich teacher, with opposite signs:
projection increases risk on BoolQ and lowers it on ARC. The value of serving a
richer teacher through one monotone scalar thus remains population-dependent at
this scale; one checkpoint does not establish a trend with model size.

\begin{table}[t]
\centering\small
\setlength{\tabcolsep}{4pt}
\begin{tabular*}{\linewidth}{@{\extracolsep{\fill}}llrrrl@{}}
\toprule
Benchmark & Contrast & Effect & Lower & Upper & Direction \\
\midrule
BoolQ & Rich $-$ score & $-0.00297949$ & $-0.00637632$ & $0.00041735$ & unresolved \\
BoolQ & PAIC-R-$U$ $-$ scalar & $0.00010225$ & $-0.00079915$ & $0.00100366$ & unresolved \\
BoolQ & PAIC-R-$U$ $-$ PAIC-S-$U$ & $-0.00032620$ & $-0.00092227$ & $0.00026986$ & unresolved \\
BoolQ & PAIC-R-$U$ $-$ rich & $0.01234145$ & $0.00720879$ & $0.01747412$ & first higher risk \\
\midrule
ARC & Rich $-$ score & $0.00257319$ & $-0.00239375$ & $0.00754012$ & unresolved \\
ARC & PAIC-R-$U$ $-$ scalar & $0.00109614$ & $-0.00309369$ & $0.00528598$ & unresolved \\
ARC & PAIC-R-$U$ $-$ PAIC-S-$U$ & $0.00042009$ & $-0.00386352$ & $0.00470370$ & unresolved \\
ARC & PAIC-R-$U$ $-$ rich & $-0.00234922$ & $-0.00461969$ & $-0.00007876$ & first lower risk \\
\bottomrule
\end{tabular*}
\caption{\textbf{Qwen2.5-32B primary contrasts.} Approximate joint 95\%
intervals from 19,999 question-weighted dependency-component draws, conditional
on the fitted candidates and the $F$-based control selection. R/S abbreviate
rich/score teachers.}
\label{tab:e13-primary}
\end{table}


\section{Extended Related Work}
\label{app:related_detail}

\paragraph{Classical calibration and probabilistic prediction.}
Calibration asks whether predictive probabilities agree with empirical outcome
frequencies. Classical forecast decompositions separate reliability,
resolution, and irreducible uncertainty
\citep{murphy1973new,degroot1983,brocker2009}, while proper scoring rules
provide decision-theoretic objectives for probabilistic prediction
\citep{gneiting2007strictly}. In machine learning, post-hoc methods such as
Platt scaling, isotonic regression, temperature scaling, beta calibration, and
Dirichlet calibration modify predictive probabilities without retraining the
underlying model
\citep{platt1999probabilistic,niculescu2005predicting,zadrozny2002,
guo2017calibration,kull2017beta,kull2019dirichlet}. Reliability diagrams and
Expected Calibration Error (ECE) are widely used summaries, although their
interpretation depends on binning and finite-sample estimation
\citep{naeini2015obtaining,vaicenavicius2019evaluating,
kumar2019verified,roelofs2022mitigating}. More recent studies also emphasize
that calibration behavior depends on architecture, scale, pretraining, and
distribution shift rather than following a universal relationship with
accuracy~\citep{minderer2021revisiting,ovadia2019trust}. Our analysis differs
from work that asks only whether the reported scalar is calibrated: we ask
which part of correctness risk remains recoverable after the prediction has
already been compressed into that scalar.

\paragraph{Grouping loss, sufficiency, and feature-dependent calibration.}
A scalar confidence score can be calibrated while hiding systematic
differences among examples assigned the same value. Grouping loss formalizes
the predictive information lost through this compression
\citep{perezlebel2023grouping}; under squared loss, our information term
$I=\E[\Var(m^\star(Z)\mid P)]$ is precisely this quantity under the declared
representation $Z$. Non-injective recalibration may further merge distinct
score values and thereby increase grouping loss
\citep{perezlebel2023grouping}. Related approaches enrich the conditioning
information rather than repairing only the original scalar. Multiaccuracy
corrects systematic residuals over identifiable subpopulations
\citep{kim2018multiaccuracy}, while multicalibration requires calibration over
many computationally identifiable groups
\citep{hebertjohnson2018multicalibration}. Semantic and representation-aware
variants similarly use features to expose errors hidden by an aggregate score
\citep{yang2023grouping}. Our decomposition separates this pre-existing
information loss from an additional monotone-pooling term $G$: even the
population-optimal order-preserving repair can therefore be scalar-calibrated
while remaining insufficient relative to $Z$.

\paragraph{Group-dependent calibration for LLMs.}
Recent LLM work has begun to examine this distinction directly.
\citet{chen2024reconfidencing} use learned representation-based groups and fit
label-supervised isotonic calibrators within those groups. Their deployed
prediction therefore depends on both the original confidence and the learned
group assignment. \citet{melo2026epistemic} use grouping-loss estimates to
identify local epistemic uncertainty and support auditing and deferral. These
approaches exploit richer information at serving time. PAIC addresses a
different constraint: richer features or a correctness model may be available
during fitting, while deployment must retain a \emph{single map of the
original score}. Our theory therefore asks how much of a richer predictor can
be transferred into that restricted interface, and which errors of the richer
predictor affect the resulting scalar map.

\paragraph{Calibration under distribution shift.}
Calibration commonly deteriorates when the deployment distribution differs
from the calibration distribution
\citep{ovadia2019trust,minderer2021revisiting}. Under covariate shift,
importance weighting reweights labeled source observations to approximate
target expectations~\citep{shimodaira2000improving}, and analogous ideas have
been developed specifically for calibration
\citep{park2020calibrated,wang2020transferable}. Under label shift, class
priors can instead be estimated and corrected
\citep{lipton2018detecting,alexandari2020maximum}. Other work uses unlabeled
target data to estimate aggregate target performance, including ATC and
difference-of-confidence approaches
\citep{garg2022leveraging,guillory2021predicting}. These methods primarily
estimate target performance or reconstruct the target calibration objective
through assumptions on distribution shift. PAIC takes an outcome-model route:
a source-trained correctness probe predicts outcomes on unlabeled target
examples, and these predictions define the target-population scalar repair.
Accordingly, its relevant assumption is not an accurate target density ratio
but small \emph{score-conditional probe bias} on the target population.

\paragraph{Distillation and privileged information.}
PAIC is also related to knowledge distillation, where a richer teacher provides
soft supervision for a restricted student
\citep{hinton2015distilling}, and to learning using privileged information,
where additional features are available during training but unavailable at
test time~\citep{lopezpaz2016unifying}. Standard distillation typically asks
whether the student can approximate the teacher's predictions or improve task
accuracy. In our setting, the student hypothesis class is deliberately narrow:
it is a nondecreasing function of a pre-existing scalar score. Consequently,
global teacher accuracy is not the quantity governing transfer. Only the
teacher's systematic error after conditioning on the student input $P$ shifts
the population projection. The PAIC guarantee makes this distinction explicit
and shows why two probes with similar global prediction error can induce very
different scalar repairs.

\paragraph{Selective prediction, abstention, and decision utility.}
Confidence is often useful because it determines whether a model prediction is
accepted, rejected, reviewed, or deferred. Selective classification studies
this accuracy--coverage trade-off and constructs predictors that abstain on
low-confidence inputs~\citep{geifman2017selective}; related reject-option and
risk--coverage formulations likewise emphasize that uncertainty estimates are
valuable through the decisions they support. These objectives are distinct
from calibration itself: a monotone transformation can change probabilities
while preserving ranking, whereas a fixed decision threshold can change
behavior substantially after recalibration. Our decision experiments therefore
evaluate prediction risk and downstream review utility separately rather than
assuming that lower Brier risk implies a better selective policy. This
distinction is particularly important when the application assigns explicit
costs to acceptance and review.

\paragraph{Evidence for adopting a fitted repair.}
A lower empirical loss on the data used to construct a repair is not evidence
that the repair improves population risk. Prediction-powered inference (PPI)
combines abundant model predictions with a smaller labeled sample by correcting
the model-based estimate with observed residuals
\citep{angelopoulos2023ppi}; PPI++ adapts the prediction contribution to
improve statistical efficiency
\citep{angelopoulos2024ppiplusplus}. We specialize this model-assisted
estimation principle to \emph{paired squared-loss differences} between frozen
confidence candidates. The resulting identity also clarifies which component
of probe error can reverse a candidate comparison. For finite candidate
families, empirical-Bernstein inequalities provide observable simultaneous
bounds~\citep{maurer2009empirical}. When the operational question is only
whether a candidate improves over the deployed score, rather than estimation
of every pairwise contrast, we additionally use bounded-mean betting
\citep{waudbysmith2024betting}. Our contribution here is not a new generic
inference paradigm, but an explicit separation between \emph{constructing} a
confidence repair and obtaining independent statistical evidence that it
should be deployed.

\paragraph{Kernel calibration and residual-moment tests.}
Kernel methods provide a complementary way to detect calibration errors that
are invisible to low-dimensional summaries. MMCE
\citep{kumar2018trainable} and squared kernel calibration error (SKCE)
\citep{widmann2019calibration} embed calibration residuals in a reproducing
kernel Hilbert space, while distribution-matching formulations generalize
these ideas to richer calibration conditions
\citep{marx2023distribution}. In our setting the residual embedding
\[
    \E[(Y-s(Z))\Phi(Z)]
\]
tests whether mean correctness errors remain associated with the declared
features. With a sufficiently rich RKHS, a zero population embedding
identifies the conditional mean predictor; with a finite feature map it tests
only the corresponding residual moments. This feature dependence is
intentional: an audit can only detect errors represented by its chosen
features and kernel.

\paragraph{Dependence, clustered answers, and kernel inference.}
Standard calibration estimators commonly treat evaluated examples as
independent. LLM evaluation often violates this assumption because multiple
answers are sampled from the same prompt and can share reasoning paths,
difficulty, or failure modes. Pairing sibling residuals in a kernel
U-statistic can therefore introduce a nonvanishing finite-sample term even
when the population conditional-mean null holds. We instead treat prompts as
the independent sampling units and average residual features within prompt
before forming cross-prompt products. The corresponding multiplier procedure
is related to wild-bootstrap methods for degenerate kernel statistics
\citep{chwialkowski2014wild}, but is formulated for the prompt-clustered
statistic used here. This distinction is important for LLM settings with
best-of-$K$, self-consistency, repeated decoding, or other multiple-answer
evaluation protocols.

\paragraph{Confidence estimation in language models.}
LLM confidence can be derived from several fundamentally different signals.
Token and sequence likelihoods use the model's generative probability
\citep{jiang2021know}; self-evaluation methods ask the model whether its own
answer is correct, including P(True)
\citep{kadavath2022mostly}; verbalized-confidence methods elicit numerical or
linguistic uncertainty directly
\citep{lin2022teaching,tian2023just,xiong2024uncertainty}; and sampling-based
methods estimate uncertainty through agreement or disagreement among multiple
generations
\citep{wang2023selfconsistency,kuhn2023semantic,farquhar2024entropy}.
These quantities need not coincide: likelihood measures probability of the
generated text, whereas answer-level confidence targets the probability that
the realized answer is correct. Empirical studies consequently find that the
quality of different confidence signals varies with model, task, elicitation
method, and access regime. Our framework is agnostic to the origin of the base
score: it treats any bounded scalar as the object to be retained, repaired, or
compared against richer alternatives.

\paragraph{Semantic uncertainty and answer-level correctness.}
Sampling-based semantic uncertainty methods address the fact that different
surface forms can express the same underlying answer. Semantic entropy
aggregates probability across meaning-equivalent generations
\citep{kuhn2023semantic,farquhar2024entropy}, while self-consistency exploits
agreement across reasoning trajectories
\citep{wang2023selfconsistency}. \citet{nakkiran2026concepts} study conditions
under which calibration of semantic answer classes can emerge from local
sequence-level loss optimality. These works concern uncertainty over semantic
answers or distributions of generated responses. Our object is different: the
answer $A$ has already been generated and fixed, and we ask whether a score
attached to that realized answer predicts its correctness. Semantic or
sampling-based signals can enter our framework as components of $P$, as probe
features in $Z$, or as richer candidate predictors, but the repair,
comparison, and audit questions remain unchanged.

\paragraph{Internal representations and correctness probes.}
A growing line of work predicts hallucination or correctness from hidden
representations rather than from output probabilities alone
\citep{orgad2025intrinsic}. Such probes can reveal correctness information that
is absent from the reported scalar, making them natural witnesses of score
insufficiency. They also create an important comparison problem: if hidden
states are available at serving time, the direct probe should be compared
against any scalar projection of that probe rather than being used only as a
teacher. Conversely, when hidden states, additional sampling, execution
checks, or verifier calls are available offline but unavailable or too costly
at deployment, distilling their information into the existing scalar
interface becomes meaningful. Our experiments therefore distinguish the
\emph{information available for fitting} from the \emph{information permitted
at serving}.

\paragraph{Relationship to our framework.}
Taken together, prior work provides powerful tools for calibration,
group-dependent correction, adaptation, model-assisted inference, and residual
testing. Our framework connects these tools through a different organizing
question: \emph{what evidence is required before changing a deployed
confidence score?} The risk decomposition first separates error recoverable
from the scalar from information already lost by scalar compression; PAIC
studies target-population repair when richer supervision is available only
offline; labeled comparisons ask whether a fitted candidate actually improves
correctness risk; and a separate prompt-level audit tests structured residual
errors that average-risk comparisons can miss. Thus, ``repair,'' ``adopt,''
and ``audit'' are treated as distinct statistical problems rather than as
different names for calibration.
\section{Broader Impact and Implications}
\label{app:impact}
Reliable confidence estimates matter when LLM outputs trigger review,
abstention, regeneration or downstream actions. This work shows that improving
average calibration is not sufficient: a scalar score may discard
correctness-relevant distinctions, a fitted repair may not warrant adoption, and
residual failures may remain after average risk improves. Separating these
questions encourages deployment practices that state the information available
to a confidence predictor, compare repairs on held-out correctness evidence, and
audit structured residual errors at the right sampling unit. None of
calibration, a successful repair or audit non-rejection is a certificate of
safety: downstream utility depends on the application, decision costs, label
quality and the failure modes an audit represents.

\paragraph{Where the distinctions change a decision.}
For review or abstention, the question is whether the score separates the
answers on which errors are costly. If relevant distinctions were discarded,
moving a threshold or fitting another scalar map cannot recover them, and richer
inputs become a candidate to evaluate; a repair that lowers average squared loss
still needs comparison at the intended review cost. For a new population without
labels, PAIC fits a score-only map from an available correctness probe, which
helps when the probe's inputs are available during adaptation but costly or
unavailable at serving; its guarantee names the required transfer
property---small probe bias averaged at each score---and held-out target labels
are still needed for adoption. For regeneration or other corrective actions, an
audit can locate feature-aligned mean errors, but whether a new answer fixes
them needs its own comparison of success and cost.

\end{document}